%% file: example_paper.tex
\documentclass{article}

\usepackage{microtype}
\usepackage{graphicx}
\usepackage{booktabs} 
\usepackage{tikz}
\usepackage{multirow}
\usepackage{subcaption}
\usepackage[table,xcdraw]{xcolor}
\usepackage{svg}

\usepackage{hyperref}

\usepackage[preprint]{icml2026}

\usepackage{amsmath}
\usepackage{amssymb}
\usepackage{mathtools}
\usepackage{amsthm}

\usepackage[capitalize,noabbrev]{cleveref}

\theoremstyle{plain}
\newtheorem{theorem}{Theorem}[section]
\newtheorem{proposition}[theorem]{Proposition}
\newtheorem{lemma}[theorem]{Lemma}
\newtheorem{corollary}[theorem]{Corollary}
\theoremstyle{definition}

\theoremstyle{remark}
\newtheorem{remark}[theorem]{Remark}

\usepackage[textsize=tiny]{todonotes}

\input{sections/htcl_macros}

\begin{document}

\twocolumn[
\icmltitle{Mitigating Task-Order Sensitivity and Forgetting via Hierarchical Second-Order Consolidation}



\icmlsetsymbol{equal}{*}

\begin{icmlauthorlist}
\icmlauthor{Protik Nag}{uofsc}
\icmlauthor{Krishnan Raghavan}{anl}
\icmlauthor{Vignesh Narayanan}{uofsc}
\end{icmlauthorlist}

\icmlaffiliation{uofsc}{Department of Computer Science and Engineering, University of South Carolina, SC, USA}
\icmlaffiliation{anl}{Argonne National Laboratory, Illinois, USA}

\icmlcorrespondingauthor{Protik Nag}{pnag@email.sc.edu}

\icmlkeywords{Machine Learning, ICML}

\vskip 0.3in
]



\printAffiliationsAndNotice{}  

\begin{abstract}
We introduce \textbf{Hierarchical Taylor Series-based Continual Learning (HTCL)}, a framework that couples fast local adaptation with conservative, second-order global consolidation to address the high variance introduced by random task ordering.  To address task-order effects, HTCL identifies the best intra-group task sequence and integrates the resulting local updates through a Hessian-regularized Taylor expansion, yielding a consolidation step with theoretical guarantees. The approach naturally extends to an $L$-level hierarchy, enabling multiscale knowledge integration in a manner not supported by conventional single-level CL systems. Across a wide range of datasets and replay and regularization baselines, HTCL acts as a model-agnostic consolidation layer that consistently enhances performance, yielding mean accuracy gains of $7\%$ to $25\%$ while reducing the standard deviation of final accuracy by up to $68\%$ across random task permutations.

\end{abstract}

\input{sections/introduction}
\input{sections/short-related-work}
\input{sections/method}

\input{sections/experiments}
\input{sections/conclusion} 

\section*{Acknowledgements}
Authors VN and PN would like to acknowledge funding support from the  Air Force Office of Scientific Research under grant FA9550-24-1-0228 and the National Science Foundation under the grant 2337998. KR, was supported by the U.S. Department of Energy, Office of Science (SC), Advanced Scientific Computing Research (ASCR), Competitive Portfolios Project on Energy Efficient Computing: A Holistic Methodology, under Contract DE-AC02-06CH11357. We also acknowledge the support by the U.S. Department of Energy for the SciDAC 6 RAPIDS institute.

We have utilized generative AI, particularly, claude-code from Anthropic to generate the plots, tables in the experiment. Some of the graphics and text in the appendix has been generated, \emph{claude-code}. Rest of the paper is the original product of the authors.

\section*{Impact Statement}


This paper presents work whose goal is to advance the field of Machine Learning. There are many potential societal consequences of our work, none which we feel must be specifically highlighted here.



\bibliography{example_paper}
\bibliographystyle{icml2025}

\newpage
\input{sections/appendix_v2}


\end{document}

%% file: sections/htcl_macros.tex
\newcommand{\wh}[1]{\mathbf{w}_{#1}}            
\newcommand{\wl}{\mathbf{w}_{\ell}}             
\newcommand{\w}{\mathbf{w}}                     
\newcommand{\wopt}{\mathbf{w}^{*}}              
\newcommand{\whT}[2]{\mathbf{w}_{#1}^{(#2)}}    
\newcommand{\wlT}[1]{\wl^{(#1)}}                

\newcommand{\T}{\mathbb{T}}                    
\newcommand{\tsk}{\tau}                         
\newcommand{\G}{\mathrm{G}}                    
\newcommand{\D}{\mathrm{D}}                    

\newcommand{\perm}{\pi(t)}                         
\newcommand{\permset}{\Pi(t)}                      
\newcommand{\gpermset}{\varsigma}               
\newcommand{\gperm}{\sigma}                     

\newcommand{\loss}{\mathrm{L}}                 
\newcommand{\J}{\mathcal{J}}                    

\newcommand{\reg}{\lambda}                      
\newcommand{\Dw}{\Delta\mathbf{w}}              
\newcommand{\Dd}{\Delta\mathbf{d}}              

\newcommand{\Acc}{A}                            
\newcommand{\Eval}{\mathrm{Eval}}               
\newcommand{\TrainSeq}{\mathrm{TrainSeq}}       

\newcommand{\nlev}{L}                           

\newcommand{\ntasks}{t}                         
\newcommand{\ngroups}{m}                        
\newcommand{\gsize}{k}                          
\newcommand{\pdim}{d}                           
\newcommand{\ncatch}{N_{\mathrm{catch}}}        

\newcommand{\E}{\mathbb{E}}                     
\newcommand{\R}{\mathbb{R}}                     

\DeclareMathOperator*{\argmax}{arg\,max}


%% file: sections/introduction.tex
\section{Introduction}\label{sec:intro}
Continual learning (CL) aims to enable neural networks to acquire knowledge sequentially without catastrophically forgetting previously learned information~\cite{kirkpatrick2017overcoming, parisi2019continual}. This capability, essential for real-world applications, has been studied extensively through replay~\cite{rebuffi2017icarl}, regularization~\cite{zenke2017continual}, and architecture-based approaches~\cite{fernando2017pathnet}. Despite impressive progress in CL, a fundamental challenge remains: \emph{task-order sensitivity}.

\begin{figure}[!htbp]
    \centering
    \includegraphics[width=\linewidth]{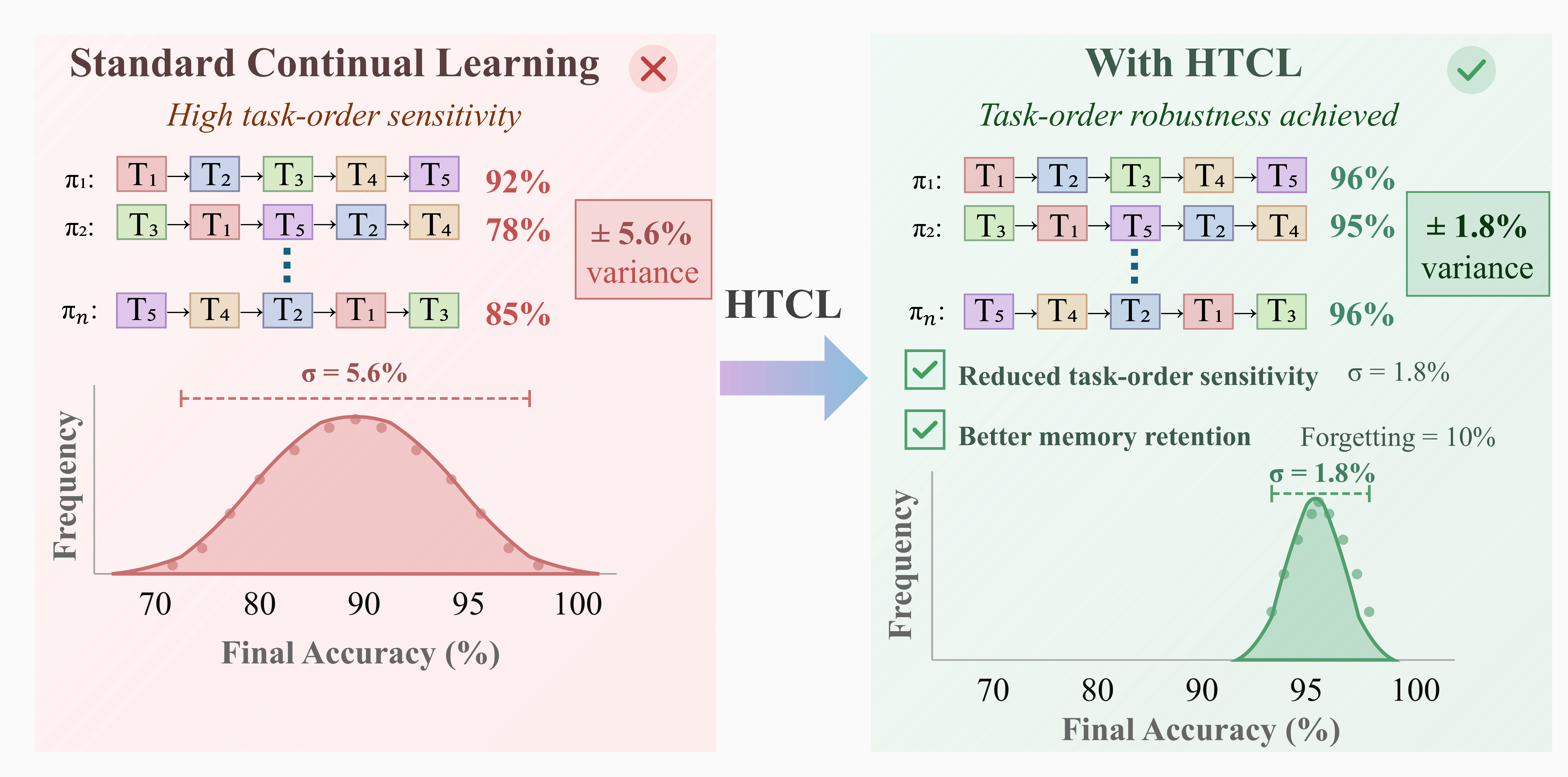}
    \caption{Task-order sensitivity in CL and its mitigation via HTCL. \textbf{Left:} Standard CL methods exhibit high sensitivity to task ordering. \textbf{Right:} HTCL achieves consistent performance regardless of task ordering, producing a narrow accuracy distribution.}
    \label{fig:new-cl-goal}
    \vspace{-6mm}
\end{figure}
In real-world deployments, task sequences are rarely under the practitioner's control~\cite{de2021continual}. Unknown or random task orderings are common, and in such settings, 
the existing CL methods demonstrate increased variance in performance~(see experiments in Appendix~\ref{app:impact-example} where task ordering induces up to $22\%$ variance), consistent with prior findings~\cite{bell2022effect, riemer2018learning}. Notably, even state-of-the-art approaches such as strong experience replay (SER)~\cite{zhuo2023continual} show variance of $5.6^2$ across 120 task permutations indicating that memory retention is unstable. Performance depends on incidental ordering effects rather than the tasks themselves, which is a major concern. For instance, a medical diagnosis model, learning to identify new diseases, cannot  depend solely on the order in which diseases are presented to the model.  


A naive solution is to evaluate all possible task sequences and select the best one. However, for $n$ tasks this requires evaluating $n!$ permutations, over 3.6 million for $n=10$, and approximately $2.4 \times 10^{18}$ for $n=20$, rendering exhaustive search computationally intractable. This NP-hardness creates a fundamental tension: task ordering profoundly affects performance, yet optimizing over all possible orderings is infeasible.

To address the task ordering problem, we introduce \textbf{Hierarchical Taylor Series-based Continual Learning (HTCL)}, a multi-model framework that couples a fast-adapting local model with a conservative, second-order global consolidation mechanism to minimize the impact of task ordering on the final performance of the model tractabily. Our key contributions are three fold.

\textbf{Tractable order optimization via grouping:} Rather than attempting to solve the intractable $n!$ permutation problem, HTCL identifies a principled middle ground. Tasks are partitioned into small groups of size $k$, and all $k!$ intra-group orderings are evaluated to select the sequence that yields the best local adaptation that provably improves expected performance over arbitrary or random orderings (Theorem~\ref{prop:group_vs_random}). 

 \textbf{Second-order consolidation for memory retention:} To integrate updates across group's that may represent different orderings, we introduce a Hessian-regularized Taylor expansion (Eq.~\ref{eq:hier_update}) that  approximates ``how the new groups optimal ordering affects the other groups that have already been learned?" We employ scalable low-rank curvature approximations, yielding near-linear time and memory overhead~(Appendix~\ref{app:complexity}) for the second order implementation while maintaining generalization of performance.

 \textbf{Multi-level hierarchy for long task horizons:} To capture the task ordering effects across long horizons, we introduce a multi-level hierarchy where each level operates at a progressively slower timescale and consolidates knowledge across task groups in increasing granularity~(see Figure~\ref{fig:new-cl-goal} and \ref{fig:htcl-overview}). Lower levels retain plasticity for recent tasks while higher levels accumulate stable, long-term knowledge.

Finally, the HTCL is \textbf{model-agnostic}. The local learner can employ any existing CL method. The hierarchical approach enhances stability without altering the underlying algorithm and requires no assumption on the group-wise learning.

\textbf{What to expect empirically?} Our experiments demonstrate two consistent benefits across datasets. First, HTCL reduces task-order sensitivity: two-level HTCL~(HTCL-L2) reduces standard deviation by $33\text{--}68\%$ on SplitMNIST and $17\text{--}21\%$ on CIFAR-100, with SER + HTCL-L2 achieving a $67.86\%$ variance reduction (see Fig.~\ref{fig:ser-results}), these benefits extend to other modalities, yielding variance reductions of approximately $33\textbf{--}38\%$ on graph (CORA) and text (20 Newsgroups) classification tasks. Second, HTCL improves memory retention: mean forgetting drops by up to $70.9\%$ (DER + HTCL-L2 on SplitMNIST) per-task standard deviation reduces by over $30\%$ in longer task sequence (see Fig.~\ref{fig:extended} for CIFAR-100 dataset). 

%% file: sections/short-related-work.tex
\section{Related Works}
\label{sec:short-related-works}
Here we briefly review some of the related works and position our HTCL framework within the CL literature. A detailed review of related works is given in Appendix \ref{app:related_work}. 

Existing methods in CL can be broadly categorized into \emph{regularization-based}~\cite{kirkpatrick2017overcoming, zenke2017continual}, \emph{replay-based}~\cite{zhuo2023continual, buzzega2020dark, shin2017continual}, and \emph{architecture-based} methods~\cite{parisi2019continual, nguyen2019toward}. These methods predominantly approach the CL problem through inductive biases or auxiliary mechanisms with varying memory–computation trade-offs, while remaining agnostic to task ordering.

\textbf{Task Ordering and Sequence Sensitivity.} Recent studies demonstrate that task order profoundly affects CL performance, where identical tasks presented in different sequences yield widely varying accuracies~\cite{bell2022effect, riemer2018learning}.  Addressing this via brute-force optimization is NP-hard due to the factorial complexity of possible permutations~\cite{knoblauch2020optimal}. 

\textbf{Multi-Timescale and Dual-Model Approaches} such as DualNet~\cite{pham2021dualnet} has been proposed. For instance, DualNet employs two interacting fast and slow learners operating at different temporal scales with distinct targets. Unlike such dual-model systems, HTCL employs a consolidation mechanism derived from a second-order Taylor series expansion. Particularly, HTCL approximates the curvature of the loss landscape aimed at reducing the learner's sensitivity to task ordering.

\textbf{Summary and Distinction.} In summary, while prior methods mitigate forgetting, they remain vulnerable to task-ordering. HTCL introduces a principled, multi-level framework that unifies local adaptability and hierarchical stability, yielding a tractable approximation of order invariance that standard CL methods lack. 

%% file: sections/method.tex
\section{Proposed Method}
\label{sec:method}


\subsection{Preliminaries: The Traditional CL Objective}
We define $t=1,2,\ldots,$ to be the instance at which a task $\tau(\cdot)$ is observed and define a set of tasks $\T(t) = \{\tsk(1), \tsk(2), \ldots, \tsk(t)\}$. In a specific experimental setup, these tasks arrive according to a permutation (or sequence) $\perm$. We define $\perm$ as a sequence of task indices, where $\perm[i]$ denotes the index of the task encountered at the $i$-th time step. Consequently, the dataset observed at step $i$ is denoted as $\D_{\perm[i]}$. In a real world scenario, one typically utilizes a CL learner consisting of a parametric model (a neural network for the purposes of this paper) parameterized by $\w \in \R^\pdim$ to learn these set of tasks such that, when for every new task arrives $\tau(t)$, the learner does not forget previously learned tasks while learning to perform the new task. 

\textbf{The classical goal of a CL learner} is to train a single set of parameters $\w$ given $\perm$ using a cumulative loss~(also called forgetting loss in CL) $\J$ to achieve the following objective 
\begin{align}
    \label{eq:standCL}
   min_{\w} \left\{ \J_{\perm}{(\w)} := \sum_{i=1}^{t} \loss(\w, \D_{\perm[i]}) \right\},
\end{align}
where $\loss$ is the loss function for a specific task. After repeated updates, the learner seeks to attain $\wopt$ that minimizes $\J_{\perm}(\w)$. If the task order is shuffled, with $\pi(t)^{(1)} \ne \pi(t)^{(2)}$, the learner may yield a completely different solution 
to \eqref{eq:standCL}. 
Importantly, this will result in significant discrepancies between $J_{\pi(t)^{(1)}}$ and $J_{\pi(t)^{(2)}}$. This leads to a phenomenon we describe as \textbf{``the curse of task ordering"}. Empirical evidence supports this observation (see Appendix~\ref{app: task-order-impact}): for two distinct permutations $\perm^{(1)}$ and $\perm^{(2)}$, the final performance can vary drastically.

\subsection{Problem Formulation: A New Objective}
\label{sec:problem_formulation}

In the presence of random/permutation task orders, it is desirable that $\wopt$ is order/permutation invariant. Let $\permset$ denote the set of all permutations of the $t$ tasks that are available. Each element $\perm^{(i)} \in \permset$ is a complete ordering of $t$ tasks, represents a sequence in which every task appears exactly once. Thus, $\permset = \{\perm^{(1)}, \perm^{(2)}, \ldots, \perm^{(t!)}\}$. Mathematically, we want to minimize the \textbf{expected value} of the loss over all possible task permutations
\begin{align}
    \label{eq:tiCL}
    \J^{*}(\wopt) = \min_{\w} \, \E_{\perm \sim \permset}\!\left[\J_{\perm}(\w)\right].
\end{align}
Particularly, we seek to solve \eqref{eq:tiCL} and construct an algorithm that yields stable and higher performance regardless of the task order. 

\textbf{Intractable combinatorial challenge:}  However, this goal leads to a combinatorial search space. For instance, if $t=20$ tasks, the size of $\permset$ becomes $20! \approx 2.4 \times 10^{18}$. Moreover, as evaluating each sequence requires full training, the compute requirement is significant.

\textbf{Tractable optimization of \eqref{eq:tiCL} via a task grouping strategy.}  To avoid evaluating the full factorial space, which entails over $3.6$ million permutations for $t=10$, we partition the task sequence into smaller groups of size $\gsize.$ For instance, a sequence of $10$ tasks can be decomposed into two groups of $3$ and one group of
$4$. Consequently, rather than evaluating $10!$ sequences, we only evaluate $3! + 3! + 4! = 36$ permutations. This complexity can be further minimized by using groups of $\gsize=2$, reducing the computational cost to just $5 \times 2! = 10$ evaluations. More formally, we partition the $t$ task sequence $\{1, 2, 3, \cdots, t\}$ into $\ngroups$ disjoint groups $\G_1, \dots, \G_\ngroups$, each containing  data corresponding to $\gsize$ tasks (where $t \approx \ngroups \cdot \gsize$). If $t$ is not divisible by $k$ then the last group contains the remaining tasks. 


The grouping strategy fundamentally changes how task ordering affects the final model's performance. It induces greater task-ordering robustness to the performance. Consider $\ntasks = 6$ tasks partitioned into two groups of size $k = 3.$ Without grouping, the $6! = 720$ possible task orderings could each yield a different final model. With HTCL (for the $k=3$ case), all $3! = 6$ arrival orderings~(e.g., $(\tsk_1, \tsk_2, \tsk_3)$, $(\tsk_2, \tsk_1, \tsk_3)$, etc.) are mapped to the same outcome and the arrival order becomes irrelevant within the group. In other words, any two full-sequence permutations that assign the same tasks to the same groups, and differ only in their intra-group arrangement thus reducing the effective number of distinct outcomes from $\ntasks!$ to at most $\frac{\ntasks!}{(k!)^m}$ which reduces the complexity from $\mathcal{O}(\ntasks!)$ to $\mathcal{O}(\ngroups \cdot \gsize!)$. Figure~\ref{fig:new-cl-goal} gives us an overview of how shifting to this new formulation reduces the variance. The theoretical lower bound for the performance improvement due to this group size are established in Theorem~\ref{prop:group_vs_random} (see Appendix~\ref{app:performance-bounds}).

The performance bounds of HTCL in Theorem~\ref{prop:group_vs_random} guarantees that HTCL's per-group selection meets or exceeds the expected accuracy under a uniformly random choice of intra-group orderings. Further, within each group, the CL problem can be solved independently by approximating the expected cost in \eqref{eq:tiCL} and solving the problem with any known CL approach to obtain $\wl$. However, solving for local groups alone is insufficient; recovering the order-invariant solution for \eqref{eq:tiCL} requires effectively integrating these disjoint results across groups. We therefore introduce a hierarchical CL strategy that combines the local model weights $\wl$ obtained from each group to estimate the final optimal parameters $\w$. This aggregation is derived via a Taylor series approximation, as described next.

\subsection{The HTCL Framework: Decoupling Plasticity and Stability}
\label{sec:htcl_framework}
Intuitively, we solve \eqref{eq:tiCL} within each group in an order-agnostic way and then consolidate group-level solutions to obtain approximate invariance to the full task sequence. While this procedure does not yield a globally optimal solution, we report that existing CL methods when used within the HTCL framework leads to significant performance improvements. However, using a single learner to both find the exhaustive solutions and consolidate creates a challenge.

Consider this problem: within each group, we can achieve order-invariance locally by evaluating all $k!$ intra-group permutations and selecting the best one. However, when a new group arrives, directly updating the model increases the chance of disrupting the order-invariant solution for previous groups. Formally, the solution for the new group $\G_{t}$ must not increase the order dependent sensitivity of the order-invariant solutions for groups $\G_1, \cdots, \G_{t-1}$.

Our hierarchical solution addresses this challenge by separating the role of exploration and preservation. HTCL decouples the learning process into multiple components organized in a hierarchy. At the base of this hierarchy sits a \emph{local model} with parameters $\wl \in \R^\pdim$, which is highly plastic and rapidly adapts to the current task group. Above the local model, we maintain one or more \emph{hierarchical models} $\wh{1}, \wh{2}, \ldots, \wh{\nlev}$. The hierarchical model $\wh{1}$ directly integrates knowledge from the local model $\wl$, while $\wh{2}$ integrates from $\wh{1}$, and so forth. The final hierarchical model $\wh{\nlev}$ serves as the \emph{ultimate learner} that stores the most consolidated long-term knowledge across groups and produces the final output of our method. 


We will formally describe this notion in the simplest two-model configuration ($\nlev = 1$) where we have one local model $\wl$ and one hierarchical model $\wh{1}$. At each timestep, we explore one single group, which can be denoted by task index $t$ (since the group is described using the $t^{th}$ task, we overload this notation). The local model $\wlT{t}$ aggressively explores the best task ordering within the current group, while the hierarchical model $\whT{1}{t}$ conservatively integrates the local model's findings with its previous state $\whT{1}{t-1}$, which was the result of the best task ordering from the previous $t-1$ groups. Such separation allows aggressive optimization of task order without risking the long-term memory~(increased sensitivity of the $\whT{1}{t}$ across different orderings from different groups). 

Notably, to capture effects across more number of groups it is easier to introduce more steps in the hierarchy, thus capturing longer horizon dependencies between groups. The local model can be trained using any available CL method.

\subsection{Second Order Hierarchical Update}
\label{sec:hierarchical_update}
With the optimal local model $\w_l^{(t)}$ now identified, we must consolidate its knowledge into the hierarchical model $\whT{1}{t-1}$. A naive approach, such as simple averaging or direct replacement, treats all parameter directions equally. However, not all directions affect order sensitivity equally. Some parameter directions are comparatively \emph{insensitive} and therefore the model can move freely along them without affecting performance on previously consolidated groups. Other directions are \emph{sensitive} to even small perturbations along these directions which cause the loss on prior groups to increase sharply. Therefore, a naive approach will effectively undo the order-invariant solutions we have carefully constructed within a group. To address this issue, this section develops a principled update rule that respects the curvature of the loss surface. We want to move the hierarchical model toward the local model to incorporate newly learned knowledge while maintaining fidelity to the prior groups.

\textbf{Setup and notation.} Let $\J(\w_1)$ denote the cumulative loss using the hierarchical model $\w_1^{(t)}$ over all tasks observed up to time $t$, as defined in Eq.~\ref{eq:standCL}. We denote the gradient and Hessian of this cumulative loss evaluated at the current weights $\mathbf{g}^{(t)} \coloneqq \nabla \J(\whT{1}{t}),
     \mathbf{H}^{(t)} \coloneqq \nabla^2 \J(\whT{1}{t}).$ In practice, we approximate these quantities using the replay buffer and the data of tasks within the current group.

\textbf{Formulating the update as an optimization problem.} We seek an update $\Dw = \whT{1}{t} - \whT{1}{t-1}$ that integrates the new group's order-invariant solution $\wlT{t}$ into the hierarchical model without disrupting the consolidation knowledge from previous groups. The Taylor expansion of $\J$ around $\whT{1}{t-1}$ provides exactly this. It approximates how incorporating $\Dw$ affects the cumulative loss over all previously learned tasks, serving as a proxy for the sensitivity of $\whT{1}{t-1}$ to the introduction of the new group. 

The Taylor expansion of $\J$ around the previous hierarchical weights $\whT{1}{t-1}$ is given as
\begin{align}
    \label{eq:taylor-exp}
    \begin{alignedat}{2}
        \J(\whT{1}{t-1} + \Dw) \approx \J(\whT{1}{t-1}) + {\mathbf{g}^{(t-1)}}^\top \Dw \\ + \tfrac{1}{2} \Dw^\top \mathbf{H}^{(t-1)} \Dw    
    \end{alignedat}
\end{align}


As $\J(\whT{1}{t-1} + \Dw)$ is a good proxy of this sensitivity, we use it as a surrogate objective function. However, this quadratic approximation captures how the loss changes as we move away from $\whT{1}{t-1}$. However, there are no terms in this equation Eq.~\eqref{eq:taylor-exp} which drags the weights towards the current local model $\wlT{t}$ which has mastered the new task group. To incorporate the local model's knowledge, we add a regularization term that penalizes deviation from the target $\wlT{t}$. Defining $\Dd^{(t)} = \wlT{t} - \whT{1}{t-1}$ as the gap between the local and hierarchical models, the full surrogate objective becomes: 

\begin{equation}
    \label{eq:surrogate}
    \begin{alignedat}{2}
        &\min_{\Dw} \quad \underbrace{\J(\whT{1}{t-1}) + {\mathbf{g}^{(t-1)}}^\top \Dw + \tfrac{1}{2} \Dw^\top \mathbf{H}^{(t-1)} \Dw}_{\text{Second-order approximation of loss on past tasks}} \\ 
           & + \underbrace{\tfrac{\reg}{2} \left\| (\whT{1}{t-1} + \Dw)- \wlT{t} \right\|^2}_{\text{Pull toward local model}}
        \\
        &\Rightarrow \min_{\Dw} \quad \J(\whT{1}{t-1}) + {\mathbf{g}^{(t-1)}}^\top \Dw \\  &+ \tfrac{1}{2} \Dw^\top \mathbf{H}^{(t-1)} \Dw 
        + \tfrac{\reg}{2} \left\| \Dw - \Dd^{(t)} \right\|^2,
    \end{alignedat}
\end{equation}

where the hyperparameter $\reg > 0$ controls the strength of the pull toward the local model. When $\reg$ is large, the update prioritizes matching the local model; when $\reg$ is small, the update prioritizes staying in low-curvature regions. The closed-form minimizer of Eq.~\eqref{eq:surrogate}, $\Dw^{*}$, can be derived as: 

\begin{equation}
\label{eq:deltaW-main}
\Dw^{*} = \left(\mathbf{H}^{(t-1)} + \reg \mathbf{I}\right)^{-1}\!\left(\reg \, \Dd^{(t)} - \mathbf{g}^{(t-1)}\right).
\end{equation}

By applying this optimal step to the current weights, we arrive at the updated hierarchical weight update given by 

\begin{equation}
\label{eq:hier_update-main}
\whT{1}{t} = \whT{1}{t-1} + \left(\mathbf{H}^{(t-1)} + \reg \mathbf{I}\right)^{-1}\!\left(\reg \, \Dd^{(t)} - \mathbf{g}^{(t-1)}\right).
\end{equation}

We formalize this result in Theorem~\ref{prop:main}, which establishes that Eq.~\eqref{eq:deltaW-main} is the unique global minimizer of the surrogate objective defined in Eq.~\eqref{eq:surrogate}, provided that the regularized Hessian is positive definite.

\textbf{Interpretation of the update rule.} The update in \eqref{eq:hier_update-main} can be understood through its two components. The term $\reg \, \Dd^{(t)}$ pulls the hierarchical model toward the local model, encoding new task knowledge, while the term $-\mathbf{g}^{(t-1)}$ pushes the hierarchical model in the direction that reduces loss on past tasks. The matrix $(\mathbf{H}^{(t-1)} + \reg \mathbf{I})^{-1}$ then rescales this combined update. 

\begin{algorithm}[!ht]
\caption{HTCL}
\label{alg:task-ordering}
\begin{algorithmic}[1]
\REQUIRE Task set $\T = \{\tsk_1, \ldots, \tsk_\ntasks\}$, group size $\gsize$, regularization $\reg$, catch-up iterations $\ncatch$
\STATE Initialize $\whT{1}{0} \gets \text{random}$
\FOR{each $t = 1, 2, \ldots, \ngroups$ with task group $\G_t \subset \T$, $|\G_t| = \gsize$}
    \STATE Initialize local models: $\wlT{\gperm} \gets \whT{1}{t-1}$ for all $\gperm \in \gpermset_t$
    \FOR{each permutation $\gperm \in \gpermset_t$}
        \STATE Train $\wlT{\gperm} \gets \TrainSeq(\gperm, \whT{1}{t-1})$
        \STATE Evaluate $\Acc^{\gperm} \gets \Eval(\wlT{\gperm})$
    \ENDFOR
    \STATE Select best: $\gperm^{*} = \argmax_{\gperm} \Acc^{\gperm}$
    \STATE $\wlT{t} \gets \wlT{\gperm^{*}}$
    \IF{$t = 1$}
        \STATE $\whT{1}{1} \gets \wlT{1}$
    \ELSE
        \STATE $\mathbf{g}^{(t-1)} \gets \nabla \loss(\whT{1}{t-1})$
        \STATE $\mathbf{H}^{(t-1)} \gets \nabla^2 \loss(\whT{1}{t-1})$
        \STATE $\whT{1}{t} \gets \whT{1}{t-1} + (\mathbf{H}^{(t-1)} + \reg \mathbf{I})^{-1}\!\left[\reg(\wlT{t} - \whT{1}{t-1}) - \mathbf{g}^{(t-1)}\right]$
    \ENDIF
\ENDFOR
\FOR{$i = 1, \ldots, \ncatch$} 
    \STATE $\mathbf{g} \gets \nabla \loss(\whT{1}{\ngroups})$
    \STATE $\mathbf{H} \gets \nabla^2 \loss(\whT{1}{\ngroups})$
    \STATE $\whT{1}{\ngroups} \gets \whT{1}{\ngroups} + (\mathbf{H} + \reg \mathbf{I})^{-1}\!\left[\reg(\wlT{\ngroups} - \whT{1}{\ngroups}) - \mathbf{g}\right]$
\ENDFOR
\STATE \textbf{return} $\whT{1}{\ngroups}$
\end{algorithmic}
\end{algorithm}
\vspace{-4mm}

\textbf{Catch-up phase.} Since the update in \eqref{eq:hier_update-main} relies on a Taylor-series approximation rather than direct optimization, the hierarchical model may underfit the most recently learned tasks. To mitigate this, we introduce a \emph{catch-up phase} in which the hierarchical model is refined for $\ncatch$ additional iterations using the same Taylor-based update. In practice, a small number of iterations suffices with minimal computational overhead. Algorithm~\ref{alg:task-ordering} summarizes the full training procedure.  


The update rule in Eq.~\eqref{eq:hier_update-main} uses a second-order expansion around the current hierarchical state $\whT{1}{t-1}$. A natural question is whether we should account for interactions with groups further in the past. In principle, one could construct a multi-point Taylor expansion that jointly considers $\whT{1}{t-1}, \whT{1}{t-2}, \ldots$. However, such higher-order terms introduce substantial computational overhead. Figure~\ref{fig:htcl-overview} illustrates the complete HTCL framework, showing how tasks are partitioned into groups, how local models explore intra-group orderings, and how the hierarchical model integrates these local findings through second-order hierarchical updates. We now extend our two model formulation to $L$-layer setup in the following section.

\begin{figure}[!ht]
    \centering
    \includegraphics[width=\linewidth]{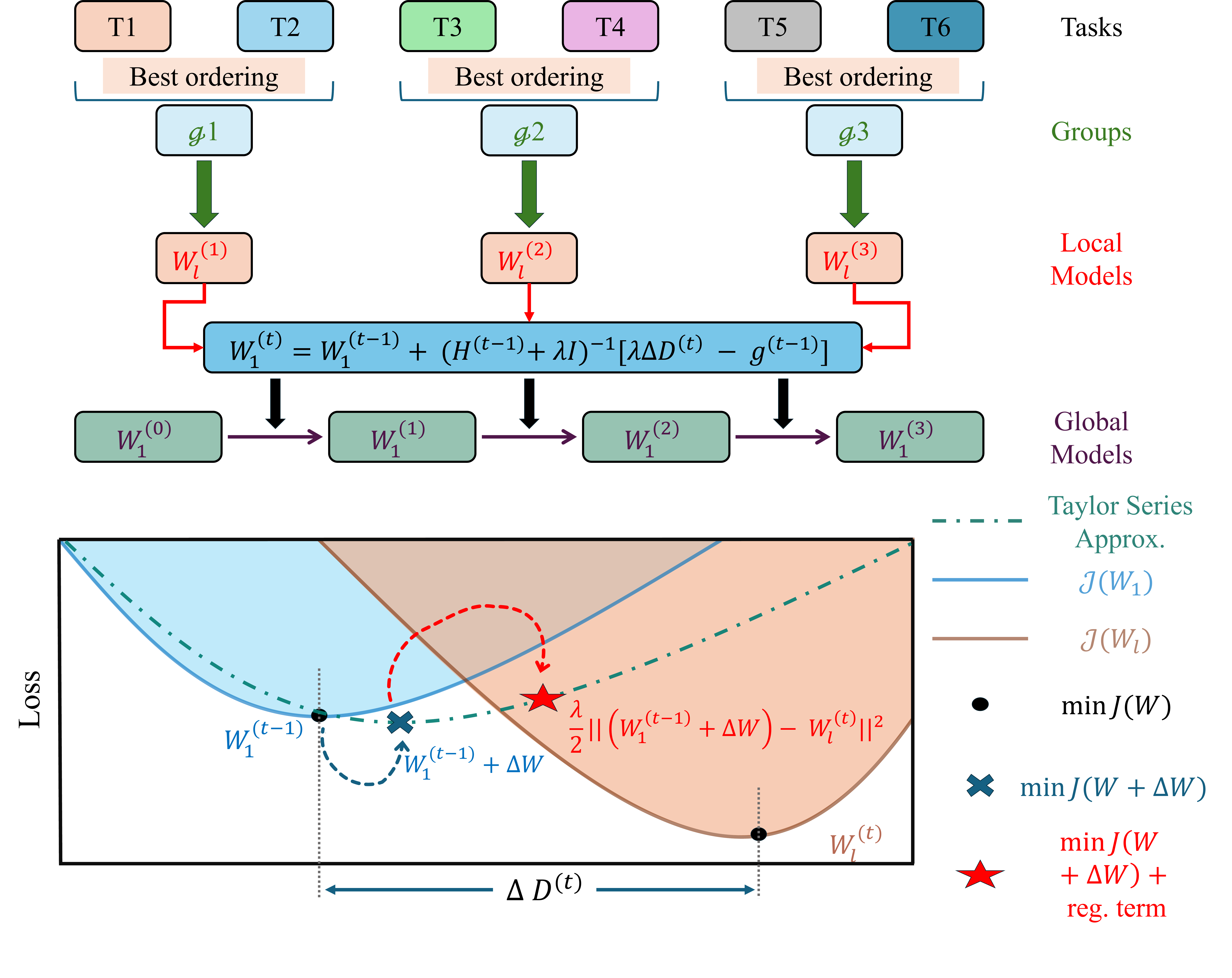}
    \caption{(top)~The HTCL framework for a sequence of six tasks partitioned into three groups ($\G_1, \G_2, \G_3$). Within each group, all $\gsize!$ (here $2!$) intra-group permutations are evaluated through neural network training to provide $\wlT{1}, \wlT{2}, \wlT{3}.$ These three results are successively integrated into the hierarchical model $\whT{1}{t}$. (bottom)~Illustration of the update rule on a loss landscape. The current hierarchical model $\whT{1}{t-1}$ (blue) integrates knowledge from the local model $\wlT{t}$ (brown). Rather than naively minimizing $\J(\w)$, HTCL approximates $\J(\whT{1}{t-1}+\Dw)$ (blue cross) using a second-order Taylor expansion of the loss around $\whT{1}{t-1}$ with a regularization term that penalizes deviation from $\wlT{t}$. 
    This yields the updated hierarchical model $\whT{1}{t}$ (red star).}
    \label{fig:htcl-overview}
\end{figure}

\begin{figure*}[!htbp]
    \centering
    \includegraphics[width=\textwidth]{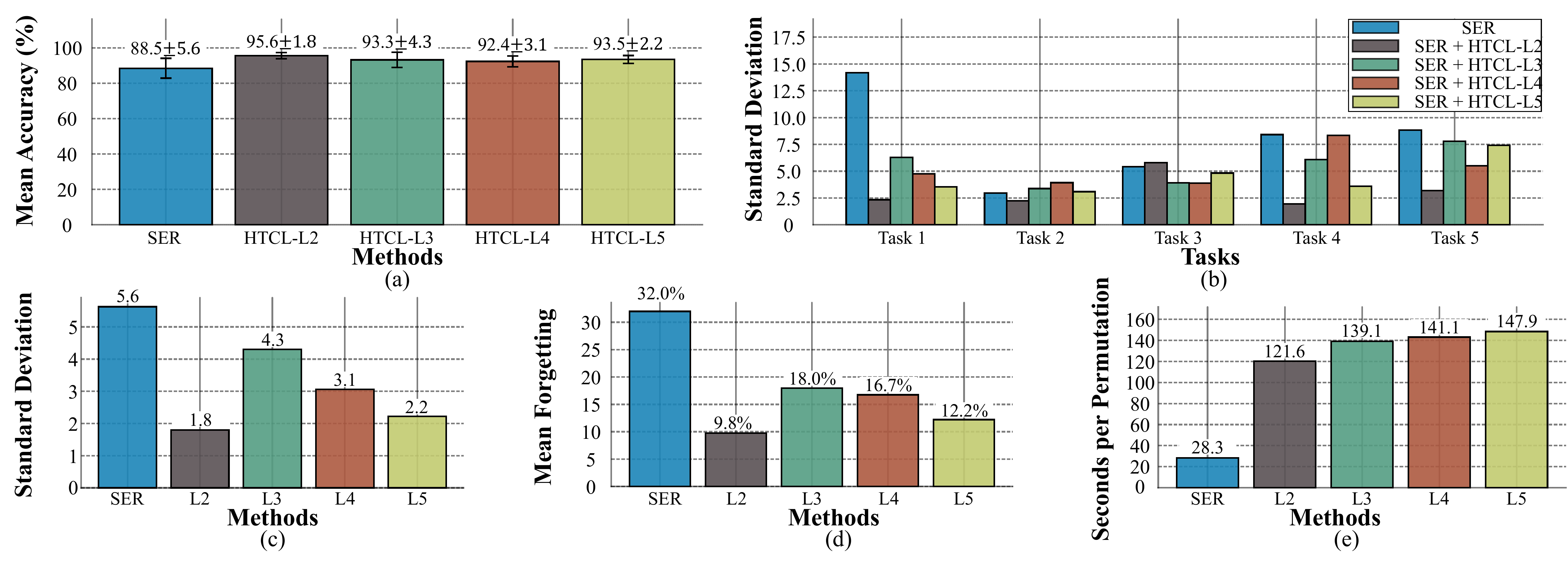}
    \caption{\textbf{Task-order robustness and memory retention of HTCL with SER on SplitMNIST.} Accuracy distributions are computed over all $120$ permutations of the five-task sequence. (a) Mean accuracy across all tasks after sequential training, comparing SER alone against SER augmented with 2-, 3-, 4-, and 5-level HTCL hierarchies. Error bars indicate standard deviation. (b) Per-task standard deviation across permutations. HTCL consistently achieves lower or comparable variance for most tasks across all hierarchy depths. (c) Overall standard deviation across methods, demonstrating improved task-order robustness with HTCL. (d) Mean forgetting across methods. HTCL reduces forgetting and improves memory retention. (e) Computation time per permutation. HTCL incurs moderate overhead relative to the baseline, reflecting a practical trade-off between stability and computational cost.}
    \label{fig:ser-results}
\end{figure*}

\subsection{Multi-Level Hierarchical Extension}

The preceding formulation uses a two-level architecture: one local model $\wl$ and one hierarchical model $\wh{1}$. This can be generalized to deeper hierarchies for finer-grained consolidation of knowledge across multiple timescales.  In an $\nlev$-level hierarchy, we maintain models $\wh{1}, \wh{2}, \ldots, \wh{\nlev}$ in addition to the local model $\wl$. The indexing reflects proximity to the local model: $\wh{1}$ is closest to $\wl$ and most responsive to recent changes, while $\wh{\nlev}$ is farthest and most stable. Each level integrates updates from the level immediately below it, with $\wh{1}$ integrating knowledge from the local model $\wl$, and $\wh{i}$ integrating knowledge from $\wh{i-1}$ for $i = 2, \ldots, \nlev$. The final output of HTCL is $\wh{\nlev}$, which serves as the ultimate learner.

For an $\nlev$-level hierarchy, if each level $i$ maintains its own curvature matrix $\mathbf{H}_i^{(t)}$ and gradient $\mathbf{g}_i^{(t)}$, then the update rule for level $i$ is:
\begin{equation}
\label{eq:multi_hierarchy}
\whT{i}{t} = \whT{i}{t-1} + \left(\mathbf{H}_i^{(t-1)} + \reg_i \mathbf{I}\right)^{-1}\!\left[\reg_i \, \Dd_i^{(t)} - \mathbf{g}_i^{(t-1)}\right]
\end{equation}
for $i = 1, \ldots, \nlev$, where:
\begin{itemize}
    \item For $i = 1$: $\Dd_1^{(t)} = \wlT{t} - \whT{1}{t-1}$ (difference from local model)
    \item For $i > 1$: $\Dd_i^{(t)} = \whT{i-1}{t} - \whT{i}{t-1}$ (difference from level below)
\end{itemize}

The final model $\w_L$ integrates information from all $L$ timescales with each level acting as a buffer that progressively mitigates the influence of any single group's task ordering on the final solution.

Each hierarchical level progressively smooths and consolidates information from the faster layers below it. Lower levels (smaller $i$) retain adaptability to recent tasks, while higher levels (larger $i$) maintain long-term stability through slower updates. Empirically, we find that deeper hierarchies (e.g., $\nlev=3$) provide improved resistance to catastrophic forgetting, particularly in long task sequences. Quantitative results and ablation studies are provided in Section~\ref{sec: results}.

\begin{remark}[Time and space complexity]
\label{rem:complexity}
The exact curvature-aware update in \eqref{eq:hier_update-main} requires forming and inverting a $\pdim \times \pdim$ Hessian, resulting in worst-case time $\Theta(\pdim^3)$ and memory $\Theta(\pdim^2)$ (or $\Theta(\nlev \pdim^3)$ and $\Theta(\nlev \pdim^2)$ for an $\nlev$-level hierarchy). In practice, HTCL employs scalable second-order approximations (e.g., diagonal Hessian approximation and group size smaller than $5$) which reduce both time and memory to near-linear. Complete derivations and comparisons with other CL methods are provided in Appendix~\ref{app:complexity}.
\end{remark}

%% file: sections/experiments.tex
\section{Experiments}

We evaluated HTCL on (i) two image classification benchmarks: SplitMNIST~\cite{lecun2002gradient} (10 classes divided into 5 tasks, 2 classes per task) and Split CIFAR-100~\cite{krizhevsky2009learning} (100 classes divided into 10 tasks, 10 classes per task); (ii) a graph node classification problem using Split Cora dataset~\cite{mccallum2000automating}; and (iii) a text classification problem on 20 Newsgroups~\cite{lang1995newsweeder} dataset  to demonstrate HTCL's generality across domains. All the experiments were conducted under the \textit{domain-incremental} learning setting, where task boundaries are known during training but task identity is not provided at test time. All experiments are performed on an NVIDIA A100 GPU.

We compared the HTCL method against top seven baseline CL methods spanning replay-based, regularization-based, and hybrid approaches: Strong Experience Replay (SER)~\cite{zhuo2023continual}, Dark Experience Replay (DER)~\cite{buzzega2020dark}, Experience Replay (ER), DualNet~\cite{pham2021dualnet}, iCaRL~\cite{rebuffi2017icarl}, EWC~\cite{kirkpatrick2017overcoming}, and Spectral Regularizer (SR)~\cite{ICLR2025_5565ab68}. These baselines were selected to include methods with state-of-the-art performance on our benchmarks (SER), foundational approaches in CL (EWC, ER), and recently proposed techniques (SR). In our framework, each baseline trains the \textit{local} model, while the \textit{global} model is updated via the Taylor series–based rule in Eq.~\eqref{eq:hier_update-main}. We calculate \textit{average accuracy}, \textit{average forgetting} and \textit{standard deviation} of accuracy and forgetting to demonstrate our results~\cite{lopez2017gradient, chaudhry2018efficient}. A detailed list of hyperparameters and other details on experimental setup can be found in Appendix~\ref{app:extended-results}.

\subsection{Results and Discussion}\label{sec: results}
\begin{figure}[!htbp]
    \centering
    \includegraphics[width=\linewidth]{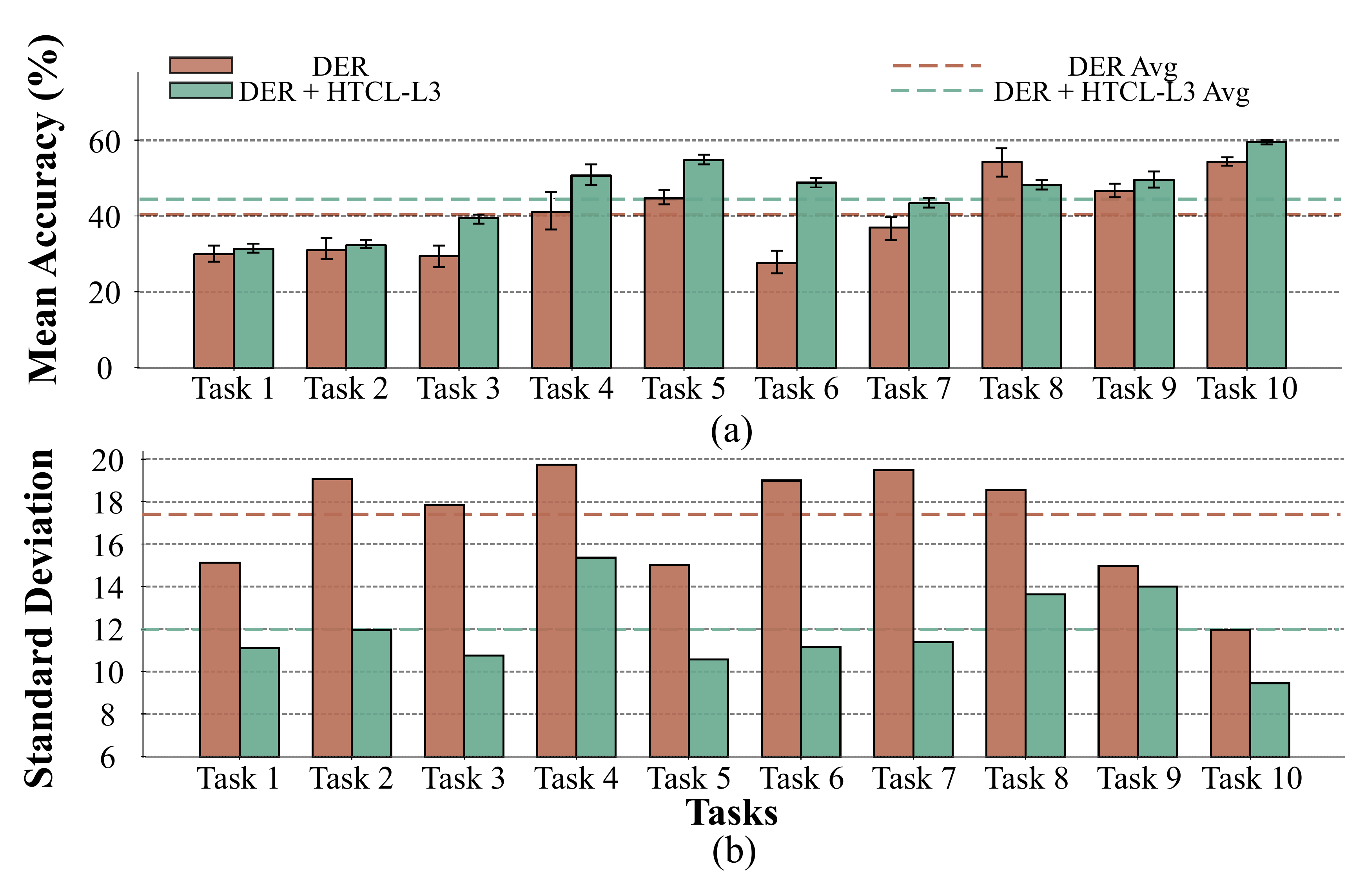}
    \caption{\textbf{Scalability of HTCL under long task sequences.} We extend the two-level local–global structure to a three-level hierarchy as described in Eq.~\eqref{eq:multi_hierarchy}, using DER as a baseline for illustration. The three-layer HTCL demonstrates higher memory retention and lower variance when handling long task sequences. (a) Mean accuracy per task: DER + HTCL-L3 achieves $45.2\%$ compared to $40.6\%$ for DER alone. (b) Standard deviation per task: DER + HTCL-L3 maintains a lower standard deviation ($12.0$) compared to DER alone ($17.4$).}
    \label{fig:extended}
\end{figure}

We first present representative results illustrating the performance of HTCL on SplitMNIST (see Figure~\ref{fig:ser-results}) using SER as a baseline through mean accuracy, task-order sensitivity, forgetting, and computational overhead. Next, we demonstrate the scalability of our HTCL to extended task sequences using DER on Split CIFAR-100 (see Figure~\ref{fig:extended}). A comprehensive comparison against seven baseline models across four diverse datasets (SplitMNIST, Split CIFAR-100, Split Cora, 20 Newsgroups) can be found in Table~\ref{tab:mean-std}. We further investigate the effects of memory buffer size on HTCL performance in Appendix~\ref{app:small-buffer}. Finally, since our consolidation method resembles some similarities with federated aggregation, we present a comparative analysis against FedProx~\cite{li2020federated} and FedAvg~\cite{mcmahan2017communication} in Appendix~\ref{app:federated}.

\textbf{Enhanced task-order robustness.} Panels~(b) and~(c)  of Figure~\ref{fig:ser-results} characterize stability with respect to task-order permutations. Panel~(b) shows that HTCL enables lower per-task variance, with reductions of up to $82\%$ on the most volatile tasks (e.g., Task 1), suggesting that individual tasks are less sensitive to the order in which they are encountered. Panel~(c) summarizes this effect at the sequence level, where HTCL markedly reduces the overall standard deviation by $67.9\%$ (dropping from $5.6$ to $1.8$) across permutations, with HTCL-L2 exhibiting the strongest robustness. This stabilizing effect generalizes effectively to other modalities; on Cora (graph) and 20 Newsgroups (text), HTCL reduces standard deviation by approximately $38\%$ and $33\%$ respectively, confirming that the method's robustness is not limited to image domains (results are shown in Table~\ref{tab:mean-std}).

\textbf{Accuracy and memory retention.} Panel~(a) in Figure~\ref{fig:ser-results} shows that incorporating the Taylor-based consolidation consistently improves mean accuracy relative to the SER baseline, with HTCL-L2 achieving the largest gain of $7.1$ percentage points ($95.6\%$ vs $88.5\%$). We observe similar improvements in non-image tasks, where HTCL boosts mean accuracy by $5.8$ percentage points on Cora and $6.3$ percentage points on 20 Newsgroups. This improvement reflects stronger retention of earlier tasks rather than overfitting to the final task, a trend that is corroborated by the forgetting metric in Panel~(d), where HTCL-L2 exhibits a drastic drop in forgetting compared to the baseline ($9.8\%$ vs $32.0\%$). This $22.2\%$ reduction on SplitMNIST is mirrored in the graph and text domains, where forgetting drops by over $10$ percentage points relative to the baseline. These results indicate that the global consolidation step effectively preserves previously learned representations, preventing the catastrophic overwriting of weights associated with earlier tasks. Together, these results demonstrate that the observed accuracy gains stem principally from improved long-term memory retention.

\begin{table*}[!htbp]
\centering
\caption{Final accuracy and variability across domains under the domain-incremental setting using a replay buffer of 50 samples for SplitMNIST, Cora, and 20 Newsgroups, and 500 samples for CIFAR-100. Mean accuracy, standard deviation, and mean forgetting are computed over 120 task-order permutations for SplitMNIST and 20 Newsgroups, 6 permutations for Cora, and 200 random permutations for CIFAR-100. Across all methods, incorporating HTCL consistently reduces task-order sensitivity, yielding substantially lower standard deviations while preserving or improving mean accuracy. $\uparrow$ denotes an increase and $\downarrow$ denotes a decrease relative to the baseline. Best results within each method group are shown in \textbf{bold}.}
\label{tab:mean-std}
\resizebox{\textwidth}{!}{%
\begin{tabular}{lcccccccccccc}
\hline
\multirow{2}{*}{\textbf{Method}} &
  \multicolumn{12}{c}{\textbf{Datasets}} \\ \cline{2-13} 
 &
  \multicolumn{3}{c}{\textbf{SplitMNIST}} &
  \multicolumn{3}{c}{\textbf{CIFAR-100}} & 
  \multicolumn{3}{c}{\textbf{Cora}} & 
  \multicolumn{3}{c}{\textbf{20 Newsgroups}} \\ \hline
 &
  \textbf{Mean Acc.} &
  \textbf{Std} &
  \textbf{Mean Forget} &
  \textbf{Mean Acc.} &
  \textbf{Std} &
  \textbf{Mean Forget} &
  \textbf{Mean Acc.} &
  \textbf{Std} &
  \textbf{Mean Forget} &
  \textbf{Mean Acc.} &
  \textbf{Std} &
  \textbf{Mean Forget} \\ \hline
SER &
  88.5 & 5.6 & 32.0 &
  42.62 & 18.73 & 56.5 &
  72.5 & 6.8 & 28.5 &
  58.2 & 8.4 & 35.2 \\
SER + HTCL-L2 &
  \textbf{95.6}$\uparrow$ & \textbf{1.8}$\downarrow$ & \textbf{9.8}$\downarrow$ &
  44.7$\uparrow$ & 14.9$\downarrow$ & 40.0$\downarrow$ &
  \textbf{78.3}$\uparrow$ & \textbf{4.2}$\downarrow$ & \textbf{18.2}$\downarrow$ &
  \textbf{64.5}$\uparrow$ & \textbf{5.6}$\downarrow$ & \textbf{24.8}$\downarrow$ \\
SER + HTCL-L3 &
  93.3$\uparrow$ & 4.3$\downarrow$ & 18.0$\downarrow$ &
  \textbf{46.2}$\uparrow$ & \textbf{12.5}$\downarrow$ & \textbf{30.0}$\downarrow$ &
  76.8$\uparrow$ & 5.1$\downarrow$ & 21.4$\downarrow$ &
  62.8$\uparrow$ & 6.3$\downarrow$ & 27.5$\downarrow$ \\ \hline
DER &
  86.5 & 4.9 & 35.0 &
  40.6 & 17.4 & 58.2 &
  70.2 & 7.2 & 30.8 &
  55.8 & 9.1 & 38.4 \\
DER + HTCL-L2 &
  \textbf{95.1}$\uparrow$ & \textbf{1.8}$\downarrow$ & \textbf{10.2}$\downarrow$ &
  43.0$\uparrow$ & 14.4$\downarrow$ & 42.6$\downarrow$ &
  \textbf{77.5}$\uparrow$ & \textbf{4.5}$\downarrow$ & \textbf{19.5}$\downarrow$ &
  \textbf{62.3}$\uparrow$ & \textbf{6.0}$\downarrow$ & \textbf{26.5}$\downarrow$ \\
DER + HTCL-L3 &
  92.3$\uparrow$ & 4.3$\downarrow$ & 18.8$\downarrow$ &
  \textbf{45.2}$\uparrow$ & \textbf{12.0}$\downarrow$ & \textbf{33.0}$\downarrow$ &
  75.2$\uparrow$ & 5.4$\downarrow$ & 22.8$\downarrow$ &
  60.5$\uparrow$ & 6.8$\downarrow$ & 29.2$\downarrow$ \\ \hline
ER &
  89.3 & 5.5 & 30.3 &
  38.0 & 20.0 & 60.0 &
  68.5 & 7.8 & 32.5 &
  54.4 & 9.8 & 42.0 \\
ER + HTCL-L2 &
  \textbf{92.0}$\uparrow$ & \textbf{3.5}$\downarrow$ & \textbf{12.5}$\downarrow$ &
  41.0$\uparrow$ & 17.0$\downarrow$ & 48.0$\downarrow$ &
  \textbf{74.2}$\uparrow$ & \textbf{5.2}$\downarrow$ & \textbf{22.0}$\downarrow$ &
  \textbf{58.5}$\uparrow$ & \textbf{6.8}$\downarrow$ & \textbf{32.5}$\downarrow$ \\
ER + HTCL-L3 &
  91.5$\uparrow$ & 4.0$\downarrow$ & 14.0$\downarrow$ &
  \textbf{43.5}$\uparrow$ & \textbf{15.5}$\downarrow$ & \textbf{40.0}$\downarrow$ &
  73.0$\uparrow$ & 5.8$\downarrow$ & 24.5$\downarrow$ &
  57.2$\uparrow$ & 7.4$\downarrow$ & 34.0$\downarrow$ \\ \hline
iCaRL &
  93.2 & 4.8 & 12.5 &
  41.8 & 17.2 & 54.4 &
  74.8 & 6.2 & 22.5 &
  60.5 & 7.5 & 30.2 \\
iCaRL + HTCL-L2 &
  \textbf{96.1}$\uparrow$ & \textbf{2.1}$\downarrow$ & \textbf{6.8}$\downarrow$ &
  44.2$\uparrow$ & 14.0$\downarrow$ & 42.5$\downarrow$ &
  \textbf{78.2}$\uparrow$ & \textbf{3.8}$\downarrow$ & \textbf{15.8}$\downarrow$ &
  \textbf{65.8}$\uparrow$ & \textbf{5.0}$\downarrow$ & \textbf{22.5}$\downarrow$ \\
iCaRL + HTCL-L3 &
  95.0$\uparrow$ & 3.4$\downarrow$ & 9.2$\downarrow$ &
  \textbf{46.6}$\uparrow$ & \textbf{11.8}$\downarrow$ & \textbf{35.0}$\downarrow$ &
  77.5$\uparrow$ & 4.5$\downarrow$ & 18.2$\downarrow$ &
  64.0$\uparrow$ & 5.8$\downarrow$ & 25.0$\downarrow$ \\ \hline
DualNet &
  89.2 & 9.0 & 15.0 &
  40.6 & 17.5 & 57.0 &
  71.8 & 8.5 & 26.2 &
  56.5 & 10.2 & 33.5 \\
DualNet + HTCL-L2 &
  90.5$\uparrow$ & 6.0$\downarrow$ & 10.5$\downarrow$ &
  41.5$\uparrow$ & 14.4$\downarrow$ & 45.0$\downarrow$ &
  75.5$\uparrow$ & 5.8$\downarrow$ & 19.0$\downarrow$ &
  61.2$\uparrow$ & 7.2$\downarrow$ & 26.8$\downarrow$ \\
DualNet + HTCL-L3 &
  \textbf{91.0}$\uparrow$ & \textbf{5.5}$\downarrow$ & \textbf{9.0}$\downarrow$ &
  \textbf{44.0}$\uparrow$ & \textbf{12.0}$\downarrow$ & \textbf{36.0}$\downarrow$ &
  \textbf{76.8}$\uparrow$ & \textbf{5.2}$\downarrow$ & \textbf{17.5}$\downarrow$ &
  \textbf{62.5}$\uparrow$ & \textbf{6.5}$\downarrow$ & \textbf{24.5}$\downarrow$ \\ \hline
SR &
  88.0 & 7.5 & 31.0 &
  13.0 & 24.0 & 65.0 &
  69.5 & 8.8 & 35.0 &
  54.2 & 10.5 & 40.5 \\
SR + HTCL-L2 &
  90.0$\uparrow$ & 5.0$\downarrow$ & 14.0$\downarrow$ &
  18.5$\uparrow$ & 20.0$\downarrow$ & 55.0$\downarrow$ &
  74.0$\uparrow$ & 6.0$\downarrow$ & 25.5$\downarrow$ &
  59.6$\uparrow$ & 7.5$\downarrow$ & 32.7$\downarrow$ \\
SR + HTCL-L3 &
  \textbf{91.0}$\uparrow$ & \textbf{4.0}$\downarrow$ & \textbf{12.0}$\downarrow$ &
  \textbf{22.0}$\uparrow$ & \textbf{18.0}$\downarrow$ & \textbf{48.0}$\downarrow$ &
  \textbf{75.5}$\uparrow$ & \textbf{5.2}$\downarrow$ & \textbf{22.0}$\downarrow$ &
  \textbf{61.0}$\uparrow$ & \textbf{6.8}$\downarrow$ & \textbf{29.1}$\downarrow$ \\ \hline
EWC &
  79.3 & 9.8 & 42.8 &
  12.5 & 23.5 & 68.0 &
  65.2 & 9.7 & 38.2 &
  48.5 & 11.1 & 45.8 \\
EWC + HTCL-L2 &
  85.2$\uparrow$ & 6.2$\downarrow$ & 28.9$\downarrow$ &
  16.8$\uparrow$ & 19.5$\downarrow$ & 58.0$\downarrow$ &
  70.9$\uparrow$ & 6.5$\downarrow$ & 28.6$\downarrow$ &
  54.7$\uparrow$ & 8.8$\downarrow$ & 35.5$\downarrow$ \\
EWC + HTCL-L3 &
  \textbf{87.0}$\uparrow$ & \textbf{5.0}$\downarrow$ & \textbf{24.0}$\downarrow$ &
  \textbf{19.3}$\uparrow$ & \textbf{17.1}$\downarrow$ & \textbf{52.5}$\downarrow$ &
  \textbf{72.5}$\uparrow$ & \textbf{5.8}$\downarrow$ & \textbf{25.5}$\downarrow$ &
  \textbf{56.0}$\uparrow$ & \textbf{7.2}$\downarrow$ & \textbf{33.3}$\downarrow$ \\ \hline
\end{tabular}%
}
\end{table*}

\textbf{Practical computational overhead.} Panel~(e) reports the computation time per permutation. HTCL introduces additional cost, increasing the time per permutation from $28.3$ seconds (SER) to $121.6$ seconds (HTCL-L2). However, this overhead scales gradually with hierarchy depth. In practice, this trade-off is highly favorable: while computational cost increases by approximately $4\times$, it yields a $69\%$ relative reduction in forgetting ($32.0\%$ to $9.8\%$) and a $68\%$ reduction in task-order variance ($5.6$ to $1.8$). This makes the method practical for CL scenarios where reliability and robustness to random task order are priorities over raw training speed.

\textbf{Generalization across methods.} Table~\ref{tab:mean-std} reports mean accuracy, standard deviation, and mean forgetting across multiple CL baselines under the domain-incremental setting. Across all baseline methods, incorporating HTCL consistently reduces standard deviation, demonstrating improved task-order robustness. For the strongest baselines such as SER and DER, HTCL also yields substantial gains in mean accuracy and reductions in forgetting, confirming that the Taylor-based consolidation rule is broadly complementary to existing approaches. The consistent reduction in standard deviation across all methods highlights HTCL's role as a variance-reducing mechanism that stabilizes learning regardless of the underlying baseline.  


\textbf{Scaling to long task sequences.} To assess how HTCL scales to longer task horizons, we extend the two-level local–global framework to a three-level hierarchy following the Eq.~\eqref{eq:multi_hierarchy}. The additional intermediate level introduces a consolidation stage that operates between the fast-adapting local model and the slowly evolving global model, progressively smoothing updates across timescales.

We evaluated this configuration on Split CIFAR-100, a challenging benchmark consisting of 100 classes divided into 10 groups of sequential tasks. Figure~\ref{fig:extended} presents results using DER as the base method. The benefits of hierarchical consolidation are pronounced. Panel (a) shows that DER + HTCL-L3 achieves $45.2\%$ mean accuracy across all tasks, representing a $5$ percentage point improvement over standalone DER ($40.6\%$).

Beyond accuracy improvements, the three-level hierarchy substantially reduces task-order sensitivity. Panel (b) shows that per-task standard deviation drops from $17.4$ for DER to $12.0$ for DER + HTCL-L3, a reduction of over $30\%$. This indicates that hierarchical consolidation not only improves retention but also yields more predictable performance regardless of task arrival order. Furthermore, panel (a) demonstrates that the HTCL with 3 hierarchical levels effectively retains performance better than the baseline and HTCL with 2 hierarchical levels (see CIFAR-100 dataset in Table~\ref{tab:mean-std}), preventing the sharp accuracy drops observed in the baseline for mid-sequence tasks (e.g., Tasks 3-7).

These results confirm that deeper hierarchies provide tangible benefits for extended task sequences, where the accumulated interference across many tasks overwhelms single-model learners. By distributing consolidation across multiple temporal scales, HTCL maintains stability without sacrificing the plasticity needed to learn new tasks.

%% file: sections/conclusion.tex
\section{Conclusion}\label{sec:conclusion}




In this work, we propose an alternative approach to mitigating task-order sensitivity in CL without depending on exhaustive combinatorial search over task permutations. Our framework decouples the learning process into local exploration for identifying the best intra-group performances and a hierarchical consolidation mechanism that integrates these localized solutions to achieve global robustness to task order. By combining hierarchical structure with second-order updates, HTCL enables a principled strategy of knowledge consolidation across multiple timescales. We believe that this perspective provides a practical path toward scalable continual learners that remain reliable under long task horizons while being compatible with existing CL methods. 

%% file: sections/appendix_v2.tex
\appendix
\onecolumn
\section{Supplementary Materials}

\input{sections/related_work}

\subsection{Consolidation Equation~\eqref{eq:hier_update-main}}
\label{app:update-proof}

\begin{theorem}
\label{prop:main}
If $\mathbf{H}^{(t-1)} + \reg \mathbf{I} \succ 0$, the minimizer of the surrogate objective \eqref{eq:surrogate} is:
\begin{equation}
\label{eq:deltaW}
\Dw^{*} = \left(\mathbf{H}^{(t-1)} + \reg \mathbf{I}\right)^{-1}\!\left(\reg \, \Dd^{(t)} - \mathbf{g}^{(t-1)}\right)
\end{equation}
and the updated hierarchical weights are:
\begin{equation}
\label{eq:hier_update}
\whT{1}{t} = \whT{1}{t-1} + \left(\mathbf{H}^{(t-1)} + \reg \mathbf{I}\right)^{-1}\!\left(\reg \, \Dd^{(t)} - \mathbf{g}^{(t-1)}\right)
\end{equation}
where $\Dd^{(t)} = \wlT{t} - \whT{1}{t-1}$. The surrogate is strictly convex and $\Dw^{*}$ is the global minimizer.
\end{theorem}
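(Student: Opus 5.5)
The plan is a direct convex-quadratic argument. Write the surrogate in \eqref{eq:surrogate} as a function of $\Dw$,
\[
F(\Dw) = \J(\whT{1}{t-1}) + {\mathbf{g}^{(t-1)}}^\top \Dw + \tfrac{1}{2}\Dw^\top \mathbf{H}^{(t-1)}\Dw + \tfrac{\reg}{2}\|\Dw - \Dd^{(t)}\|^2 .
\]
Expanding the last term gives $\tfrac{\reg}{2}\Dw^\top\Dw - \reg\,{\Dd^{(t)}}^\top\Dw + \tfrac{\reg}{2}\|\Dd^{(t)}\|^2$. Collecting terms, $F$ takes the standard quadratic form
\[
F(\Dw) = \tfrac12 \Dw^\top \mathbf{A}\Dw - \mathbf{b}^\top\Dw + c,
\]
with $\mathbf{A} = \mathbf{H}^{(t-1)} + \reg\mathbf{I}$, $\mathbf{b} = \reg\Dd^{(t)} - \mathbf{g}^{(t-1)}$, and $c$ independent of $\Dw$. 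I will take $\mathbf{H}^{(t-1)}$ to be symmetric, as a Hessian (or its symmetric approximation) is; if needed, only its symmetric part enters the quadratic form.

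\textbf{Strict convexity.} The Hessian of $F$ is $\nabla^2 F = \mathbf{A}$, which is positive definite by the hypothesis $\mathbf{H}^{(t-1)}+\reg\mathbf{I}\succ 0$. Hence $F$ is strictly convex on $\R^\pdim$. A short direct check confirms this: for $\Dw_1\neq\Dw_2$ and $\theta\in(0,1)$, the gap $\theta F(\Dw_1)+(1-\theta)F(\Dw_2)-F(\theta\Dw_1+(1-\theta)\Dw_2)$ equals $\tfrac{\theta(1-\theta)}{2}(\Dw_1-\Dw_2)^\top\mathbf{A}(\Dw_1-\Dw_2) > 0$.

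\textbf{Minimizer.} Setting $\nabla F(\Dw) = \mathbf{A}\Dw - \mathbf{b} = 0$ yields a unique stationary point, since $\mathbf{A}\succ0$ is invertible: $\Dw^* = \mathbf{A}^{-1}\mathbf{b}$, which is exactly \eqref{eq:deltaW}. To show global optimality without relying on a general theorem, complete the square:
\[
F(\Dw) = \tfrac12(\Dw-\Dw^*)^\top\mathbf{A}(\Dw-\Dw^*) + F(\Dw^*).
\]
The first term is nonnegative and vanishes only at $\Dw=\Dw^*$, so $\Dw^*$ is the unique global minimizer. Finally, substituting $\whT{1}{t} = \whT{1}{t-1} + \Dw^*$ together with $\Dd^{(t)} = \wlT{t} - \whT{1}{t-1}$ gives \eqref{eq:hier_update}.

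The only point needing care is the symmetry of $\mathbf{H}^{(t-1)}$, which the gradient formula $\nabla(\tfrac12\Dw^\top\mathbf{H}\Dw)=\mathbf{H}\Dw$ requires. It holds for Hessians of twice continuously differentiable losses and for the diagonal approximations used in practice, so I do not expect a real obstacle.
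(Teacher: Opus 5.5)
Your proposal is correct and follows the same route as the paper: derive the stationarity condition $(\mathbf{H}^{(t-1)}+\reg\mathbf{I})\Dw = \reg\Dd^{(t)} - \mathbf{g}^{(t-1)}$, then use positive definiteness of $\mathbf{H}^{(t-1)}+\reg\mathbf{I}$ to obtain invertibility, strict convexity, and global optimality of the unique stationary point. Your completing-the-square identity and symmetry remark spell out a step the paper only asserts, namely that the stationary point is the unique global minimizer. The paper, for its part, additionally recasts the hypothesis as $\reg > -\mu_{\min}(\mathbf{H}^{(t-1)})$, which your argument does not need.
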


\begin{proof}

Taking the gradient of \eqref{eq:surrogate} with respect to $\Dw$ and setting it to zero:

\begin{align*}
    \nabla_{\Dw} &\Bigg[ \underbrace{\J(\whT{1}{t-1})}_{\text{const.}} + {\mathbf{g}^{(t-1)}}^\top \Dw + \tfrac{1}{2} \Dw^\top \mathbf{H}^{(t-1)} \Dw 
    + \tfrac{\reg}{2} \left\| \Dw - \Dd^{(t)} \right\|^2 \Bigg] = 0 \\
    &\Rightarrow \mathbf{g}^{(t-1)} + \mathbf{H}^{(t-1)} \Dw + \reg \left(\Dw - \Dd^{(t)}\right) = 0 \\
    &\Rightarrow \left(\mathbf{H}^{(t-1)} + \reg \mathbf{I}\right) \Dw = \reg \, \Dd^{(t)} - \mathbf{g}^{(t-1)} 
\end{align*}

If the smallest eigenvalue of $\mathbf{H}^{(t-1)}$ is denoted by $\mu_{\min}(\mathbf{H}^{(t-1)})$, then the eigenvalues of $\mathbf{H}^{(t-1)} + \reg \mathbf{I}$ are shifted by $\reg$, giving
\begin{equation*}
    \mu_{\min}\left(\mathbf{H}^{(t-1)} + \reg \mathbf{I}\right) = \mu_{\min}\left(\mathbf{H}^{(t-1)}\right) + \reg.
\end{equation*}
Therefore, to guarantee positive definiteness, $\reg$ must satisfy
\begin{equation*}
    \reg > -\mu_{\min}\left(\mathbf{H}^{(t-1)}\right).
\end{equation*}
If this condition holds, then $\mathbf{H}^{(t-1)} + \reg \mathbf{I} \succ 0$ and is invertible, ensuring the surrogate is strictly convex with a global minimizer and we obtain the state solution. 
\end{proof}

\subsection{Recursive Formulation of the Global Update Rule}
\label{app:recursive}

The recursive formulation presented in Lemma~\ref{lem:recursive} generalizes the one-step Taylor series-based update in Eq.~\eqref{eq:hier_update} to a two-point approximation. While the main paper focuses on the standard one-point second-order expansion around $\whT{1}{t-1}$, it may be  useful to understand how the cumulative effect of successive Taylor updates can be expressed in closed form. In particular, combining the updates at timestep $t-1$ and $t$ yields a two-point approximation in which the global model integrates information from two consecutive local models. This recursive view does not change the algorithm but highlights how HTCL accumulates second-order information over time and makes the additive structure inherent to sequential Taylor series expansions. 

\begin{lemma}[Recursive formulation]
\label{lem:recursive}
Suppose the global updates at consecutive timesteps follow the Eq.~\eqref{eq:hier_update} and assume each matrix $\mathbf{H}^{(s-1)}+\reg \mathbf{I}$ is invertible. Then the cumulative second-order integration of local models into the global weights across the two timesteps yields
\begin{equation}
\label{eq:recursive}
\begin{aligned}
\whT{1}{t} 
&= \whT{1}{t-2} 
+ (\mathbf{H}^{(t-2)} + \reg \mathbf{I})^{-1}\!\big[\reg \Dd^{(t-1)} - \mathbf{g}^{(t-2)}\big] \\
&\qquad + (\mathbf{H}^{(t-1)} + \reg \mathbf{I})^{-1}\!\big[\reg \Dd^{(t)} - \mathbf{g}^{(t-1)}\big]
\end{aligned}
\end{equation}
\end{lemma}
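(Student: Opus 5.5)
The plan is to apply the one-step update of Theorem~\ref{prop:main}, i.e.\ Eq.~\eqref{eq:hier_update}, twice in succession and substitute one into the other. No new analysis is needed: the statement records how the additive increments compose.

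First, I will instantiate Eq.~\eqref{eq:hier_update} at timestep $t-1$. This gives
\[
\whT{1}{t-1} = \whT{1}{t-2} + (\mathbf{H}^{(t-2)} + \reg \mathbf{I})^{-1}\big[\reg \Dd^{(t-1)} - \mathbf{g}^{(t-2)}\big],
\]
where $\Dd^{(t-1)} = \wlT{t-1} - \whT{1}{t-2}$ and $\mathbf{g}^{(t-2)},\mathbf{H}^{(t-2)}$ are evaluated at $\whT{1}{t-2}$. The assumed invertibility of $\mathbf{H}^{(t-2)}+\reg\mathbf{I}$ makes this expression well defined; positive definiteness is not needed here, only invertibility.

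Second, I will write Eq.~\eqref{eq:hier_update} at timestep $t$:
\[
\whT{1}{t} = \whT{1}{t-1} + (\mathbf{H}^{(t-1)} + \reg \mathbf{I})^{-1}\big[\reg \Dd^{(t)} - \mathbf{g}^{(t-1)}\big].
\]
Substituting the first identity for $\whT{1}{t-1}$ on the right-hand side yields Eq.~\eqref{eq:recursive} directly. The only care required is bookkeeping: the second increment must be left as written. Its ingredients $\Dd^{(t)}=\wlT{t}-\whT{1}{t-1}$, $\mathbf{g}^{(t-1)}$ and $\mathbf{H}^{(t-1)}$ all implicitly depend on $\whT{1}{t-1}$. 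The identity is therefore exact precisely because those quantities are kept as symbols evaluated at the intermediate iterate, not re-expanded around $\whT{1}{t-2}$.

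The main potential obstacle is interpretive rather than technical. If one wanted a genuinely ``two-point'' expression depending only on quantities at $\whT{1}{t-2}$, one would have to Taylor-expand $\mathbf{g}^{(t-1)}$ and $\mathbf{H}^{(t-1)}$ around $\whT{1}{t-2}$, which introduces third-order remainder terms. I will not do this. Instead I will state explicitly that the lemma is an exact telescoping identity, and note that by induction the same substitution gives
\[
\whT{1}{t}=\whT{1}{t-r}+\sum_{s=t-r+1}^{t}(\mathbf{H}^{(s-1)}+\reg\mathbf{I})^{-1}\big[\reg\Dd^{(s)}-\mathbf{g}^{(s-1)}\big]
\]
for any $r$.
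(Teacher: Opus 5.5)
Your proposal is correct and follows the same route as the paper: instantiate Eq.~\eqref{eq:hier_update} at timesteps $t-1$ and $t$, then substitute the expression for $\whT{1}{t-1}$ into the second. Invertibility of each $\mathbf{H}^{(s-1)}+\reg\mathbf{I}$ is needed only to make the increments well defined. Your remark that the result is an exact telescoping identity is accurate and slightly sharper than the paper's ``two-point approximation'' framing, because $\Dd^{(t)}$, $\mathbf{g}^{(t-1)}$ and $\mathbf{H}^{(t-1)}$ stay evaluated at the intermediate iterate.
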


\begin{proof}
Update at time $t-1$ using the update rule:
\[
\whT{1}{t-1} = \whT{1}{t-2} + (\mathbf{H}^{(t-2)}+\reg \mathbf{I})^{-1}\!\big[\reg \Dd^{(t-1)} - \mathbf{g}^{(t-2)}\big]
\]
Now write the update at time $t$:
\[
\whT{1}{t} = \whT{1}{t-1} + (\mathbf{H}^{(t-1)}+\reg \mathbf{I})^{-1}\!\big[\reg \Dd^{(t)} - \mathbf{g}^{(t-1)}\big]
\]
Substitute the expression for $\whT{1}{t-1}$ into the update at time $t$:
\[
\boxed{
\begin{aligned}
\whT{1}{t}
&= \whT{1}{t-2} + (\mathbf{H}^{(t-2)}+\reg \mathbf{I})^{-1}\!\big[\reg \Dd^{(t-1)} - \mathbf{g}^{(t-2)}\big] \\
&\qquad + (\mathbf{H}^{(t-1)}+\reg \mathbf{I})^{-1}\!\big[\reg \Dd^{(t)} - \mathbf{g}^{(t-1)}\big]
\end{aligned}
}
\]
Re-arranging terms yields the claimed identity \eqref{eq:recursive}. The derivation requires only linear algebraic substitution and the invertibility of the matrices $(\mathbf{H}^{(s-1)}+\reg \mathbf{I})$ to justify the closed-form updates. 
\end{proof}

\subsection{Performance Bounds for the Algorithm~\ref{alg:task-ordering}}
\label{app:performance-bounds}

\begin{theorem}[Performance Bound]
\label{prop:group_vs_random}
Let $\T$ be a set of $\ntasks$ tasks partitioned into $\ngroups$ disjoint groups $\G_1, \ldots, \G_\ngroups$, each of size $\gsize$ such that  $\ntasks \approx \ngroups \cdot \gsize$) with $\gsize!$ permutations in each group. For each group $\G_j$, let $\gpermset_j$ denote the set of its $k!$ intra-group permutations, where each $\sigma \in \gpermset_j$ specifies an ordering of tasks within $\G_j$. Let the average accuracy of group $\G_j$ achieved with permutation $\sigma$ after training the Neural Network be
$\Acc_j^{(\gperm)} = \frac{1}{\gsize} \sum_{\tsk \in \G_j} \Eval_{\tsk}\!\left(\w^{(j)}\right),$
where $\Eval_{\tsk}(\w^{(j)})$ represents performance on $\tsk$ with $\gperm$. With the proposed HTCL, the optimal accuracy within a group is achieved by design (see Algorithm~\ref{alg:task-ordering}).

\noindent\textbf{Then the improvement over arbitrary/random selection:} For any selection of intra-group permutations $(\gperm_1, \ldots, \gperm_\ngroups)$ with $\gperm_j \in \gpermset_j$,$
    \sum_{j=1}^{\ngroups} \Acc^{\mathrm{HTCL}}(\G_j) \geq \sum_{j=1}^{\ngroups} \Acc_j^{(\gperm_j)}.$ In particular, if each $\gperm_j$ is drawn uniformly at random from $\gpermset_j$,
    \[
    \sum_{j=1}^{\ngroups} \Acc^{\mathrm{HTCL}}(\G_j) \geq \E_{\gperm_1, \ldots, \gperm_\ngroups}\!\left[\sum_{j=1}^{\ngroups} \Acc_j^{(\gperm_j)}\right].
    \]
\end{theorem}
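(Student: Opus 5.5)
The argument is a max-versus-average inequality applied group by group and then summed. The one modelling point to fix first is what $\Acc^{\mathrm{HTCL}}(\G_j)$ means. By Algorithm~\ref{alg:task-ordering}, within group $\G_j$ the local models $\wlT{\gperm}$ are all started from the same hierarchical state $\whT{1}{j-1}$. Each is trained along its ordering $\gperm \in \gpermset_j$ and scored by $\Eval$. The selected ordering is $\gperm^{*}_j = \argmax_{\gperm} \Acc_j^{(\gperm)}$. I would therefore read the statement's phrase ``the optimal accuracy within a group is achieved by design'' as the identity
\[
\Acc^{\mathrm{HTCL}}(\G_j) = \Acc_j^{(\gperm^{*}_j)} = \max_{\gperm \in \gpermset_j} \Acc_j^{(\gperm)}.
\]
Here every $\Acc_j^{(\gperm)}$ is evaluated from the same starting point, so the quantities being compared are well defined.

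\textbf{Deterministic claim.} The maximum is over the finite set $\gpermset_j$ of size $\gsize!$, so it is attained. For any $\gperm_j \in \gpermset_j$ we have $\Acc^{\mathrm{HTCL}}(\G_j) \ge \Acc_j^{(\gperm_j)}$. Summing these $\ngroups$ inequalities over $j$ gives the first claim. Since each inequality holds separately, it does not matter whether the $\gperm_j$ are chosen independently or jointly.

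\textbf{Random claim.} Take the $\gperm_j$ drawn uniformly from $\gpermset_j$, with any joint distribution. The pointwise inequality holds for every realization of $(\gperm_1,\dots,\gperm_\ngroups)$, and the left-hand side is deterministic. Monotonicity of expectation then gives the bound, and linearity of expectation shows that it equals $\sum_j \frac{1}{\gsize!}\sum_{\gperm\in\gpermset_j}\Acc_j^{(\gperm)}$. Equivalently, within each group the maximum is at least the average.

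\textbf{Main obstacle.} The only delicate step is the conditioning in the setup, not the inequality. $\Acc_j^{(\gperm)}$ depends on the initialization $\whT{1}{j-1}$, and that initialization depends on the choices made in earlier groups. A literal comparison with a random policy would evaluate later groups from different starting points. I would handle this by stating explicitly that the comparison is made group-wise, conditional on the same starting hierarchical state, as the algorithm's per-group selection does. Under that convention the bound holds. I would also note that if $\Eval$ on held-out data differs from the selection criterion, the guarantee only holds for the selection metric itself.
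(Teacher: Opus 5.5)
Your proposal is correct and follows the paper's proof: identify $\Acc^{\mathrm{HTCL}}(\G_j)$ with $\max_{\gperm \in \gpermset_j} \Acc_j^{(\gperm)}$, bound it below by any $\Acc_j^{(\gperm_j)}$, sum over the groups, and take expectations using the fact that the left-hand side is deterministic. Your point that each per-group comparison must be made from the same starting state $\whT{1}{j-1}$ makes explicit something the paper leaves implicit; it does not change the argument.
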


\begin{proof}
\textbf{Lower Bound.} For each group $\G_j$ and any chosen permutation $\gperm_j \in \gpermset_j$, the definition of the maximum gives:
\[
\Acc^{\mathrm{HTCL}}(\G_j) = \max_{\gperm \in \gpermset_j} \Acc_j^{(\gperm)} \geq \Acc_j^{(\gperm_j)}
\]
Summing over all $\ngroups$ groups:
\[
\sum_{j=1}^{\ngroups} \Acc^{\mathrm{HTCL}}(\G_j) \geq \sum_{j=1}^{\ngroups} \Acc_j^{(\gperm_j)}
\]
This holds for any fixed choice of $(\gperm_1, \ldots, \gperm_\ngroups)$. Taking expectation over random selections $\gperm_j \sim \mathrm{Uniform}(\gpermset_j)$ preserves the inequality (since the left-hand side is deterministic):
\[
\boxed{\sum_{j=1}^{\ngroups} \Acc^{\mathrm{HTCL}}(\G_j) \geq \E_{\gperm_1, \ldots, \gperm_\ngroups}\!\left[\sum_{j=1}^{\ngroups} \Acc_j^{(\gperm_j)}\right]}
\]

\end{proof}

\subsection{Time and Space Complexity Analysis}
\label{app:complexity}

This appendix provides the detailed derivations supporting Remark~\ref{rem:complexity}.  
We analyze the exact and practical computational complexity of HTCL, including permutation search, 
curvature computation, and hierarchical integration, and compare these with  
standard single-model continual learning approaches.

\subsubsection{Exact Complexity of the Global Integration}

Recall that the global update for a single level is
\[
\whT{1}{t} 
= \whT{1}{t-1} 
+ (\mathbf{H}^{(t-1)} + \reg \mathbf{I})^{-1}\!\big[\reg(\wlT{t} - \whT{1}{t-1}) - \mathbf{g}^{(t-1)}\big].
\]

\begin{proposition}[Exact computational cost]
Forming and inverting the dense Hessian $\mathbf{H}^{(t-1)}\in\R^{\pdim\times \pdim}$ yields:
\[
\text{Time}_{\mathrm{exact}} = \Theta(\pdim^3), 
\qquad
\text{Memory}_{\mathrm{exact}} = \Theta(\pdim^2).
\]
\end{proposition}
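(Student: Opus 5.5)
The plan is to establish the two bounds separately, by explicit operation counts for the update
\[
\whT{1}{t} = \whT{1}{t-1} + (\mathbf{H}^{(t-1)} + \reg \mathbf{I})^{-1}\big[\reg(\wlT{t} - \whT{1}{t-1}) - \mathbf{g}^{(t-1)}\big],
\]
with the model fixed as dense linear algebra on a $\pdim\times\pdim$ matrix. For each resource I will give a matching upper and lower bound.

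\textbf{Memory.} Storing $\mathbf{H}^{(t-1)}$ densely requires $\pdim^2$ entries. This is a lower bound, since the exact update must in general hold all of them: a generic Hessian has no structure to exploit. For the upper bound, Gaussian elimination or a Cholesky factorization (valid because $\mathbf{H}^{(t-1)}+\reg\mathbf{I}\succ 0$ under the hypothesis of Theorem~\ref{prop:main}) can be carried out in place. The only extra storage is $O(\pdim)$ for the vectors $\mathbf{g}^{(t-1)}$, $\Dd^{(t)}$, and the right-hand side. Hence memory is $\Theta(\pdim^2)$.

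\textbf{Time.} I will proceed in three steps.
\begin{enumerate}
\item Form the right-hand side $\reg\Dd^{(t)}-\mathbf{g}^{(t-1)}$, which costs $O(\pdim)$.
\item Add $\reg\mathbf{I}$ to the Hessian, which costs $O(\pdim)$.
\item Factor or invert the $\pdim\times\pdim$ matrix by Cholesky or LU in $\tfrac{1}{3}\pdim^3+O(\pdim^2)$ flops, then perform two triangular solves in $O(\pdim^2)$.
\end{enumerate}
Forming the Hessian itself costs $\pdim$ Hessian–vector products, each a backward pass of cost proportional to the per-sample gradient cost times the buffer size. I will treat that data-dependent factor as separate, or note that it is $O(\pdim^2)$ per pass over a fixed batch. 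Step~3 therefore dominates, and the total is $O(\pdim^3)$.

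\textbf{Main obstacle: the lower bound on time.} The difficulty is that the $\Theta(\pdim^3)$ claim is only true relative to the computational model. Fast matrix multiplication yields inversion in $O(\pdim^\omega)$ with $\omega<3$. I would therefore state explicitly that the bound is for standard dense factorization methods, such as Gaussian elimination or Cholesky, which are what is used in practice. In that model the leading flop count is exactly cubic: elimination step $i$ updates an $(\pdim-i)\times(\pdim-i)$ trailing block, so the total is $\sum_i (\pdim-i)^2 = \Theta(\pdim^3)$. That sum gives the matching lower bound. Finally, I would multiply by $\nlev$ to recover the multi-level statement in Remark~\ref{rem:complexity}.
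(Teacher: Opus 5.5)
Your proposal is correct and takes essentially the same route as the paper's proof sketch. Memory is the $\pdim^2$ entries of the dense Hessian, and time is dominated by the $\Theta(\pdim^3)$ dense Cholesky/Gaussian-elimination solve, with Hessian formation ($\Theta(\pdim^2\cdot\text{samples})$) treated as lower order. Your explicit caveats sharpen assumptions the paper leaves implicit: the cubic lower bound holds only for elimination-based methods (not fast matrix multiplication), and the solve dominates only if the number of samples is bounded (or $O(\pdim)$). One minor slip: LU costs $\tfrac{2}{3}\pdim^3$ flops, and only Cholesky is $\tfrac13\pdim^3$, which does not affect the $\Theta$ bound.
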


\begin{proof}[Proof sketch]
A full Hessian contains $\pdim^2$ entries and computing all second derivatives requires  
$\Theta(\pdim^2)$ memory and up to $\Theta(\pdim^2 \!\cdot\! \text{samples})$ time.  
Solving $(\mathbf{H}+\reg \mathbf{I})x=b$ via dense Cholesky or Gaussian elimination requires  
$\Theta(\pdim^3)$ time.  
\end{proof}

\begin{corollary}[Hierarchy cost]
For an $\nlev$-level hierarchy:
\[
\text{Time}_{\mathrm{exact}}^{(\nlev)} = \Theta(\nlev \pdim^3),
\qquad 
\text{Memory}_{\mathrm{exact}}^{(\nlev)} = \Theta(\nlev \pdim^2).
\]
\end{corollary}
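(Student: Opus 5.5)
The statement to prove is the last one shown, the Corollary on hierarchy cost: the exact cost of an $\nlev$-level hierarchy is $\Theta(\nlev\pdim^3)$ time and $\Theta(\nlev\pdim^2)$ memory. I would derive it directly from the preceding Proposition on exact computational cost, applied once per level. By \eqref{eq:multi_hierarchy}, each level $i=1,\ldots,\nlev$ performs exactly one update of the same form as \eqref{eq:hier_update}. Level $i$ uses its own gradient $\mathbf{g}_i^{(t-1)}$, its own curvature $\mathbf{H}_i^{(t-1)}$, and the difference $\Dd_i^{(t)}$. The difference is formed from $\whT{i-1}{t}$, which is already computed at level $i-1$ (or from $\wlT{t}$ when $i=1$).

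For the upper bound I would first note that computing $\Dd_i^{(t)}$ and $\reg_i\Dd_i^{(t)}-\mathbf{g}_i^{(t-1)}$ costs only $O(\pdim)$. The Proposition then bounds the cost of forming $\mathbf{H}_i^{(t-1)}$ and solving $(\mathbf{H}_i^{(t-1)}+\reg_i\mathbf{I})x=b$ by $O(\pdim^3)$ time and $O(\pdim^2)$ memory. Summing over the $\nlev$ levels gives $O(\nlev\pdim^3)$ time. For memory I would use the paper's setting in which every level maintains its own curvature matrix, so all $\nlev$ dense matrices are stored simultaneously; this gives $O(\nlev\pdim^2)$, plus $O(\nlev\pdim)$ for the weight vectors, which is dominated.

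For the matching lower bounds I would argue that the levels are sequentially dependent: $\Dd_i^{(t)}$ requires $\whT{i-1}{t}$. Each level's linear system involves a generically different dense matrix $\mathbf{H}_i^{(t-1)}$, evaluated at a different point $\whT{i}{t-1}$, so no factorization can be shared across levels. Each level therefore pays the $\Theta(\pdim^3)$ dense-solve cost from the Proposition, and storing $\nlev$ independent dense matrices needs $\Theta(\nlev\pdim^2)$ entries.

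The main obstacle is the lower-bound direction. $\Theta(\pdim^3)$ is only meaningful relative to the cost model of the Proposition, namely dense Cholesky or Gaussian elimination, not fast matrix multiplication, and it assumes the Hessians are not shared or recomputed on the fly. I would therefore state explicitly that the Corollary inherits the Proposition's model of computation. Under that model the proof is just additivity over levels; I would not attempt an information-theoretic lower bound.
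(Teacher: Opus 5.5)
Your proposal is correct and matches the paper's intended argument. The paper gives no separate proof: it treats the Corollary as an immediate consequence of the exact-cost Proposition, applied once per level (each level performs one update of the multi-level form, with its own dense $\mathbf{H}_i^{(t-1)}$) and summed over the $L$ levels, with the $O(d)$ vector operations dominated. Your added remarks are a welcome tightening of that argument, not a different route: the memory bound relies on each level storing its own curvature matrix, and the $\Theta$ bounds are relative to the dense-factorization cost model inherited from the Proposition.
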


\subsubsection{Practical Complexity via Standard Approximations}

In practice, HTCL uses scalable second-order approximations, which reduce the cubic  
cost of exact Hessian inversion to near-linear time and memory.

\paragraph{Diagonal / Fisher approximation.}  
In our experiments we used this approximation. This approach stores only the diagonal of $\mathbf{H}$ or the Fisher information:
\[
\text{Time}_{\mathrm{diag}}=\Theta(\pdim),
\qquad 
\text{Memory}_{\mathrm{diag}}=\Theta(\pdim).
\]

\paragraph{Low-rank curvature approximation.}  
If $\mathbf{H}$ is approximated by a low-rank factorization of rank $r\ll \pdim$,  
then storing and applying the approximation requires:
\[
\text{Memory}_{\mathrm{low\text{-}rank}} = \Theta(r\pdim), 
\qquad
\text{Time}_{\mathrm{low\text{-}rank}} = \Theta(r\pdim).
\]

These are the two approximations used in our implementation.

\subsubsection{Cost of Permutation Search}

Let each task group contain $\gsize$ tasks.  
For each permutation $\perm\in \gpermset_\gsize$, the local model is trained for $N_{\text{iter}}$ minibatch updates, 
each costing $T_{\text{step}}$.  
Let $B$ denote the minibatch size.

\begin{proposition}[Exact permutation-search cost]
\[
\text{Time}_{\mathrm{perm,exact}}
= \gsize!\, N_{\text{iter}}\, T_{\text{step}}
= \Theta(\gsize!\, E \cdot \text{\#samples} \cdot \pdim).
\]
\end{proposition}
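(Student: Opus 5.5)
The statement is the exact permutation-search cost, $\text{Time}_{\mathrm{perm,exact}} = \gsize!\, N_{\text{iter}}\, T_{\text{step}} = \Theta(\gsize!\, E \cdot \text{\#samples} \cdot \pdim)$. I would prove it by direct accounting over the inner loop of Algorithm~\ref{alg:task-ordering} for a single group $\G_t$. The loop runs over every $\gperm \in \gpermset_t$, and $|\gpermset_t| = \gsize!$ because these are the orderings of $\gsize$ distinct tasks. Each iteration calls $\TrainSeq(\gperm, \whT{1}{t-1})$, which by the setup consists of $N_{\text{iter}}$ minibatch updates costing $T_{\text{step}}$ each. The iterations are independent: each starts from the same initialization $\whT{1}{t-1}$ and shares no computation with the others. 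Summing gives exactly $\gsize!\, N_{\text{iter}}\, T_{\text{step}}$.

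The $\Theta$ form follows by expressing the two factors in primitive quantities.

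\begin{itemize}
\item With $E$ epochs over the group's data of size $\#\text{samples}$ and minibatch size $B$, we have $N_{\text{iter}} = \Theta(E \cdot \#\text{samples}/B)$.
\item For a network with $\pdim$ parameters, one forward plus backward pass on a minibatch costs $\Theta(B\pdim)$. This is the standard fact that backpropagation costs a constant multiple of the forward pass, which is linear in the parameter count per sample. The optimizer step adds only $\Theta(\pdim)$. Hence $T_{\text{step}} = \Theta(B\pdim)$.
\end{itemize}

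Multiplying, $B$ cancels and we obtain $\Theta(\gsize!\, E\cdot\#\text{samples}\cdot \pdim)$.

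The $\Eval$ call per permutation would be treated as lower order. It is a single forward pass over the evaluation data, costing $O(\#\text{samples}\cdot\pdim)$, which is dominated as long as $E \ge 1$. So it adds at most a constant factor and does not change the $\Theta$ class.

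The main point needing care is the per-step cost $T_{\text{step}} = \Theta(B\pdim)$. It holds for dense and convolutional layers with standard first-order optimizers, but the local learner is an arbitrary CL method: replay adds buffer samples to the minibatch, and EWC adds a $\Theta(\pdim)$ penalty. I would state these assumptions explicitly. Replay only changes the effective batch size by a constant factor, and such penalties are $O(\pdim)$, so the bound is preserved.
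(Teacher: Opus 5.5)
Your proof is correct and uses the same direct accounting as the paper's sketch: $\gsize!$ independent training runs, each consisting of $N_{\text{iter}}$ steps with $T_{\text{step}}=\Theta(B\pdim)$. You go slightly further than the paper by making explicit $N_{\text{iter}}=\Theta(E\cdot\text{\#samples}/B)$ so that $B$ cancels, and by accounting for the $\Eval$ call and for replay or penalty overheads, none of which the paper's sketch addresses; one small correction is that $T_{\text{step}}=\Theta(B\pdim)$ for convolutional layers holds only if you treat the spatial resolution as a constant.
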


\begin{proof}[Proof sketch]
Each permutation requires $N_{\text{iter}}$ updates, each with forward--backward cost  
$T_{\text{step}} = \Theta(B\pdim)$.  
There are $\gsize!$ permutations.  
When models are evaluated sequentially, memory remains $\Theta(\pdim)$.  
\end{proof}

\paragraph{Practical setting.}  
We restrict the group size to $\gsize < 5$, so that  
\[
\gsize! \le 24
\]
is a small constant.  
Thus, the permutation-search overhead does not grow with the number of tasks and simplifies to:
\[
\text{Time}_{\mathrm{perm,practical}}
= \mathcal{O}(N_{\text{iter}}\,T_{\text{step}}),
\]
with no additional asymptotic dependence on $\gsize$.

\subsubsection{Total Practical Complexity of HTCL}

Combining bounded permutation search with curvature-regularized global integration gives:
\[
\boxed{
\text{Time}_{\mathrm{HTCL}} 
=
\underbrace{\mathcal{O}(N_{\text{iter}}\,T_{\text{step}})}_{\text{permutation search } (\gsize! \le 24)}
+
\underbrace{\mathcal{O}(\pdim)}_{\text{curvature update (diag or low-rank)}}
}
\]

\[
\boxed{
\text{Memory}_{\mathrm{HTCL}}
=
\Theta(\nlev \pdim)
+
\begin{cases}
\Theta(\pdim), & \text{(diagonal approximation)},\\[3pt]
\Theta(r\pdim), & \text{(low-rank approximation)}.
\end{cases}
}
\]

Typical configurations use  
$\gsize < 5$ (hence $\gsize!\le 24$),  
$r \ll \pdim$,  
and $\nlev \in \{2,3\}$,  
making HTCL comparable in cost to replay or EWC.

\subsubsection{Comparison with Standard Continual Learning Methods}

\paragraph{Regularization-based CL (EWC, SI, MAS).}  
Extra memory $\Theta(\pdim)$; extra per-step time $\Theta(\pdim)$. Lightweight but order-sensitive.

\paragraph{Replay-based CL (ER, DER, iCaRL).}  
Memory $\Theta(B_{\text{buf}}\cdot\text{sample size})$;  
extra time $\Theta(B_{\text{buf}}\cdot \pdim)$ due to rehearsal examples.

\paragraph{Architecture-based CL.}  
Memory grows with tasks: $\Theta(\sum_t \pdim_t)$;  
time increases with model expansion: $\Theta(\pdim_{\text{total}})$.

\paragraph{HTCL vs. standard CL.}  
HTCL trades moderate extra computation (bounded permutation search + curvature integration)  
for improved task-order robustness and multi-timescale consolidation, while maintaining  
near-linear practical complexity through diagonal or low-rank second-order approximations.

\subsection{Impact of Task Ordering on Model's Performance}
\label{app: task-order-impact}
In this section, we demonstrate the impact of task ordering on model performance across multiple domains, such as image classification, text classification, graph-based node classification, and synthetic sine regression tasks. Additionally, we examine the role of task order in facilitating positive transfer and mitigating interference.

\subsubsection{Illustrative Example: MNIST with BCL~\cite{raghavan2021formalizing}}
In this setup, a model sequentially learns five tasks in a BCL framework, each corresponding to different subsets of digits in the MNIST dataset. We analyze model performance under two distinct task sequences and log accuracy at each learning stage. Table~\ref{tab:mnist_results} presents task-wise accuracy values on corresponding test datasets for two task orders: Sequence A: $\langle 5, 1, 2, 3, 4 \rangle$ and Sequence B: $\langle1, 5, 2, 3, 4 \rangle$.

\begin{table}[htbp]
\caption{Task order impact on MNIST with BCL. The table reports task-wise accuracy at each stage of training. Variability in final accuracy across task orders indicates sensitivity to task order.}
\label{tab:mnist_results}
\resizebox{\textwidth}{!}{%
\begin{tabular}{cc|ccccc|c}
\textbf{} &
  \textbf{} &
  \multicolumn{5}{c|}{\textbf{Accuracy}} &
  \textbf{} \\ \hline
\rowcolor[HTML]{FFFFFF} 
\multicolumn{1}{c|}{\cellcolor[HTML]{FFFFFF}{\color[HTML]{000000} \textbf{Sequence}}} &
  {\color[HTML]{000000} \textbf{Learned Task}} &
  {\color[HTML]{000000} \textbf{Task\_1}} &
  {\color[HTML]{000000} \textbf{Task\_2}} &
  {\color[HTML]{000000} \textbf{Task\_3}} &
  {\color[HTML]{000000} \textbf{Task\_4}} &
  {\color[HTML]{000000} \textbf{Task\_5}} &
  {\color[HTML]{000000} \textbf{Average Accuracy}} \\ \hline
\rowcolor[HTML]{FFFFFF} 
\multicolumn{1}{c|}{\cellcolor[HTML]{FFFFFF}{\color[HTML]{000000} }} &
  {\color[HTML]{000000} 5} &
  {\color[HTML]{000000} 36.0} &
  {\color[HTML]{000000} 53.0} &
  {\color[HTML]{000000} 17.0} &
  {\color[HTML]{000000} 74.0} &
  {\color[HTML]{000000} 92.0} &
  {\color[HTML]{000000} 54.4} \\
\rowcolor[HTML]{FFFFFF} 
\multicolumn{1}{c|}{\cellcolor[HTML]{FFFFFF}{\color[HTML]{000000} }} &
  {\color[HTML]{000000} 1} &
  {\color[HTML]{000000} 100.0} &
  {\color[HTML]{000000} 53.0} &
  {\color[HTML]{000000} 17.0} &
  {\color[HTML]{000000} 79.0} &
  {\color[HTML]{000000} 87.0} &
  {\color[HTML]{000000} 67.2} \\
\rowcolor[HTML]{FFFFFF} 
\multicolumn{1}{c|}{\cellcolor[HTML]{FFFFFF}{\color[HTML]{000000} }} &
  {\color[HTML]{000000} 2} &
  {\color[HTML]{000000} 99.0} &
  {\color[HTML]{000000} 96.0} &
  {\color[HTML]{000000} 49.0} &
  {\color[HTML]{000000} 94.0} &
  {\color[HTML]{000000} 96.0} &
  {\color[HTML]{000000} 86.8} \\
\rowcolor[HTML]{FFFFFF} 
\multicolumn{1}{c|}{\cellcolor[HTML]{FFFFFF}{\color[HTML]{000000} }} &
  {\color[HTML]{000000} 3} &
  {\color[HTML]{000000} 96.0} &
  {\color[HTML]{000000} 90.0} &
  {\color[HTML]{000000} 93.0} &
  {\color[HTML]{000000} 87.0} &
  {\color[HTML]{000000} 89.0} &
  {\color[HTML]{000000} 91.0} \\
\rowcolor[HTML]{FFFFFF} 
\multicolumn{1}{c|}{\multirow{-5}{*}{\cellcolor[HTML]{FFFFFF}{\color[HTML]{000000} \textbf{A}}}} &
  {\color[HTML]{000000} \textbf{4}} &
  {\color[HTML]{000000} \textbf{97.0}} &
  {\color[HTML]{000000} \textbf{92.0}} &
  {\color[HTML]{000000} \textbf{88.0}} &
  {\color[HTML]{000000} \textbf{98.0}} &
  {\color[HTML]{000000} \textbf{95.0}} &
  {\color[HTML]{000000} \textbf{94.0}} \\ \hline
\rowcolor[HTML]{FFFFFF} 
\multicolumn{1}{c|}{\cellcolor[HTML]{FFFFFF}{\color[HTML]{000000} }} &
  {\color[HTML]{000000} 1} &
  {\color[HTML]{000000} 99.0} &
  {\color[HTML]{000000} 43.0} &
  {\color[HTML]{000000} 41.0} &
  {\color[HTML]{000000} 67.0} &
  {\color[HTML]{000000} 51.0} &
  {\color[HTML]{000000} 60.2} \\
\rowcolor[HTML]{FFFFFF} 
\multicolumn{1}{c|}{\cellcolor[HTML]{FFFFFF}{\color[HTML]{000000} }} &
  {\color[HTML]{000000} 5} &
  {\color[HTML]{000000} 98.0} &
  {\color[HTML]{000000} 53.0} &
  {\color[HTML]{000000} 11.0} &
  {\color[HTML]{000000} 75.0} &
  {\color[HTML]{000000} 94.0} &
  {\color[HTML]{000000} 66.2} \\
\rowcolor[HTML]{FFFFFF} 
\multicolumn{1}{c|}{\cellcolor[HTML]{FFFFFF}{\color[HTML]{000000} }} &
  {\color[HTML]{000000} 3} &
  {\color[HTML]{000000} 98.0} &
  {\color[HTML]{000000} 97.0} &
  {\color[HTML]{000000} 51.0} &
  {\color[HTML]{000000} 74.0} &
  {\color[HTML]{000000} 93.0} &
  {\color[HTML]{000000} 85.8} \\
\rowcolor[HTML]{FFFFFF} 
\multicolumn{1}{c|}{\cellcolor[HTML]{FFFFFF}{\color[HTML]{000000} }} &
  {\color[HTML]{000000} 3} &
  {\color[HTML]{000000} 96.0} &
  {\color[HTML]{000000} 95.0} &
  {\color[HTML]{000000} 94.0} &
  {\color[HTML]{000000} 63.0} &
  {\color[HTML]{000000} 45.0} &
  {\color[HTML]{000000} 78.6} \\
\rowcolor[HTML]{FFFFFF} 
\multicolumn{1}{c|}{\multirow{-5}{*}{\cellcolor[HTML]{FFFFFF}{\color[HTML]{000000} \textbf{B}}}} &
  {\color[HTML]{000000} \textbf{4}} &
  {\color[HTML]{000000} \textbf{92.0}} &
  {\color[HTML]{000000} \textbf{90.0}} &
  {\color[HTML]{000000} \textbf{86.0}} &
  {\color[HTML]{000000} \textbf{99.0}} &
  {\color[HTML]{000000} \textbf{91.0}} &
  {\color[HTML]{000000} \textbf{91.6}}
\end{tabular}%
}
\end{table}

To construct this table, we sequentially present tasks in a specified order, training the model on one task at a time. After training each task, we evaluate the model's performance on all tasks (previously seen or unseen) and record the accuracy. The accuracy values in the table represent the model's ability to retain knowledge as it progresses through the task sequence. The last column, labeled Average Accuracy, represents the model's overall performance across all tasks at each stage of training.

As training progresses, learning a new task typically leads to an immediate increase in accuracy for that specific task. However, previously learned tasks may experience either a stability effect (no change), positive transfer (accuracy improvement), or interference (accuracy degradation). The table reflects this effect, where the accuracy values evolve as more tasks are introduced. When the model completes training on Task 4, it has been exposed to all tasks, and its performance stabilizes across them. The lower half of the table follows the same calculation process for Sequence B, demonstrating how a different task order can result in varied accuracy trends. A comparison of task-wise accuracy and overall average accuracy between Sequence A and Sequence B clearly reveals variability in final performance, reinforcing the impact of task ordering on CL.

Since Task 4 appears at the end of both sequences, we expect its accuracy to be relatively high, as the model has just learned it. This is evident in rows 5 and 10 of Table~\ref{tab:mnist_results}. However, accuracy for other tasks varies significantly depending on their position in the sequence, demonstrating how order influences retention. Comparing Sequence A and Sequence B, we observe two key trends:

\begin{itemize}
    \item The accuracy of each task after it has been trained is shown in columns 3-7 and indicate a significant variation in accuracy~(approximately 2-5\%).
    \item The final average accuracy calculated after learning the final task differs between sequences (94\% and 91.6\%), suggesting that task order influences the performance.
\end{itemize}

This illustrative example demonstrates that task ordering affects both retention and adaptation. In subsequent sections, we extend this analysis across multiple datasets and learning paradigms.

\subsubsection{Impact of Task Number on Forgetting}
\label{app:impact-example}

\begin{figure}[htbp]
    \centering
    \includegraphics[scale=0.1]{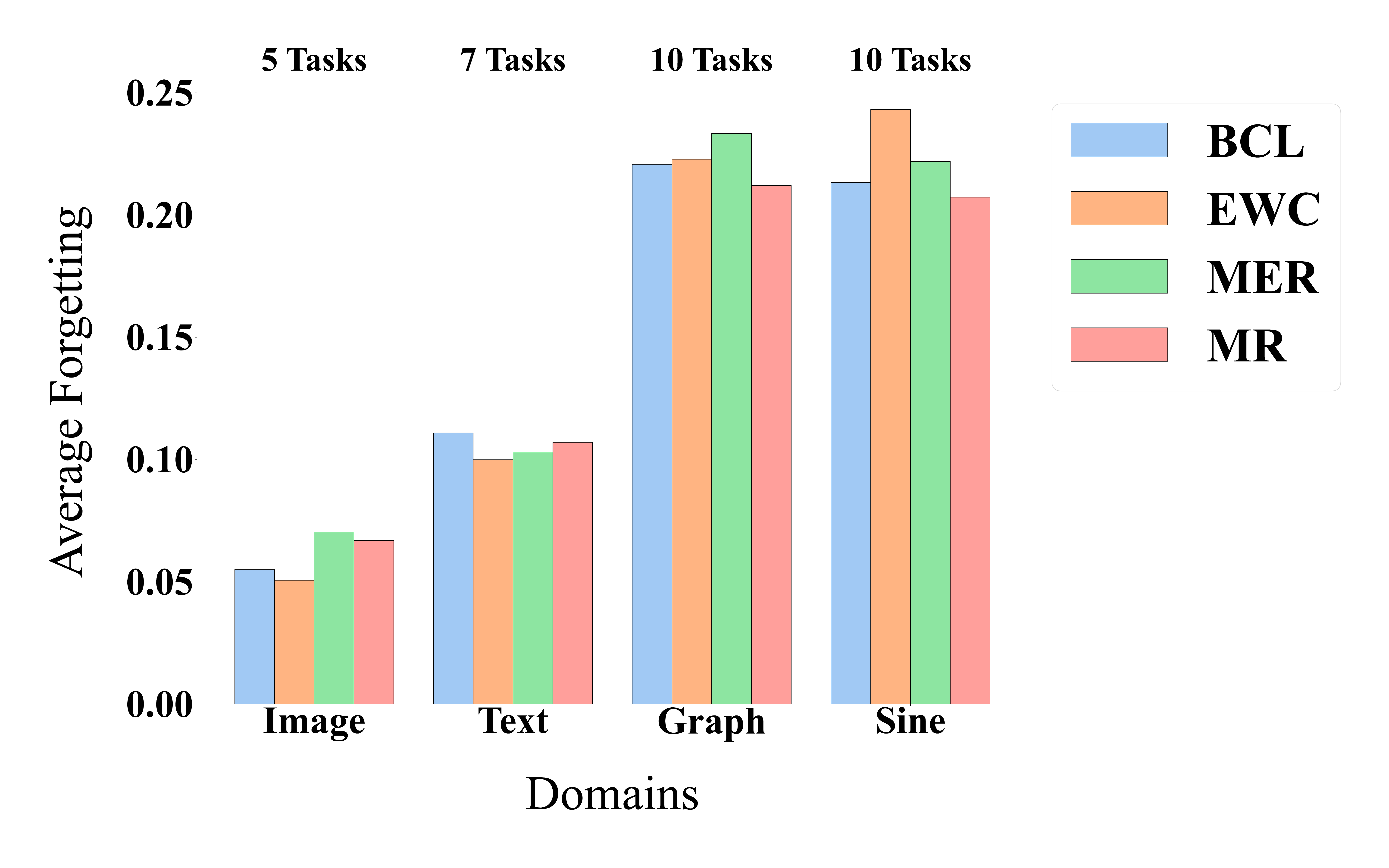}
    \caption{Impact of the number of tasks on forgetting in CL across different domains. Forgetting increases as the number of tasks grows, with models experiencing up to 20\% accuracy drop in a 10-task setting.}
    \label{fig:forgetting_vs_tasks}
\end{figure}

Figure~\ref{fig:forgetting_vs_tasks} illustrates how forgetting exacerbates as the number of tasks increases. As we move left to right of the plot, the tasks increase. Corresponding to an increase in the number of tasks, the height of the bar plots~(which indicates the extent of forgetting) also increases. Moreover, this is true across multiple domains such as text, graph, image and numeric. This observation highlights the persistent challenge of knowledge retention in CL.

The trend depicted in Figure~\ref{fig:forgetting_vs_tasks} correlates with our hypothesis. That is, as the number of tasks increase, the likelihood of encountering suboptimal task sequences grows, leading to:
\begin{itemize}
    \item \textbf{Greater Forgetting}: Some sequences introduce tasks with highly dissimilar representations, accelerating catastrophic forgetting.
    \item \textbf{Stronger Interference}: Certain task orders disrupt previously learned representations, causing degradation in overall model performance.
\end{itemize}

To further investigate this phenomenon, we analyze model performance across 50 distinct task sequences, measuring accuracy/mse trends and loss variations. The box plots corresponding to this analysis is plotted in Fig. \ref{fig:results}. We do this analysis across four methods and four domains. 

\begin{figure}[t]
    \centering
    \resizebox{\columnwidth}{!}{%
    \begin{tabular}{p{0.12\columnwidth} p{0.22\columnwidth} p{0.22\columnwidth} p{0.22\columnwidth} p{0.22\columnwidth}}
        & \textbf{Image} & \textbf{Text} & \textbf{Graph} & \textbf{Sine} \\
        \textbf{EWC} &
        \includegraphics[width=\linewidth]{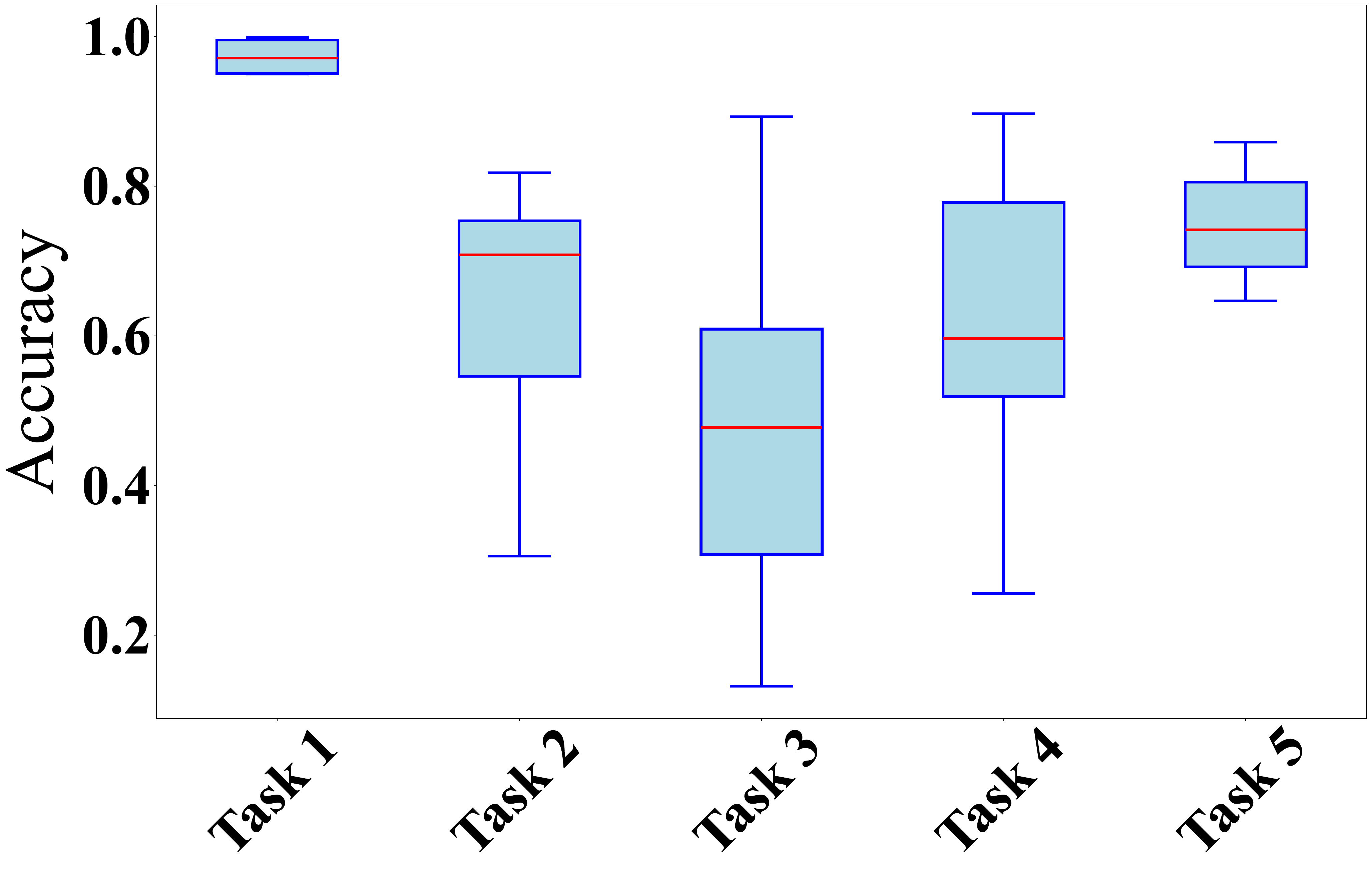} &
        \includegraphics[width=\linewidth]{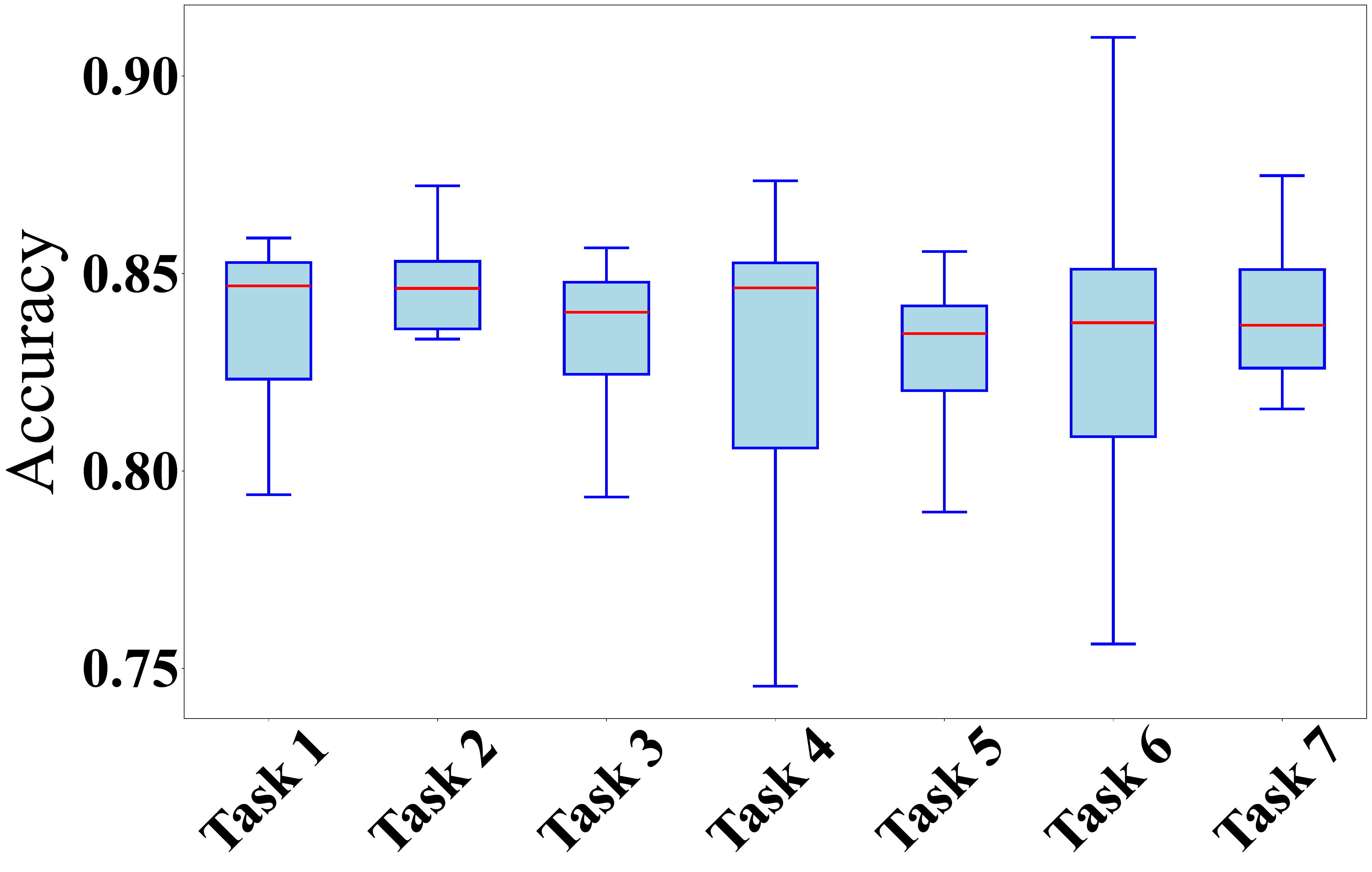} &
        \includegraphics[width=\linewidth]{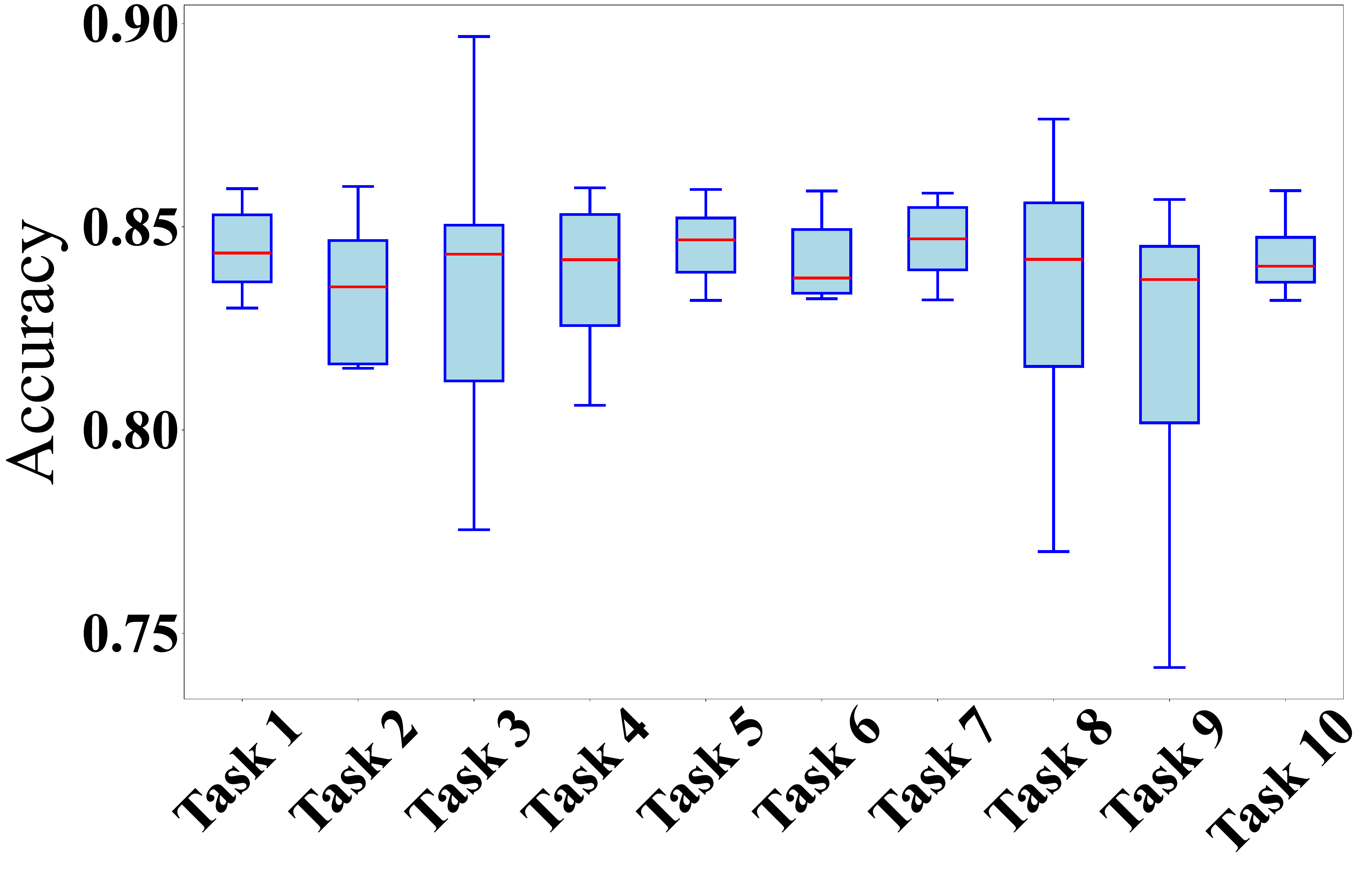} &
        \includegraphics[width=\linewidth]{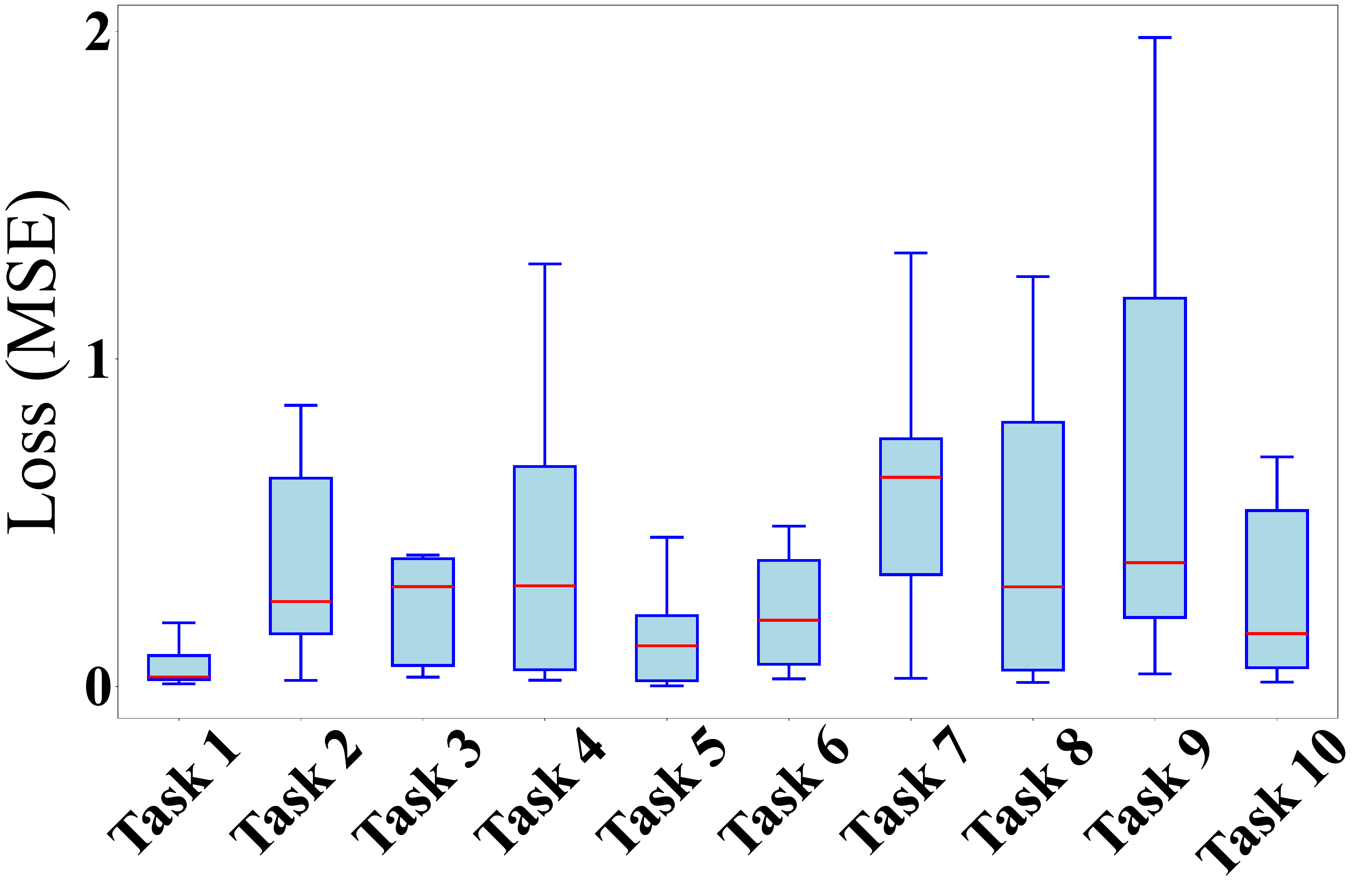} \\
        \textbf{ER} &
        \includegraphics[width=\linewidth]{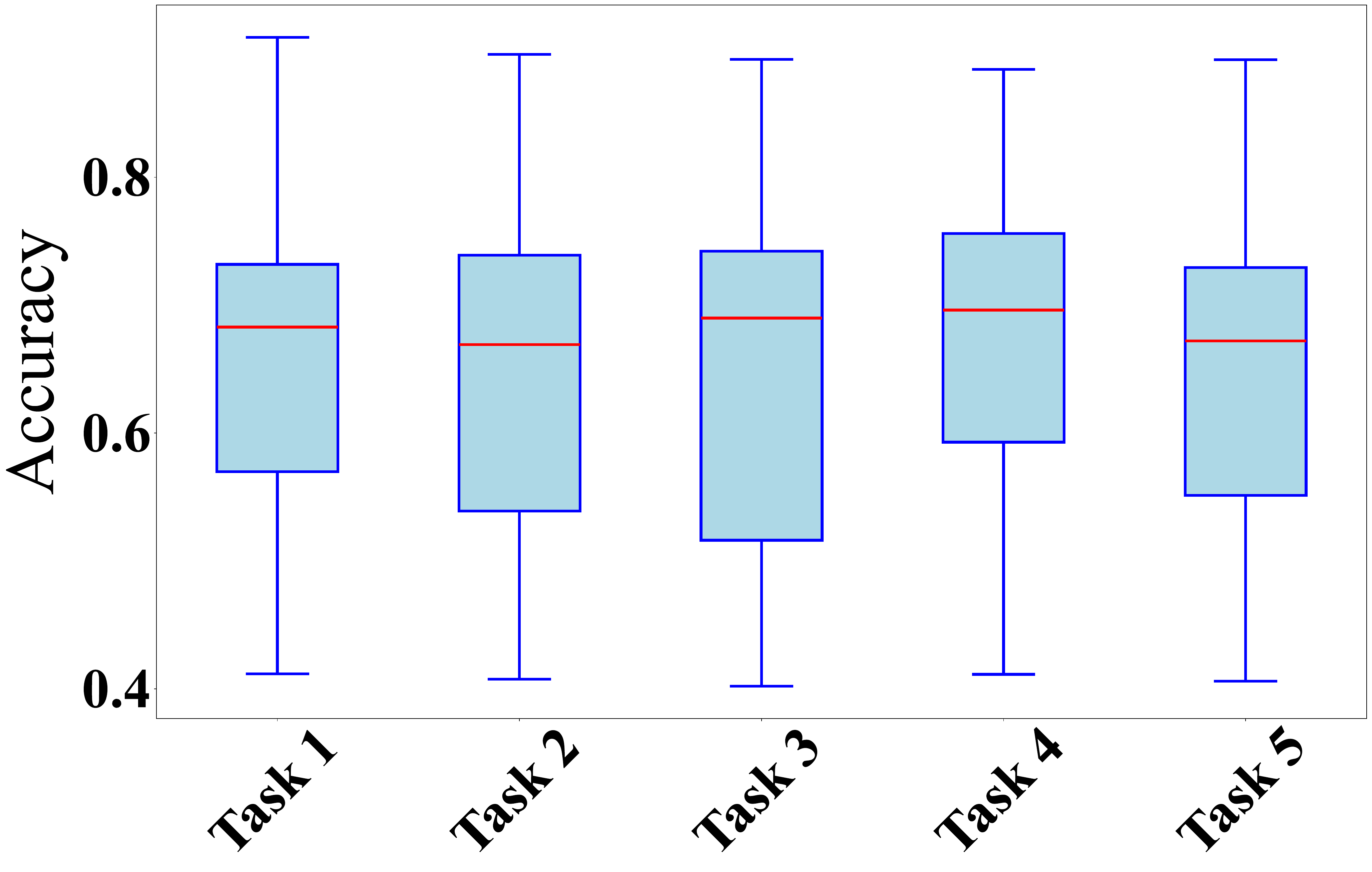} &
        \includegraphics[width=\linewidth]{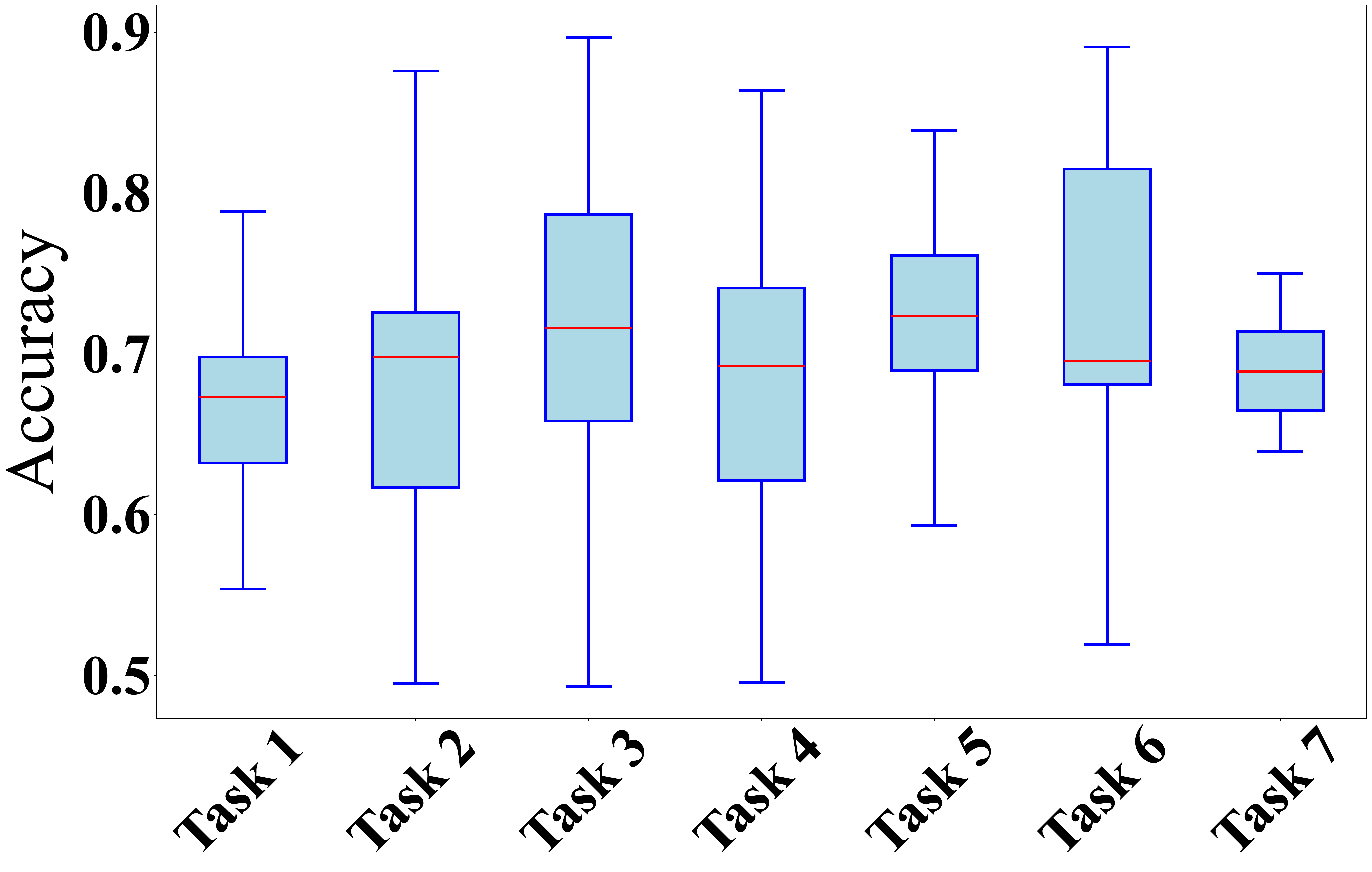} &
        \includegraphics[width=\linewidth]{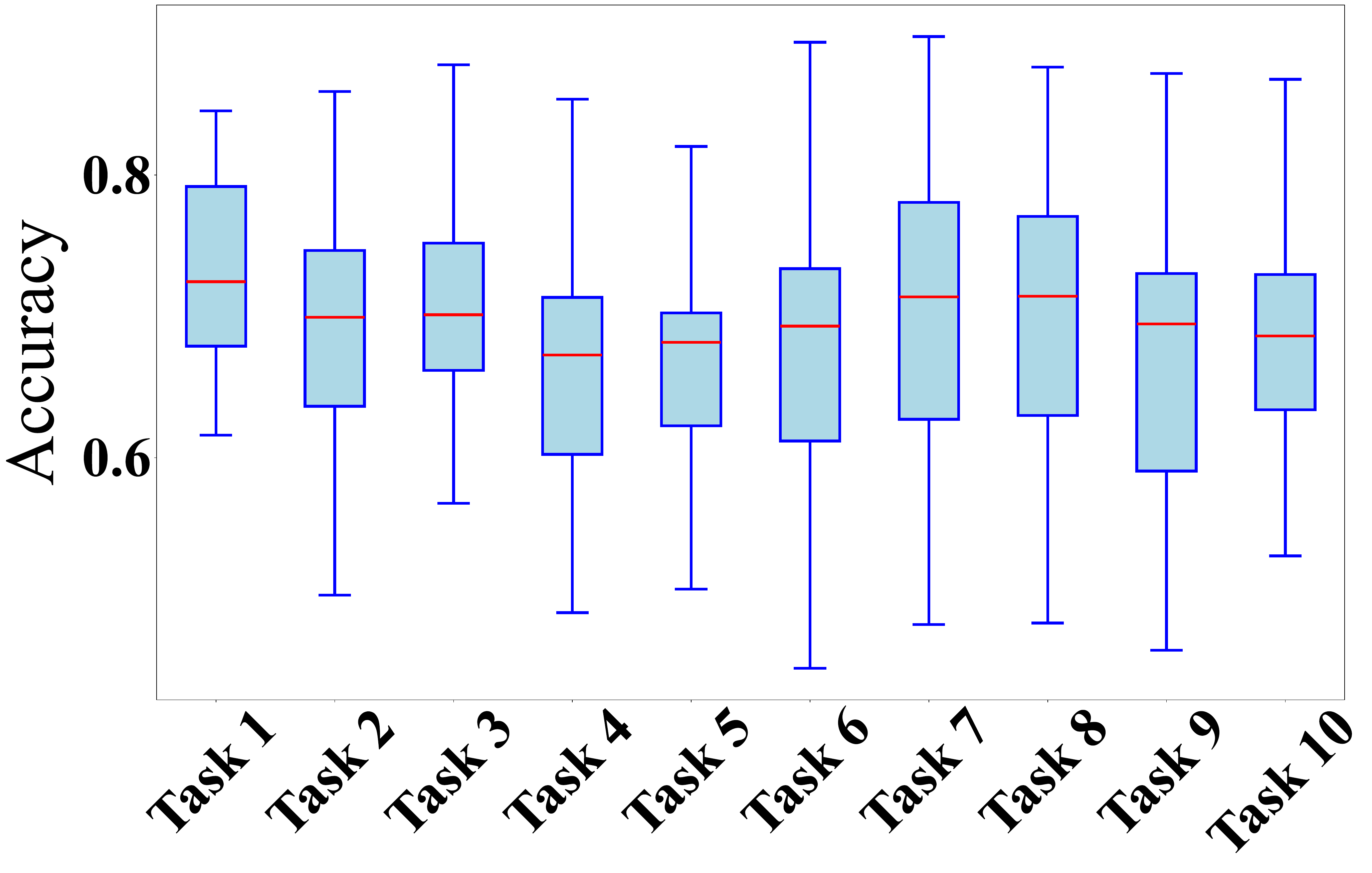} &
        \includegraphics[width=\linewidth]{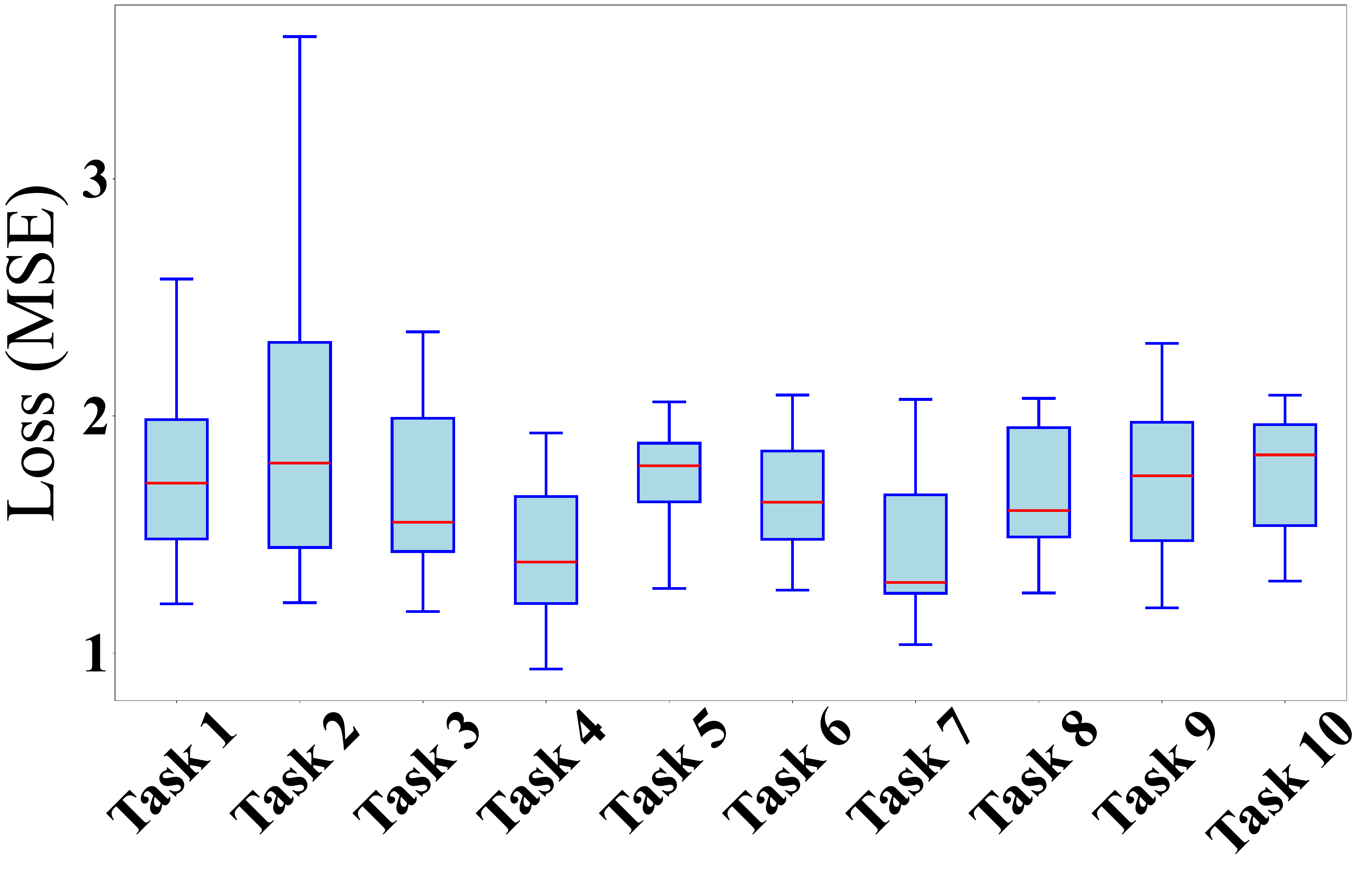} \\
        \textbf{BCL} &
        \includegraphics[width=\linewidth]{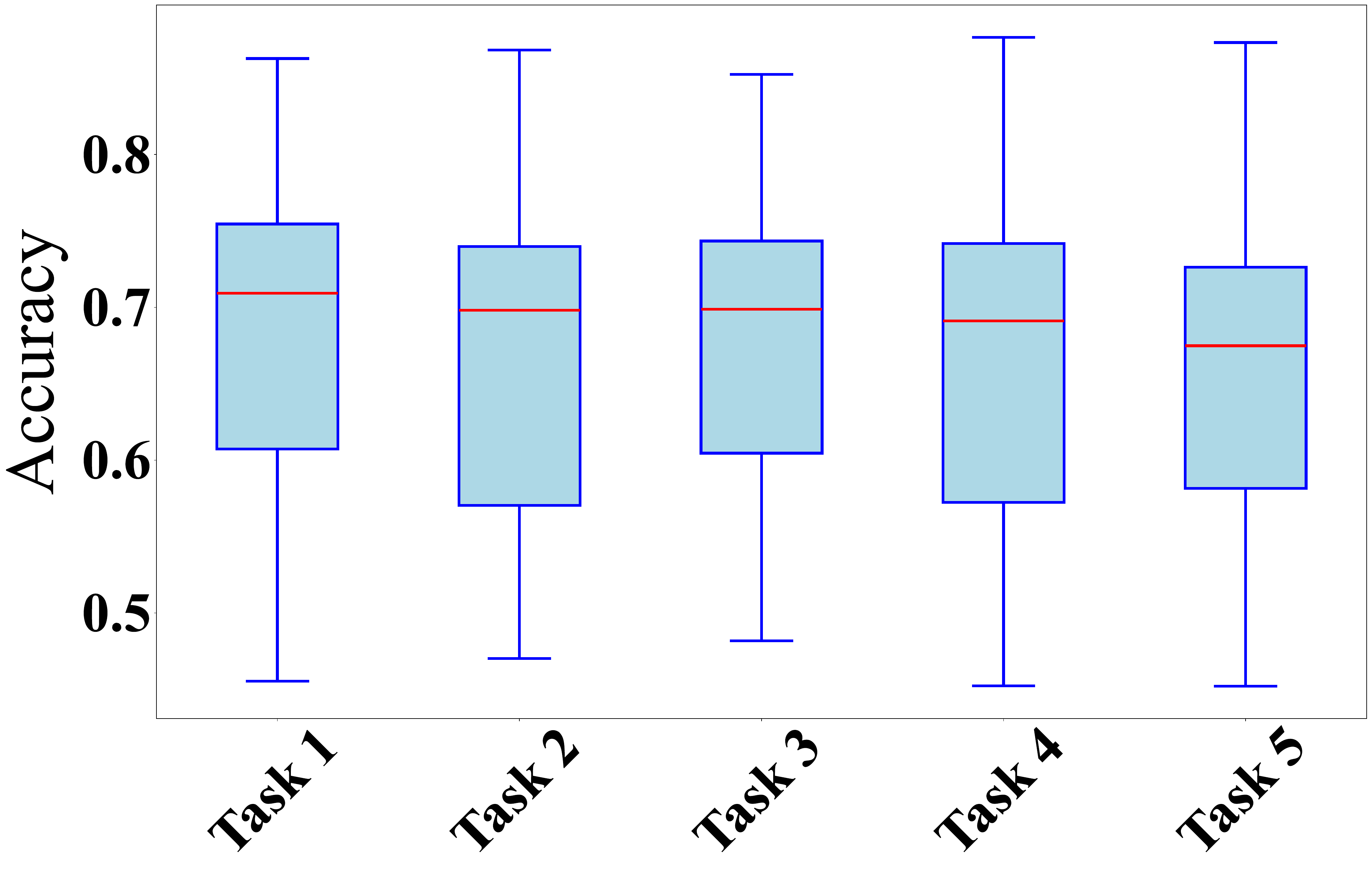} &
        \includegraphics[width=\linewidth]{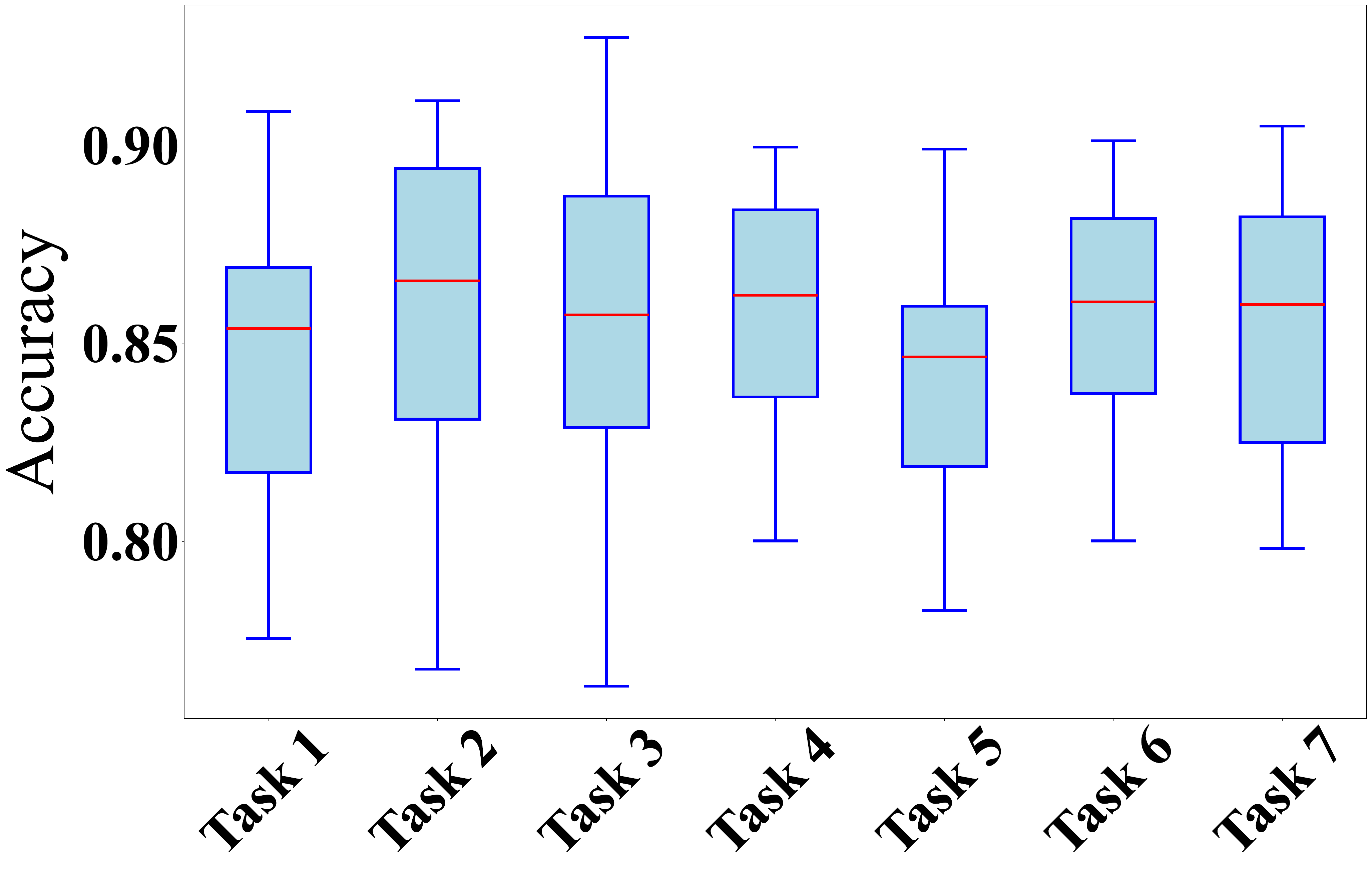} &
        \includegraphics[width=\linewidth]{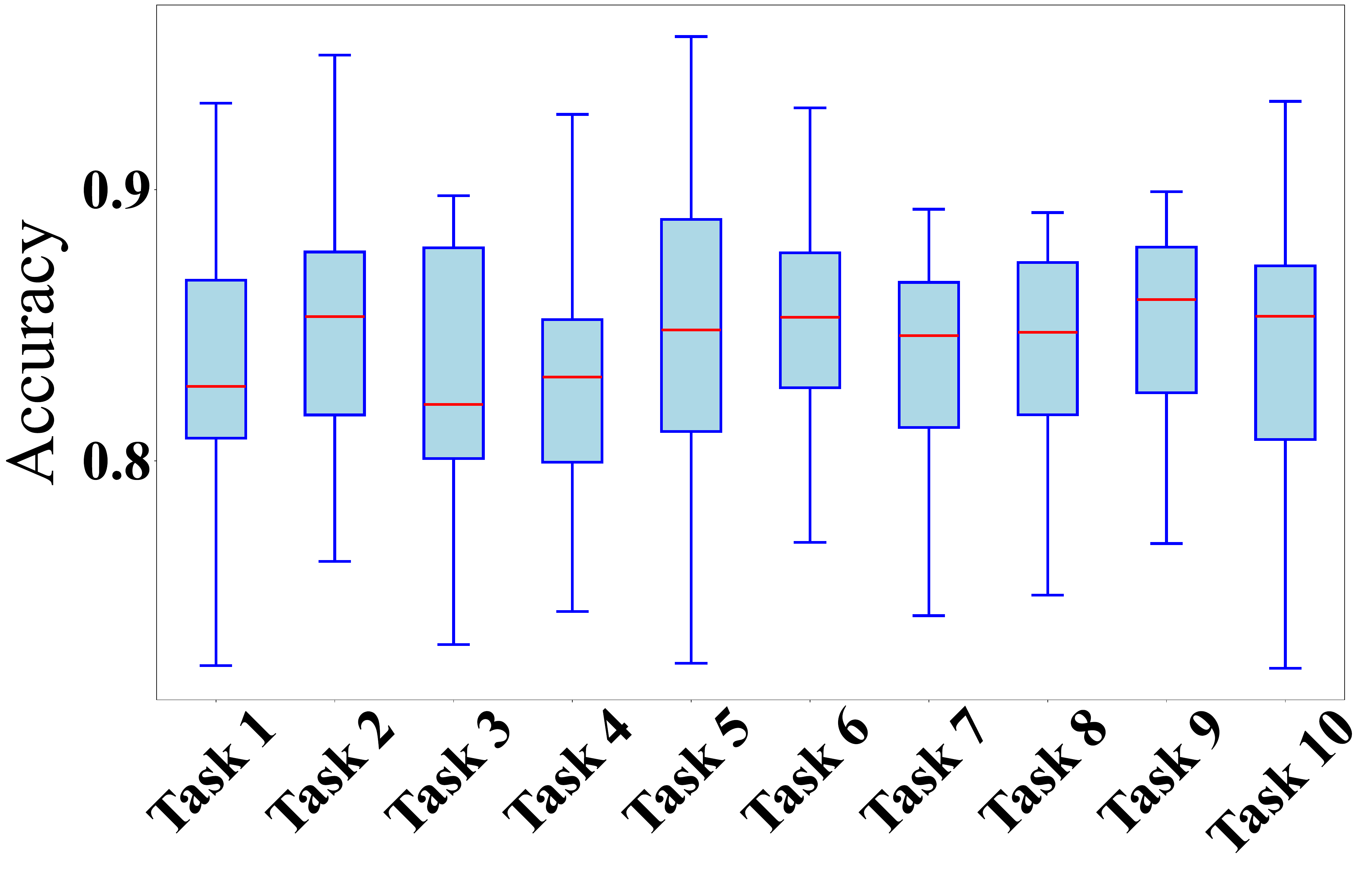} &
        \includegraphics[width=\linewidth]{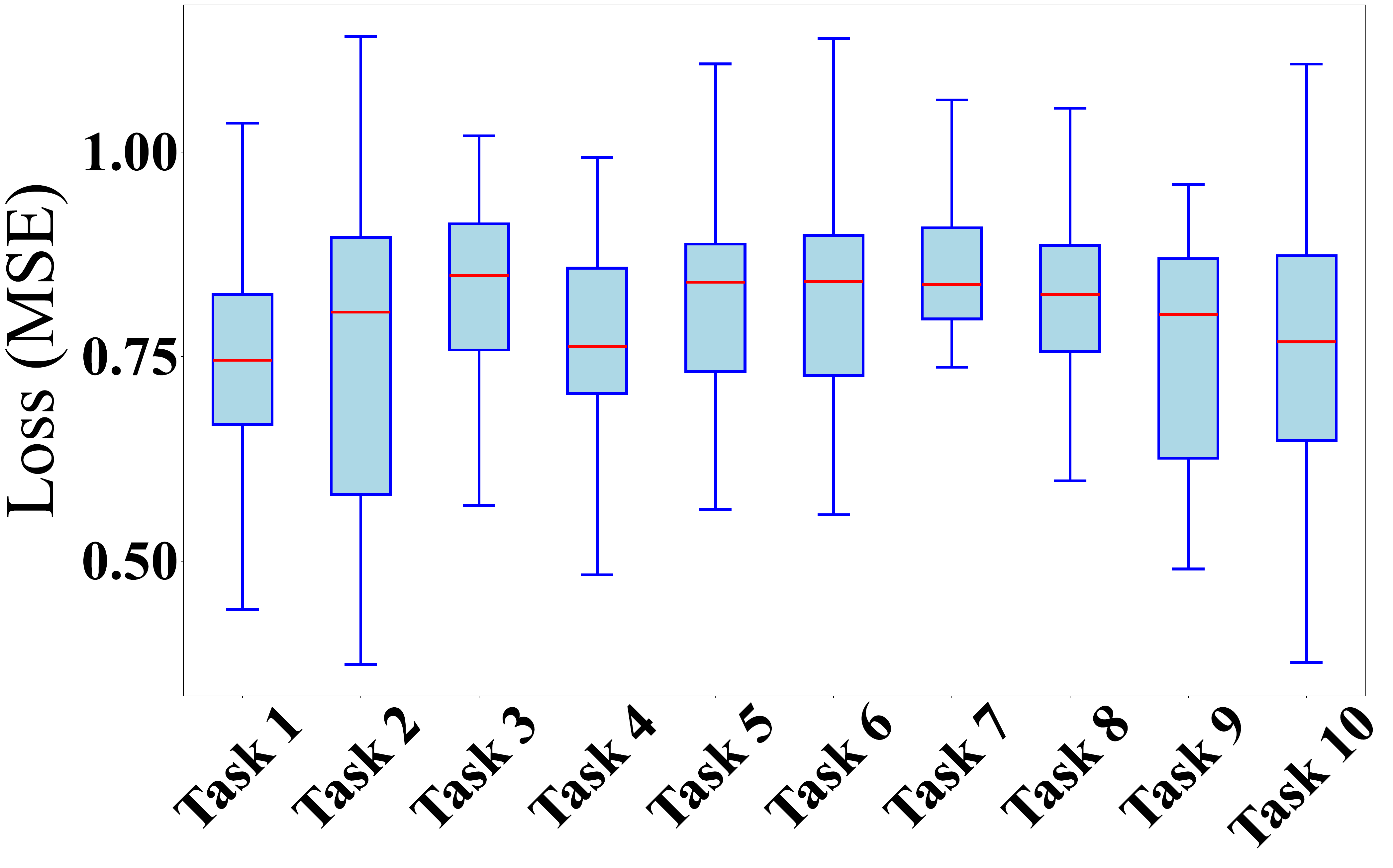} \\
        \textbf{MER} &
        \includegraphics[width=\linewidth]{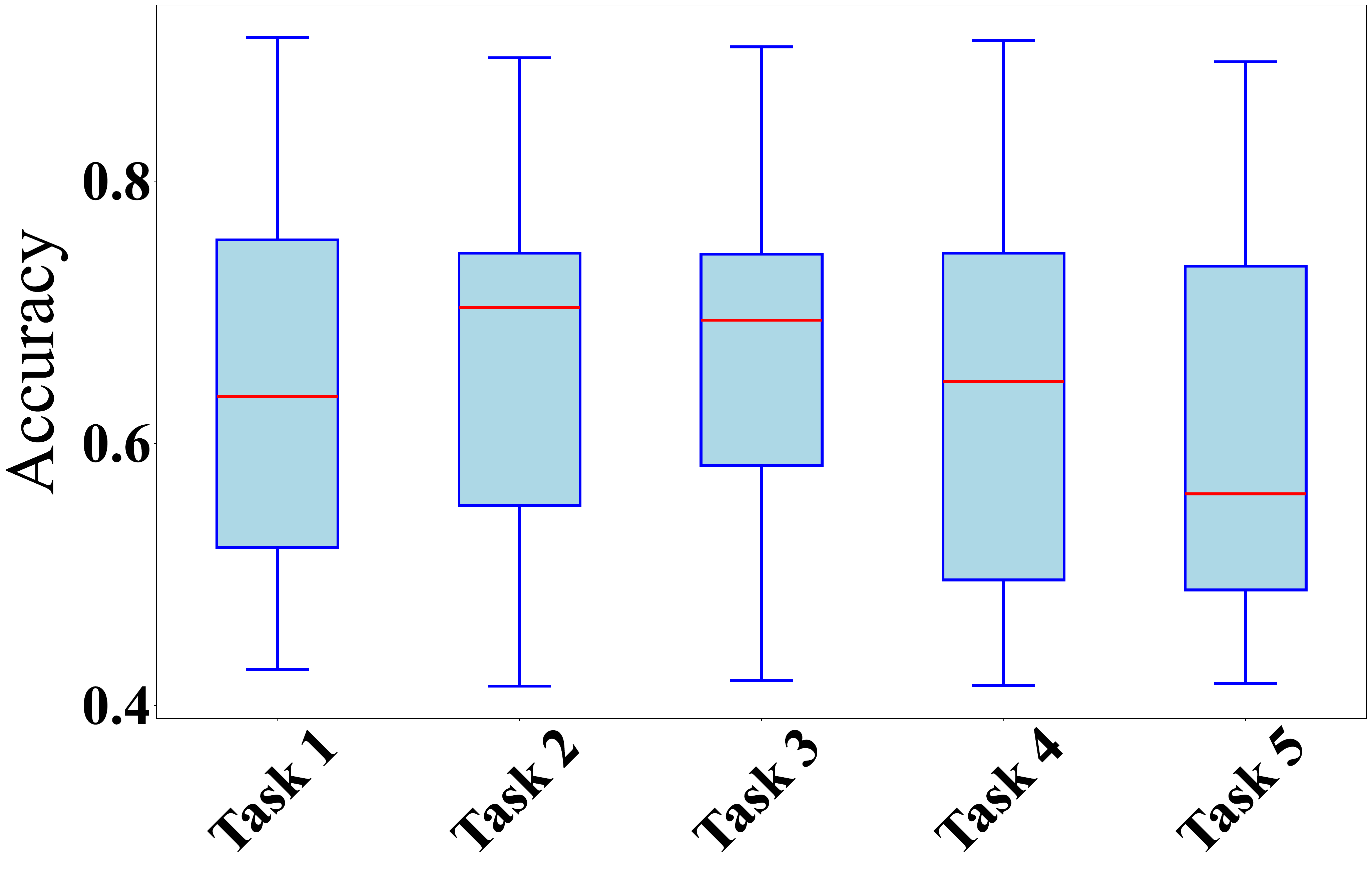} &
        \includegraphics[width=\linewidth]{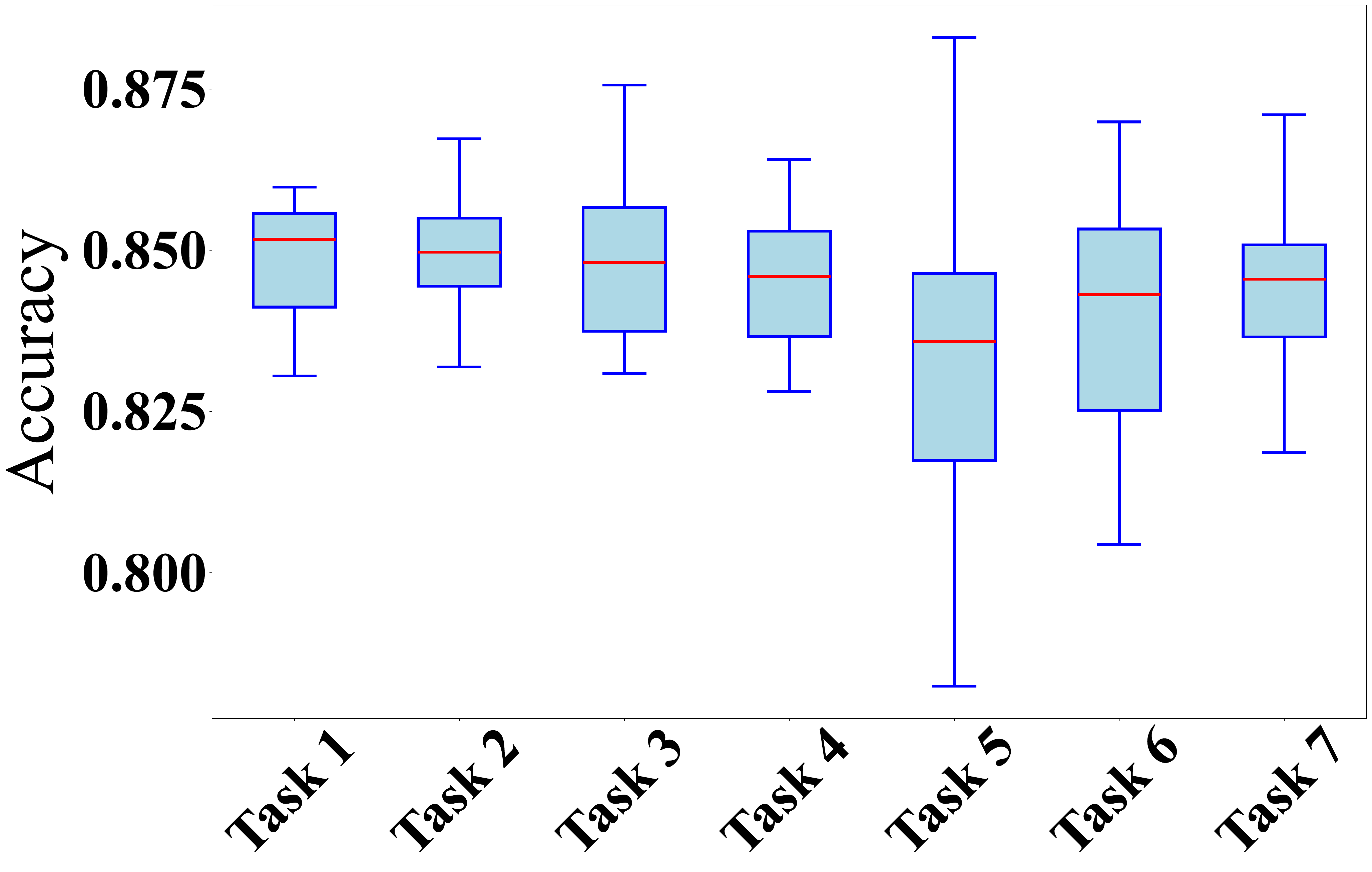} &
        \includegraphics[width=\linewidth]{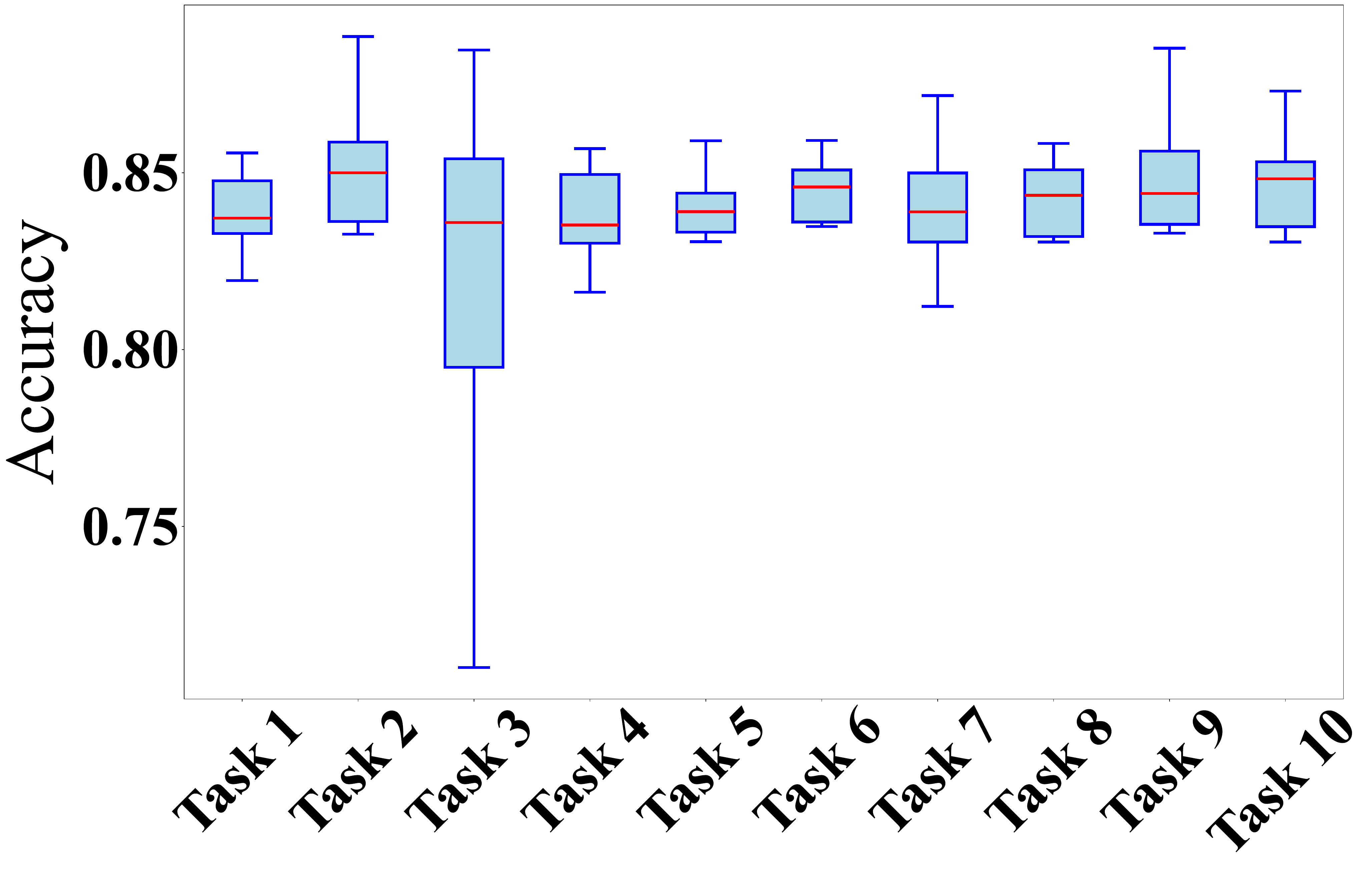} &
        \includegraphics[width=\linewidth]{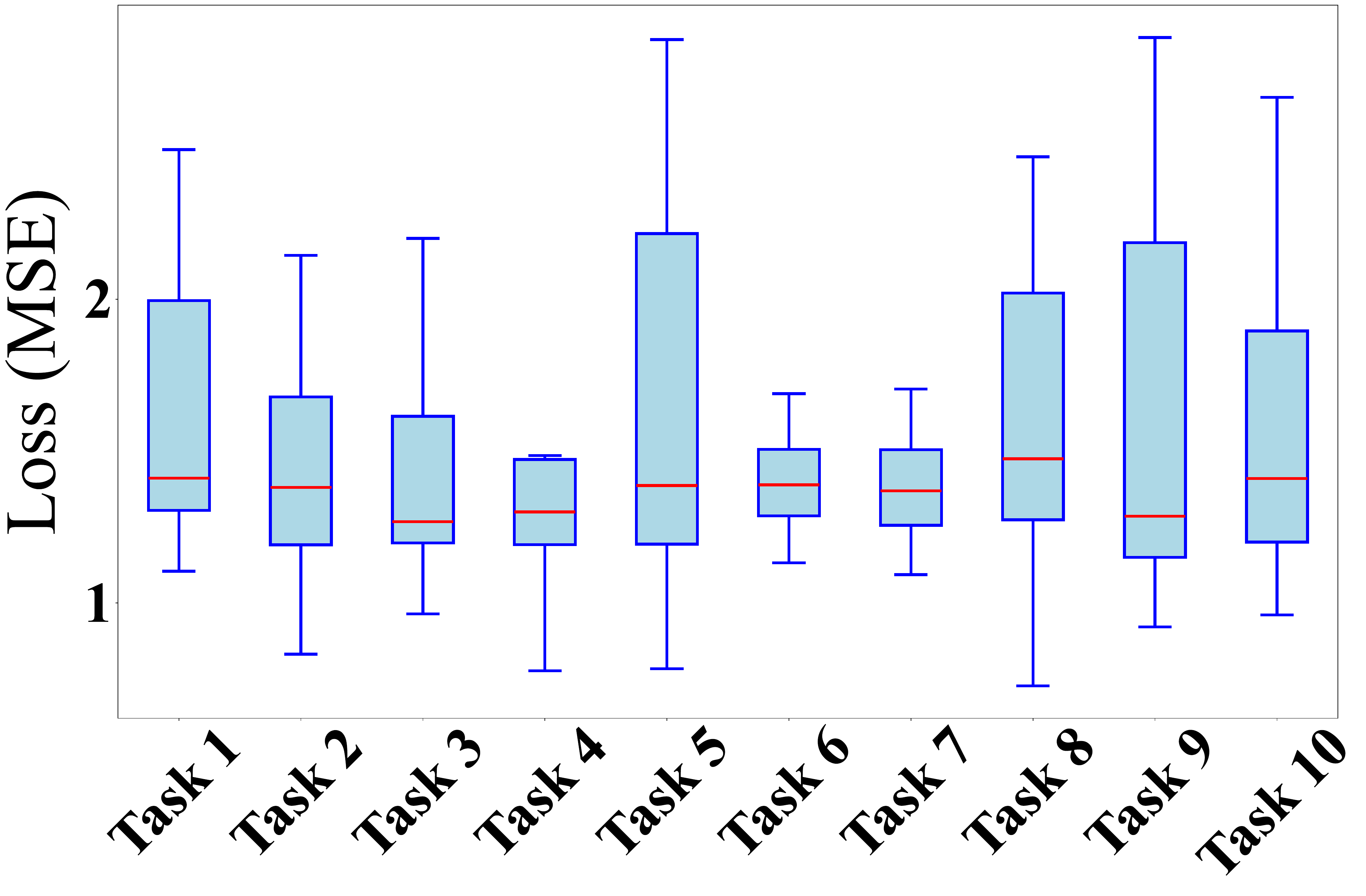} \\
    \end{tabular}%
    }
    \caption{Impact of task order comparison of CL methods—EWC, MER, BCL, and ER—across diverse domains, including image classification, text-based classification tasks, synthetic graph-structured data, and synthetic sine wave regression. The accuracies and mean squared error (MSE) losses shown are calculated after the model has learned all the tasks. Random task orders were generated for each method and domain, and the accuracy or loss was logged for each task in the sequence. The high variability observed in the box plots highlights the significant impact of task order on model performance in CL. This phenomenon is consistent across all evaluated domains.}
    \label{fig:results}
\end{figure}

\subsubsection{Impact of Task Order Across Domains}
The box plots illustrate the variability in accuracy (or MSE loss) for each task across different sequences, highlighting the significant sensitivity of models to the task order.

To generate this plot, we randomly created 50 task sequences for each domain and logged the models' performance at each training stage. By training stage, we refer to the point at which a particular task is introduced to the model. While intermediate accuracy or loss values exhibited expected fluctuations, the final performance (after training on all tasks) demonstrates a striking dependency on the task order. To observe this, observe the size of the box plot corresponding to each task in each subplot in Fig.~\ref{fig:results}. Here,  the large interquartile range (IQR) underscores the extent of this variability that we hypothesize.

\textbf{Variability in Performance across Task Orders:}
Figure~\ref{fig:results} highlights the varying sensitivity of different CL methods to task ordering. This sensitivity is evident in the spread of the box plots, where methods with wider distributions exhibit higher variability in final accuracy (or loss), whereas methods with tighter distributions demonstrate more robustness to task order. 

We also note that the performance metrics can be disrupted by inconsistent sequencing. The IQR observed across methods and domains in Fig.~\ref{fig:results} reaffirms that task order is not merely a peripheral concern but a fundamental challenge in CL. Most approaches across different datasets struggle with sequence dependencies, suggesting the need for explicit task ordering strategies to enhance stability. While the range of variation is method-dependent, the range is extensive, with the minimum across all domains and methods being 2\% and the maximum being 22\%. Addressing this challenge requires domain-specific strategies to ensure consistency in learning outcomes. These insights further motivate the need for optimizing task sequencing strategies, as explored in the subsequent section, where we formalize the task ordering problem as an optimization challenge.

\subsection{Extended Results}
\label{app:extended-results}

\subsubsection{Code}
Code will be released upon acceptance of this paper. 

\subsubsection{Evaluation Metrics}
\label{sec:metrics}

To rigorously assess the performance and stability of our framework, we employ three standard metrics commonly used in the CL literature. Let $A_{i, j}$ denote the test accuracy on task $j$ after the model has finished learning task $i$, and let $\ntasks$ be the total number of tasks. 

\textbf{Mean Accuracy.} We report the mean accuracy across all observed tasks after the training sequence is complete . It is defined as: 

\begin{equation}
    \text{Mean Accuracy} = \frac{1}{\ntasks} \sum_{j=1}^{\ntasks} A_{\ntasks, j}
\end{equation}

\textbf{Average Forgetting.} To quantify the extent of catastrophic forgetting, we measure the average decrease in performance for each task from its peak accuracy (immediately after training on that task) to its final accuracy \cite{yoon2019scalable}. This is calculated as: 

\begin{equation}
    \text{Avg. Forgetting} = \frac{1}{\ntasks-1} \sum_{j=1}^{\ntasks-1} \max_{l \in \{1, \dots, \ntasks-1\}} (A_{l,j} - A_{\ntasks,j})
\end{equation}

\textbf{Standard Deviation.} To evaluate robustness against random task orderings, we report the standard deviation of the final average accuracy across all evaluated permutations ~\cite{bell2022effect}. For a set of $P$ permutations, this is given by:

\begin{equation}
    \sigma_{\text{perm}} = \sqrt{\frac{1}{P} \sum_{p=1}^{P} (\text{ACC}_p - \mu_{\text{ACC}})^2}
\end{equation}

where $\text{ACC}_p$ is the Average Accuracy for the $p$-th permutation and $\mu_{\text{ACC}}$ is the mean over all permutations. A lower $\sigma_{\text{perm}}$ indicates a method that is less sensitive to the specific order in which tasks arrive.

\subsubsection{Detailed Setup}
\label{sec:detailed_setup}

We evaluate our approach on four diverse continual learning benchmarks spanning image classification, text classification, and graph node classification. Each dataset is partitioned into disjoint tasks following the domain-incremental learning protocol, where each task introduces a new subset of classes not seen in previous tasks. Table~\ref{tab:dataset_setup} summarizes the dataset configurations.

\begin{table}[!htbp]
\centering
\caption{Dataset configurations for continual learning experiments. Each dataset is split into sequential tasks with disjoint class subsets. Labels are remapped to $\{0, \ldots, C_t-1\}$ within each task.}
\label{tab:dataset_setup}
\resizebox{\textwidth}{!}{%
\begin{tabular}{lcccccc}
\toprule
\textbf{Dataset} & \textbf{Total Classes} & \textbf{Tasks} & \textbf{Classes/Task} & \textbf{Input Dim.} & \textbf{Buffer Size} & \textbf{Domain} \\
\midrule
Split-MNIST      & 10  & 5  & 2  & $28 \times 28 \times 1$   & 50  & Image \\
CIFAR-100        & 100 & 10 & 10 & $32 \times 32 \times 3$   & 500 & Image \\
20 Newsgroups    & 20  & 5  & 4  & 20,000 (TF-IDF)           & 100 & Text  \\
Cora             & 7   & 3  & 2--3 & 1,433 (node features)   & 50  & Graph \\
\bottomrule
\end{tabular}%
}
\end{table}

\paragraph{Image Classification.}
For \textbf{Split-MNIST}, we partition the 10 digit classes into 5 consecutive tasks, each containing 2 classes (e.g., Task 1: digits 0--1, Task 2: digits 2--3, etc.). The grayscale images ($28 \times 28$) are normalized and fed directly to the network. For \textbf{CIFAR-100}, we divide the 100 fine-grained classes into 10 tasks of 10 classes each. We apply standard data augmentation including random crops with 4-pixel padding and horizontal flips during training.

\paragraph{Text Classification.}
The \textbf{20 Newsgroups} dataset consists of approximately 18,000 newsgroup posts across 20 categories. We extract TF-IDF features with a vocabulary size of 20,000 and partition the categories into 5 tasks of 4 classes each.

\paragraph{Graph Classification.}
For \textbf{Cora}, a citation network with 2,708 nodes and 7 classes, we split the classes into 3 tasks. Nodes are randomly partitioned into 80\% training and 20\% test sets while preserving class distributions. We use the 1,433-dimensional bag-of-words node features as input.

\paragraph{Training Protocol.}
All experiments use SGD with momentum 0.9. Table~\ref{tab:hyperparams} details the hyperparameter settings. For replay-based methods (ER, DER, SER, and HTCL), we maintain an experience replay buffer that stores samples from previous tasks. The buffer is populated using reservoir sampling to ensure balanced representation across all seen classes.

\begin{table}[ht]
\centering
\caption{Training hyperparameters for each dataset.}
\label{tab:hyperparams}
\begin{tabular}{lcccc}
\toprule
\textbf{Parameter} & \textbf{Split-MNIST} & \textbf{CIFAR-100} & \textbf{20 Newsgroups} & \textbf{Cora} \\
\midrule
Learning Rate      & 0.01  & 0.1   & 0.001 & 0.01  \\
Batch Size         & 64    & 32    & 32    & 32    \\
Epochs per Task    & 10    & 10    & 10    & 10    \\
Weight Decay       & $10^{-4}$ & $10^{-4}$ & $10^{-4}$ & $10^{-4}$ \\
Buffer Size        & 50    & 500   & 100   & 50    \\
\bottomrule
\end{tabular}
\end{table}

\paragraph{HTCL-Specific Parameters.}
For our hierarchical approach, we evaluate configurations with $\nlev \in \{2, 3, 4, 5\}$ hierarchy levels. The group size is set to $\gsize=2$ tasks per group. We use Taylor update step size $\eta = 0.9$ with gradient clipping at $\|\cdot\|_{\max} = 1.0$ and regularization strength $\reg > -\mu_{\min}(\mathbf{H}^{(t-1)})$. The global model catch-up mechanism is enabled with 2 additional Taylor update iterations after the last task group.

All experiments are conducted on a single NVIDIA GPU with CUDA 12.3. We report results averaged over 200 random task orderings to account for permutation sensitivity, with standard deviations included. The random seed is fixed at 42 for reproducibility.

\subsubsection{Performance with Smaller Buffer Size}
\label{app:small-buffer}

Replay-based CL methods rely heavily on buffer capacity. When replay memory is sufficiently large, rehearsal can approximate joint training and substantially mitigate forgetting. However, in realistic low-memory regimes, replay becomes much less reliable. To evaluate the robustness of HTCL under such constraints, we conduct experiments on both SplitMNIST and Split CIFAR-100 while varying the replay buffer size. For SplitMNIST, we sweep buffer sizes in $\{30, 50, 100, 200, 500\}$, whereas for Split CIFAR-100 we evaluate buffers of size $50$, $200$, and $1000$ to reflect the substantially larger number of classes.

Figure~\ref{fig:memory} reports the performance of ER, DER, and SER, together with their HTCL-augmented variants. Under very small buffers, standalone replay baselines exhibit severe forgetting due to insufficient coverage of previously learned tasks. In contrast, HTCL-enhanced models (ER+HTCL, DER+HTCL, SER+HTCL) consistently achieve higher accuracy, demonstrating that curvature-aware consolidation compensates for limited rehearsal by providing additional stability.

\begin{figure}[!ht]
    \centering
    \includegraphics[scale=0.1]{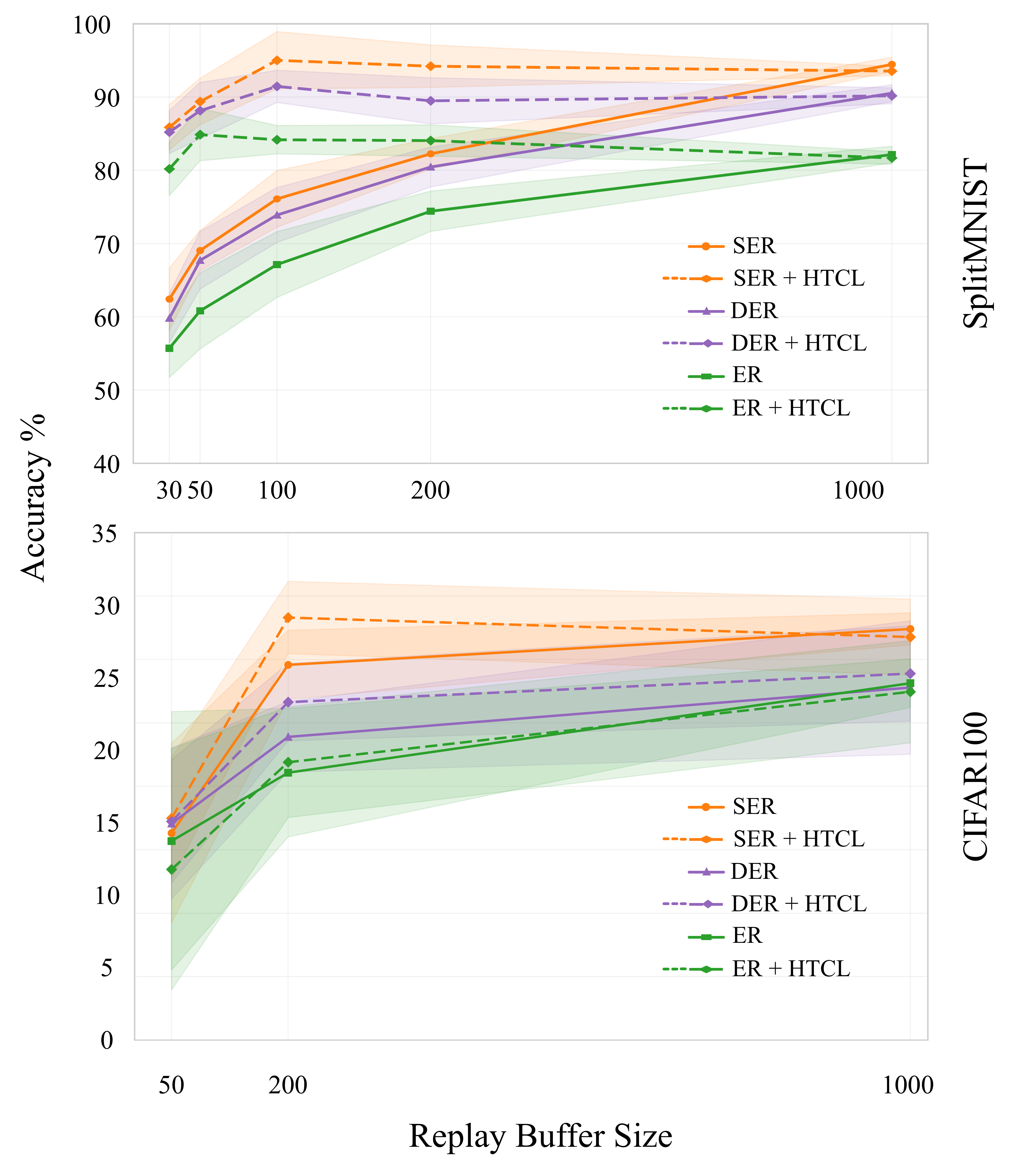}
    \caption{\textbf{Effect of replay buffer size on SplitMNIST and Split CIFAR-100.} 
    We evaluate ER, DER, and SER across varying replay buffer capacities, along with their HTCL-enhanced counterparts. 
    On SplitMNIST, HTCL provides substantial gains under small buffers (30--100 samples), where standalone replay methods suffer severe forgetting due to limited task coverage. As buffer size increases, replay baselines approach joint-training performance and the relative advantage of HTCL diminishes. 
    On Split CIFAR-100, extremely small buffers (e.g., 50 samples) produce highly unstable and low-performing results, reflecting the inadequacy of such memory for a 100-class dataset. Increasing the buffer to 200 leads to sizable improvements, and using 1000 samples further narrows the gap between baseline and HTCL methods. 
    Overall, HTCL offers the greatest benefit in low-memory conditions, providing curvature-aware consolidation that stabilizes learning when replay alone is insufficient.}
    \label{fig:memory}
\end{figure}

As buffer capacity increases, the performance gap between the baselines and their HTCL-augmented counterparts narrows. On SplitMNIST, replay approximates joint training once the buffer reaches moderate size, at which point model capacity becomes the dominant limiting factor and HTCL offers only marginal improvements. On Split CIFAR-100, extremely small buffers (e.g., 50 samples) produce unstable and low-quality performance, reflecting the insufficiency of such memory to represent 100 classes. Increasing the buffer to 200 yields a substantial improvement, and further increasing it to 1000 reduces the benefits of HTCL, mirroring the saturation observed in SplitMNIST.

These results confirm that HTCL is particularly advantageous in low-memory regimes: it acts as a complementary stabilization mechanism that mitigates forgetting when replay alone is inadequate, while remaining lightweight enough to avoid hindering performance when memory is plentiful.

\subsubsection{Relation to Federated Aggregation Methods}
\label{app:federated}

Although HTCL shares a structural similarity with federated learning methods, their objectives and use-cases are very different. Both maintain multiple models and periodically consolidate their updates into a single global model. For completeness and to evaluate these approaches for their ability to consolidate performance across tasks, we compared HTCL against two widely used federated baselines, FedAvg~\citep{mcmahan2017communication} and FedProx~\citep{li2020federated}. 

\textbf{Experimental setup.} In our experiment, we treat each task in the continual learning sequence as analogous to a federated client. For each task, we first train a local model using either SER or DER as the base continual learning method. After training on each task, we update a global model using one of three consolidation strategies. These are HTCL, FedAvg, or FedProx. We then evaluate the final global model on all tasks and measure both accuracy and sensitivity to task ordering.

\textbf{Balanced performance across tasks.} Figure~\ref{fig:fed-comp}(a) shows per-task accuracy on SplitMNIST. FedAvg and FedProx achieve high accuracy on some tasks but at the cost of degraded performance on others. This uneven pattern suggests that simple averaging or proximal regularization does not account for the sequential structure of continual learning. HTCL achieves more balanced performance across all five tasks. 

\textbf{Task-order robustness.} Figure~\ref{fig:fed-comp}(b) compares the standard deviation of accuracy across different task orderings. Both FedAvg and FedProx reduce variance compared to the raw SER and DER baselines, indicating some degree of stabilization. However, HTCL achieves the lowest standard deviation overall. This difference highlights a fundamental distinction between the two settings. Federated methods dampen update magnitudes through averaging or regularization. HTCL instead consolidates knowledge using second-order information that is tailored to sequential, temporal learning.

\begin{figure}[!ht]
    \centering
    \includegraphics[scale=0.12]{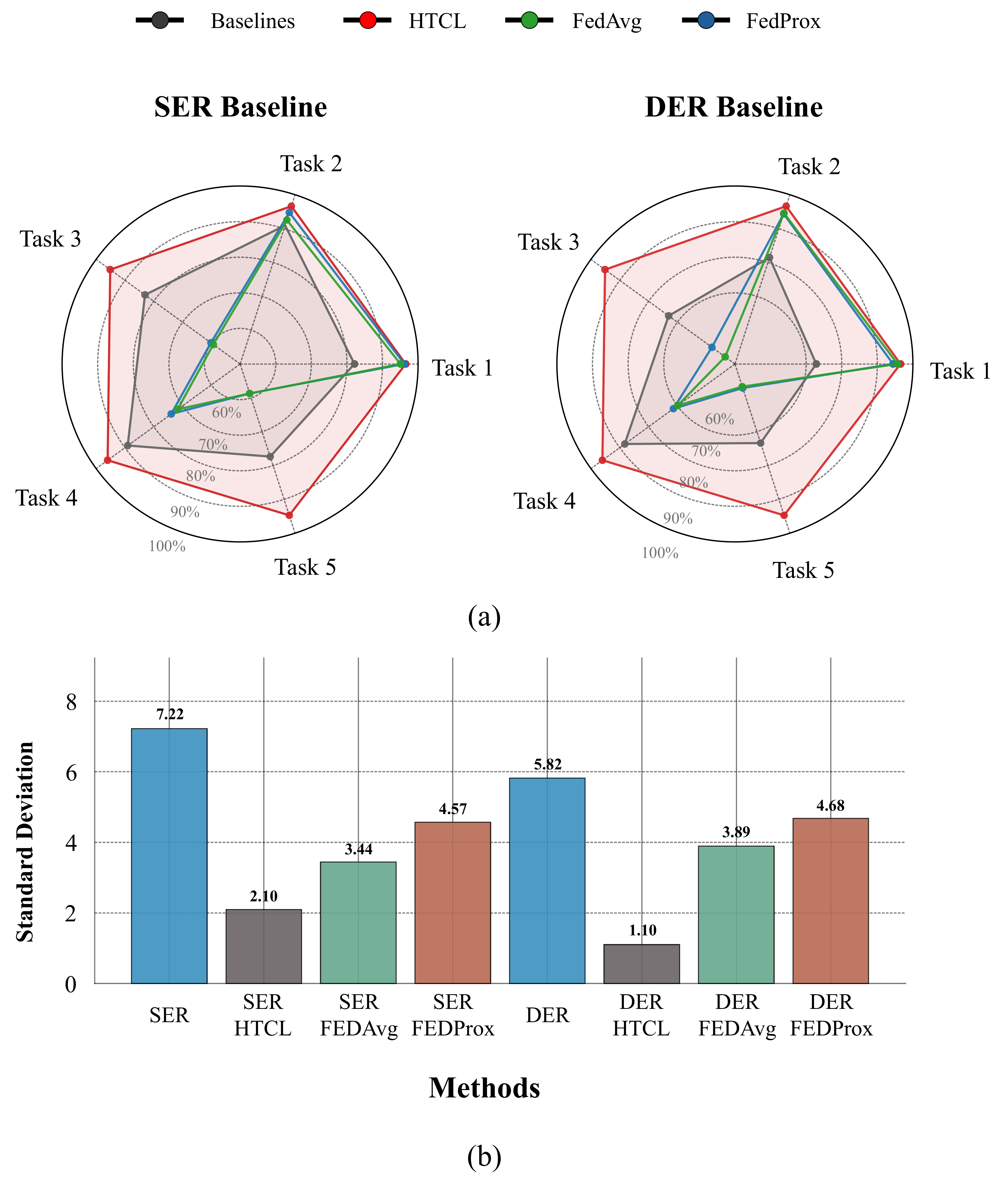}
    \caption{\textbf{Comparison of HTCL with federated learning baselines.} Local models are trained using SER and DER on SplitMNIST, followed by global consolidation using HTCL, FedAvg, or FedProx. SER and DER without consolidation serve as baselines. (a) HTCL achieves consistently balanced accuracy across all tasks, whereas FedAvg and FedProx attain higher accuracy on some tasks at the expense of others. (b) Standard deviation across task permutations shows that HTCL provides the strongest reduction in task-order sensitivity compared to federated baselines.}
    \label{fig:fed-comp}
\end{figure}

%% file: sections/related_work.tex
\subsection{Related Works}\label{app:related_work}

\textbf{Paradigms in CL.}
In recent years, interest in CL has risen, especially from the deep learning research community~\cite{achille2018life, nguyen2017variational, schwarz2018progress, tasnim2024recast}. CL approaches are generally categorized into three paradigms: \emph{regularization-based}, \emph{replay-based}, and \emph{architecture-based} methods \cite{parisi2019continual,nguyen2019toward,schwarz2018progress}.

Regularization-based methods constrain parameter updates to protect prior knowledge. \emph{Elastic Weight Consolidation} (EWC) \cite{kirkpatrick2017overcoming} and \emph{Synaptic Intelligence} (SI) \cite{zenke2017continual} penalize deviation from parameters important to previous tasks. \emph{Online EWC} \cite{chaudhry2018efficient} and \emph{Stable SGD}~\cite{mirzadeh2020dropout} extend these ideas to continuous task streams and smoother optimization.

Replay-based methods alleviate forgetting by explicitly revisiting prior data or learned representations. \emph{Experience Replay} (ER), \emph{Strong ER (SER)}~\cite{zhuo2023continual}, and \emph{Dark ER (DER)}~\cite{buzzega2020dark} maintain small exemplar buffers, while \emph{Generative Replay} techniques \cite{shin2017continual,van2018generative} synthesize pseudo-samples from previous distributions. Exemplar-based systems such as \emph{iCaRL} \cite{rebuffi2017icarl} combine replay with incremental classifiers.

Architecture-based approaches \cite{fernando2017pathnet,masse2018alleviating,li2017learning} mitigate interference by expanding or routing subnetworks to isolate knowledge across tasks. Although effective in preventing forgetting, they typically incur high computational or memory costs and reduced scalability. Despite their progress, these paradigms collectively continue to struggle with the \emph{stability–plasticity dilemma}, particularly under extended task sequences and non-stationary environments \cite{chen2023stability,de2021continual}.

\textbf{Multi-Timescale and Dual-Model Approaches.}
Beyond these standard paradigms, some recent works employ multiple learners operating at different temporal scales. DualNet~\cite{pham2021dualnet}, for example, maintains a fast learner and a slow learner inspired by Complementary Learning Systems. Their fast model is trained using replayed examples, combining both raw buffer samples and latent features from the slow model. The slow learner is also updated through replay. Although this architecture resembles our use of local and hierarchical models, the mechanisms differ substantially. Unlike dual-model systems which rely on standard rehearsal, HTCL repurposes the replay buffer to approximate the curvature of the loss landscape (Hessian) for a regularization step, rather than for blind rehearsal. Consequently, the hierarchical models evolve solely through a Taylor series approximation of the loss, enabling principled multi-timescale integration. This distinction allows HTCL to function as a model-agnostic consolidation layer rather than a replay-driven dual-network system.

\textbf{Task Ordering and Sequence Sensitivity.}
Recent studies demonstrate that task order profoundly affects CL performance \cite{bell2022effect,riemer2018learning,yoon2019scalable}. Models trained on identical tasks but different sequences can yield widely varying accuracies due to path-dependent optimization. While curriculum learning leverages structured ordering to encourage transfer, standard CL frameworks generally assume fixed or random task orders, lacking robustness under arbitrary permutations. Efforts to address this include additive parameter decomposition for order invariance \cite{yoon2019scalable} and order-robust objectives or loss regularizers \cite{dohare2024loss,guo2024continual}. Nonetheless, the factorial complexity of possible permutations makes brute-force optimization infeasible \cite{knoblauch2020optimal}, leaving order sensitivity an open challenge.

\textbf{Relation to Federated Learning.}
Our hierarchical design shares superficial similarity with multi-model frameworks in \emph{Federated Learning} (FL) \cite{mcmahan2017communication,li2020federated}. In FL, distributed clients train local models whose updates are aggregated on a central server to preserve privacy and reduce communication cost. FedAvg~\cite{mcmahan2017communication} aggregates local models by simple parameter averaging, implicitly assuming that all local updates are equally compatible. FedProx~\cite{li2020federated} extends this approach by introducing a proximal term that constrains local updates to remain close to the global model, improving stability under heterogeneous client distributions. FedCurv~\cite{shoham2019overcoming} employs curvature-based regularization to further stabilize aggregation.

However, CL differs fundamentally  from FL. FL addresses spatial heterogeneity across distributed clients, whereas CL handles temporal distributional shifts within a single learner. The curvature-aware consolidation in HTCL is conceptually related to methods like FedCurv, but HTCL integrates information hierarchically over time rather than across parallel clients. \textbf{Detailed experimental comparisons with FL baselines can be found in Appendix~\ref{app:federated}}, confirming that HTCL outperforms both FedAvg and FedProx when applied to sequential task integration and highlighting the importance of consolidation mechanisms tailored to temporal learning.

\textbf{Summary and Distinction.}
In summary, prior methods have mitigated forgetting via replay, regularization, or network expansion, but they remain vulnerable to task-order bias and scalability challenges. Few works explicitly formalize order robustness. In contrast, \emph{HTCL} introduces a principled, multi-level Taylor series framework that unifies local adaptability and hierarchical stability, yielding order-invariant and scalable CL behavior.

%% file: example_paper.bib
@inproceedings{ICLR2025_5565ab68,
 author = {Lewandowski, Alex and Bortkiewicz, Micha\l  and Kumar, Saurabh and Gyorgy, Andras and Schuurmans, Dale and Ostaszewski, Mateusz and C. Machado, Marlos},
 booktitle = {International Conference on Learning Representations},
 editor = {Y. Yue and A. Garg and N. Peng and F. Sha and R. Yu},
 pages = {34504--34530},
 title = {Learning Continually by Spectral Regularization},
 volume = {2025},
 year = {2025}
}

@article{mccallum2000automating,
  title={Automating the construction of internet portals with machine learning},
  author={McCallum, Andrew Kachites and Nigam, Kamal and Rennie, Jason and Seymore, Kristie},
  journal={Information Retrieval},
  volume={3},
  number={2},
  pages={127--163},
  year={2000},
  publisher={Springer}
}

@incollection{lang1995newsweeder,
  title={Newsweeder: Learning to filter netnews},
  author={Lang, Ken},
  booktitle={Machine learning proceedings 1995},
  pages={331--339},
  year={1995},
  publisher={Elsevier}
}

@article{krizhevsky2009learning,
  title={Learning multiple layers of features from tiny images},
  author={Krizhevsky, Alex and Hinton, Geoffrey and others},
  year={2009},
  publisher={Toronto, ON, Canada}
}

@article{lecun2002gradient,
  title={Gradient-based learning applied to document recognition},
  author={LeCun, Yann and Bottou, L{\'e}on and Bengio, Yoshua and Haffner, Patrick},
  journal={Proceedings of the IEEE},
  volume={86},
  number={11},
  pages={2278--2324},
  year={2002},
  publisher={Ieee}
}

@article{shoham2019overcoming,
  title={Overcoming forgetting in federated learning on non-iid data},
  author={Shoham, Neta and Avidor, Tomer and Keren, Aviv and Israel, Nadav and Benditkis, Daniel and Mor-Yosef, Liron and Zeitak, Itai},
  journal={arXiv preprint arXiv:1910.07796},
  year={2019}
}

@inproceedings{mirzadeh2020dropout,
  title={Dropout as an implicit gating mechanism for continual learning},
  author={Mirzadeh, Seyed Iman and Farajtabar, Mehrdad and Ghasemzadeh, Hassan},
  booktitle={Proceedings of the IEEE/CVF conference on computer vision and pattern recognition workshops},
  pages={232--233},
  year={2020}
}

@article{tasnim2024recast,
  title={RECAST: Reparameterized, Compact weight Adaptation for Sequential Tasks},
  author={Tasnim, Nazia and Plummer, Bryan A},
  journal={arXiv preprint arXiv:2411.16870},
  year={2024}
}

@article{li2020federated,
  title={Federated optimization in heterogeneous networks},
  author={Li, Tian and Sahu, Anit Kumar and Zaheer, Manzil and Sanjabi, Maziar and Talwalkar, Ameet and Smith, Virginia},
  journal={Proceedings of Machine learning and systems},
  volume={2},
  pages={429--450},
  year={2020}
}

@inproceedings{mcmahan2017communication,
  title={Communication-efficient learning of deep networks from decentralized data},
  author={McMahan, Brendan and Moore, Eider and Ramage, Daniel and Hampson, Seth and y Arcas, Blaise Aguera},
  booktitle={Artificial intelligence and statistics},
  pages={1273--1282},
  year={2017},
  organization={PMLR}
}

@inproceedings{knoblauch2020optimal,
  title={Optimal continual learning has perfect memory and is np-hard},
  author={Knoblauch, Jeremias and Husain, Hisham and Diethe, Tom},
  booktitle={International Conference on Machine Learning},
  pages={5327--5337},
  year={2020},
  organization={PMLR}
}

@article{buzzega2020dark,
  title={Dark experience for general continual learning: a strong, simple baseline},
  author={Buzzega, Pietro and Boschini, Matteo and Porrello, Angelo and Abati, Davide and Calderara, Simone},
  journal={Advances in neural information processing systems},
  volume={33},
  pages={15920--15930},
  year={2020}
}

@inproceedings{schwarz2018progress,
  title={Progress \& compress: A scalable framework for continual learning},
  author={Schwarz, Jonathan and Czarnecki, Wojciech and Luketina, Jelena and Grabska-Barwinska, Agnieszka and Teh, Yee Whye and Pascanu, Razvan and Hadsell, Raia},
  booktitle={International conference on machine learning},
  pages={4528--4537},
  year={2018},
  organization={PMLR}
}

@article{nguyen2017variational,
  title={Variational continual learning},
  author={Nguyen, Cuong V and Li, Yingzhen and Bui, Thang D and Turner, Richard E},
  journal={arXiv preprint arXiv:1710.10628},
  year={2017}
}

@article{achille2018life,
  title={Life-long disentangled representation learning with cross-domain latent homologies},
  author={Achille, Alessandro and Eccles, Tom and Matthey, Loic and Burgess, Chris and Watters, Nicholas and Lerchner, Alexander and Higgins, Irina},
  journal={Advances in Neural Information Processing Systems},
  volume={31},
  year={2018}
}

@article{nguyen2019toward,
  title={Toward understanding catastrophic forgetting in continual learning},
  author={Nguyen, Cuong V and Achille, Alessandro and Lam, Michael and Hassner, Tal and Mahadevan, Vijay and Soatto, Stefano},
  journal={arXiv preprint arXiv:1908.01091},
  year={2019}
}

@article{chen2023stability,
  title={On the stability-plasticity dilemma in continual meta-learning: Theory and algorithm},
  author={Chen, Qi and Shui, Changjian and Han, Ligong and Marchand, Mario},
  journal={Advances in Neural Information Processing Systems},
  volume={36},
  pages={27414--27468},
  year={2023}
}

@inproceedings{de2021continual,
  title={Continual prototype evolution: Learning online from non-stationary data streams},
  author={De Lange, Matthias and Tuytelaars, Tinne},
  booktitle={Proceedings of the IEEE/CVF international conference on computer vision},
  pages={8250--8259},
  year={2021}
}

@article{bell2022effect,
  title={The effect of task ordering in continual learning},
  author={Bell, Samuel J and Lawrence, Neil D},
  journal={arXiv preprint arXiv:2205.13323},
  year={2022}
}

@article{chaudhry2018efficient,
  title={Efficient lifelong learning with a-gem},
  author={Chaudhry, Arslan and Ranzato, Marc'Aurelio and Rohrbach, Marcus and Elhoseiny, Mohamed},
  journal={arXiv preprint arXiv:1812.00420},
  year={2018}
}

@inproceedings{rebuffi2017icarl,
  title={icarl: Incremental classifier and representation learning},
  author={Rebuffi, Sylvestre-Alvise and Kolesnikov, Alexander and Sperl, Georg and Lampert, Christoph H},
  booktitle={Proceedings of the IEEE conference on Computer Vision and Pattern Recognition},
  pages={2001--2010},
  year={2017}
}

@article{lopez2017gradient,
  title={Gradient episodic memory for continual learning},
  author={Lopez-Paz, David and Ranzato, Marc'Aurelio},
  journal={Advances in neural information processing systems},
  volume={30},
  year={2017}
}

@article{kirkpatrick2017overcoming,
  title={Overcoming catastrophic forgetting in neural networks},
  author={Kirkpatrick, James and Pascanu, Razvan and Rabinowitz, Neil and Veness, Joel and Desjardins, Guillaume and Rusu, Andrei A and Milan, Kieran and Quan, John and Ramalho, Tiago and Grabska-Barwinska, Agnieszka and others},
  journal={Proceedings of the national academy of sciences},
  volume={114},
  number={13},
  pages={3521--3526},
  year={2017},
  publisher={National Acad Sciences}
}

@article{yoon2019scalable,
  title={Scalable and order-robust continual learning with additive parameter decomposition},
  author={Yoon, Jaehong and Kim, Saehoon and Yang, Eunho and Hwang, Sung Ju},
  journal={arXiv preprint arXiv:1902.09432},
  year={2019}
}

@article{li2017learning,
  title={Learning without forgetting},
  author={Li, Zhizhong and Hoiem, Derek},
  journal={IEEE transactions on pattern analysis and machine intelligence},
  volume={40},
  number={12},
  pages={2935--2947},
  year={2017},
  publisher={IEEE}
}

@article{riemer2018learning,
  title={Learning to learn without forgetting by maximizing transfer and minimizing interference},
  author={Riemer, Matthew and Cases, Ignacio and Ajemian, Robert and Liu, Miao and Rish, Irina and Tu, Yuhai and Tesauro, Gerald},
  journal={arXiv preprint arXiv:1810.11910},
  year={2018}
}

@article{raghavan2021formalizing,
  title={Formalizing the generalization-forgetting trade-off in continual learning},
  author={Raghavan, Krishnan and Balaprakash, Prasanna},
  journal={Advances in Neural Information Processing Systems},
  volume={34},
  pages={17284--17297},
  year={2021}
}

@article{parisi2019continual,
  title={Continual lifelong learning with neural networks: A review},
  author={Parisi, German I and Kemker, Ronald and Part, Jose L and Kanan, Christopher and Wermter, Stefan},
  journal={Neural networks},
  volume={113},
  pages={54--71},
  year={2019},
  publisher={Elsevier}
}

@article{dohare2024loss,
  title={Loss of plasticity in deep continual learning},
  author={Dohare, Shibhansh and Hernandez-Garcia, J Fernando and Lan, Qingfeng and Rahman, Parash and Mahmood, A Rupam and Sutton, Richard S},
  journal={Nature},
  volume={632},
  number={8026},
  pages={768--774},
  year={2024},
  publisher={Nature Publishing Group UK London}
}

@article{pham2021dualnet,
  title={Dualnet: Continual learning, fast and slow},
  author={Pham, Quang and Liu, Chenghao and Hoi, Steven},
  journal={Advances in Neural Information Processing Systems},
  volume={34},
  pages={16131--16144},
  year={2021}
}

@article{zhuo2023continual,
  title={Continual learning with strong experience replay},
  author={Zhuo, Tao and Cheng, Zhiyong and Gao, Zan and Fan, Hehe and Kankanhalli, Mohan},
  journal={arXiv preprint arXiv:2305.13622},
  year={2023}
}

@inproceedings{guo2024continual,
  title={Continual learning in an open and dynamic world},
  author={Guo, Yunhui},
  booktitle={Proceedings of the AAAI Conference on Artificial Intelligence},
  volume={38},
  pages={22666--22666},
  year={2024}
}

@article{shin2017continual,
  title={Continual learning with deep generative replay},
  author={Shin, Hanul and Lee, Jung Kwon and Kim, Jaehong and Kim, Jiwon},
  journal={Advances in neural information processing systems},
  volume={30},
  year={2017}
}

@inproceedings{zenke2017continual,
  title={Continual learning through synaptic intelligence},
  author={Zenke, Friedemann and Poole, Ben and Ganguli, Surya},
  booktitle={Proceedings of the 34th International Conference on Machine Learning},
  pages={3987--3995},
  year={2017}
}

@article{masse2018alleviating,
  title={Alleviating catastrophic forgetting using context-dependent gating and synaptic stabilization},
  author={Masse, Nicolas Y and Grant, Gregory D and Freedman, David J},
  journal={Proceedings of the National Academy of Sciences},
  volume={115},
  number={44},
  pages={E10467--E10475},
  year={2018}
}

@inproceedings{fernando2017pathnet,
  title={PathNet: Evolution channels gradient descent in super neural networks},
  author={Fernando, Chrisantha and Banarse, Dylan and Blundell, Charles and Zwols, Yori and Ha, David and Rusu, Andrei A and Pritzel, Alexander and Wierstra, Daan},
  booktitle={Proceedings of the Genetic and Evolutionary Computation Conference},
  pages={169--178},
  year={2017}
}

@article{van2018generative,
  title={Generative replay with feedback connections as a general strategy for continual learning},
  author={van de Ven, Gido M and Tolias, Andreas S},
  journal={arXiv preprint arXiv:1809.10635},
  year={2018}
}
